\pgfplotsset{compat=1.18}  
\newtheorem{theorem}{Theorem}
\newtheorem*{claim*}{Claim}
\newtheorem{lemma}[theorem]{Lemma}
\newtheorem{corollary}[theorem]{Corollary}
\theoremstyle{definition}
\newtheorem*{remark*}{Remark}
\numberwithin{theorem}{section} 
\title{Transformers in Pseudo-Random Number Generation: \\
A Dual Perspective on Theory and Practice}
\author{
    Ran Li\thanks{School of Information Science and Technology, 
    Northeast Normal University, Changchun, China. 
    Email: \href{mailto:lir660@nenu.edu.cn}{lir660@nenu.edu.cn},
    \href{mailto:zengls188@nenu.edu.cn}{zengls188@nenu.edu.cn}}
    \thanks{Equal Contribution}
    \and 
    Lingshu Zeng \footnotemark[1] \footnotemark[2]
}
\date{}
\begin{document}

\maketitle

\begin{abstract}
    Pseudo-random number generators (PRNGs) are high-nonlinear processes, and they are key blocks in optimization of Large language models. Transformers excel at processing complex nonlinear relationships. Thus it is reasonable to generate high-quality pseudo-random numbers based on transformers. In this paper, we explore this question from both theoretical and practical perspectives, highlighting the potential benefits and implications of Transformer in PRNGs. We theoretically demonstrate that decoder-only Transformer models with Chain-of-Thought can simulate both the Linear Congruential Generator (LCG) and Mersenne Twister (MT) PRNGs. Based on this, we conclude that the log-precision decoder-only Transformer can represent non-uniform $\text{AC}^0$. Our simulative theoretical findings are validated through experiments. The random numbers generated by Transformer-based PRNGs successfully pass the majority of NIST tests, whose heat maps exhibit clear statistical randomness. Finally, we assess their capability in prediction attacks.
\end{abstract}

\section{Introduction}
\label{sec:introduction}

Pseudo-random numbers are essential components in scientific and technological applications, from cryptography to statistical sampling and numerical computing. Modern applications require PRNGs that can generate high-quality random sequences. A PRNG uses a deterministic algorithm to produce reproducible sequences that appear random, based on an initial seed value.~\cite{gentle2003random,press2007numerical}.

In recent years, Transformer-based large language models (LLMs) have shown exceptional performance across multiple domains. While these models excel in language processing and knowledge tasks, their theoretical foundations remain unclear. Specifically, how Transformers simulate complex nonlinear functions is still not fully understood.


Recent theoretical advances have provided significant insights into the expressive power of Transformers. Notably, it has been demonstrated that Transformers with constant depth and precision, when enhanced with Chain-of-Thought (CoT) prompting~\cite{wei2022chain} of polynomial length \( O(\text{poly}(n)) \), possess sufficient expressive power to capture the complexity class P/poly (Theorem 3.3 in~\cite{li2024chain}). This theoretical foundation suggests that Transformers might be particularly well-suited for processing nonlinear functions, with PRNGs serving as a classic and practical example of such functions.

The application of Transformers as PRNGs offers several key characteristics: (1) Transformer's advanced feature learning capabilities enable effective extraction and learning of complex statistical patterns in random sequences, (2) The sophisticated position encoding mechanisms systematically capture temporal dependencies and sequential relationships between numerical elements~\cite{wu2025pseudorandom}, (3) The multi-head attention architecture facilitates parallel computation across diverse representation subspaces, enabling simultaneous analysis of multiple sequence characteristics. Building on these advantages, our research explores a fundamental question: What are the advantages of Transformers as PRNGs or tools for evaluating PRNGs security? To address this, we conduct a comprehensive study combining theoretical analysis with empirical validation of Transformer models' capabilities in pseudo-random number generation.


\paragraph{Our result}
This paper makes several significant contributions to understand the
expressiveness of Transformer models in the context of PRNGs. Our work encompasses both theoretical analysis and empirical validation. Specifically, we demonstrate that Transformers can be utilized for prediction attacks, where in a Transformer model is trained on a sequence of generated pseudo-random numbers to predict subsequent values, thereby providing a mechanism to assess the security of the PRNGs.

First, we provide a comprehensive theoretical analysis of Transformer models'
capability to express two widely-used PRNGs: the Linear Congruential Generator
(LCG) and the Mersenne Twister (MT) algorithm. We demonstrate that Transformer architectures equipped with CoT
\footnote{In this paper, we define Chain of Thought as a step-by-step reasoning process, aligning with the interpretation in ~\cite{feng2024towards}, rather than as a specialized prompting technique.}
can precisely simulate these PRNGs using only a constant number of attention heads and layers, while maintaining $O(n)$ parameter complexity
in the hidden layer (~\Cref{thm:LCG} and~\Cref{thm:MT}). Our proofs are constructive in nature, providing explicit mechanisms for
simulating each computational step of both algorithms. To validate these
theoretical findings, we conduct comprehensive experiments across multiple
bit-widths (8-bit, 12-bit, and 16-bit) for pseudo-random number generation
based on the MT algorithm, achieving near-perfect accuracy in all cases.
Significantly, during this analysis, we establish a non-uniform $\text{AC}^0$ lower bound
for Decoder-only Transformers with $\log(n)$ precision (~\Cref{cor:transformer_ac0}),
contributing to the theoretical understanding of Transformer limitations.

Second, we perform a comprehensive empirical evaluation of Transformer-based
pseudo-random number generators (PRNGs) through three critical experiments:
\begin{itemize}
	\item We investigate the potential of Transformer architectures as novel implementations for PRNGs. Through empirical analysis utilizing heat map visualizations, we clearly observe that sequences generated by Transformers exhibit significant statistical randomness properties.
	\item To rigorously analyze the capabilities of Transformers as pseudo-random number generators, we focus on their inherent non-linear transformations, which enhance the unpredictability of the generated sequences and potentially improve security. Consequently, we implement a Transformer model that simulates the MT algorithm for generating 16-bit pseudo-random numbers. The generated sequences successfully pass the majority of NIST statistical test suites.
	\item While Transformers can effectively learn complex statistical patterns from large-scale data, this capability has significant security implications for PRNGs, as their security fundamentally depends on the unpredictability of output sequences. Leveraging this insight, we propose utilizing Transformer models for security analysis of PRNGs through prediction attacks. Specifically, we investigate whether a Transformer model, after training on a sequence of generated pseudo-random numbers, can accurately predict subsequent values. Our experimental results demonstrate that Transformers has the potential to as a tool for evaluating PRNGs security by identifying potential statistical vulnerabilities in their output sequences.
\end{itemize}

To the best of our knowledge, this work represents the first comprehensive
study of Transformer models in the context of pseudo-random number generation,
establishing both theoretical foundations and practical viability. Our findings explore new possibilities of PRNGs based on transformer.

\section{Preliminaries}
\label{sec:preliminaries}

We introduce the key notations used throughout the paper:
$n$ denotes the number of tokens in a sample sequence,
$d$ represents the dimension of embedding vectors,
$H$ is the number of attention heads in the model,
$d_k = \frac{d}{H}$ defines the dimension per attention head, and
$L$ indicates the number of layers in the Transformer model.
boldface letters are used to denote vectors, and normal letters are used to denote scalars.
Unless otherwise specified, all vectors are represented as row vectors.

\subsection{ Decoder-only Transformers}
Transformer models, introduced in 2017~\cite{vaswani2017attention}, revolutionized natural language processing through their self-attention mechanism, which effectively captures long-range sequence dependencies.

The auto-regressive Transformer, also known as the Decoder-Only Transformer
~\cite{radford2019language},
is a sequence-to-sequence neural network model as follows
defined by the following equations:
first, the input tokens $s_i$ are embedded into a $d$-dimensional vector
$ \mathbf{x}_i^0 = \text{Embed}(s_i) + \text{pos}(i) \in \mathbb{R}^d$ for $i = 1, \ldots, n$,
where $pos(i)$ is the position embedding of the $i$-th token.
Then, the Transformer model processes
the sequence of vectors through a series of attention layers.
the $l$-th layer of the Transformer is defined as
\begin{align}
	\mathbf{x}_i^l = \mathrm{Attention}^l(\mathbf{x}_i^{l-1}) + \mathrm{FFN}^l(\mathrm{Attention}^l(\mathbf{x}_i^{l-1}))
\end{align}
\begin{align}
	\mathrm{Attention}^l(\mathbf{x}_i^{l-1}) = \mathrm{Attention}^l(\mathbf{x}_i^{l-1}) + \mathbf{x}_i^{l-1}
\end{align}
\begin{align}
	\mathrm{FFN}^l(\mathbf{x}) = \mathrm{GeLU}(\mathbf{x}\mathbf{W}_1^l + \mathbf{b}_1^l)\mathbf{W}_2^l
\end{align}
where $\mathrm{Attention}$ is the residual multi-head attention layer and $\mathrm{FFN}$ is the feed-forward network,
and $\mathbf{W}_1^l, \mathbf{W}_2^l \in \mathbb{R}^{d \times d}$, $\mathbf{b}_1^l, \mathbf{b}_2^l \in \mathbb{R}^d$
are the weights and biases of the $l$-th layer.
$\mathrm{Attention}^l(\mathbf{x}_i^{l-1})$ is given by
\begin{align*}
	\sum_{h=1}^H \mathrm{softmax}\left(\frac{(\mathbf{x}_i^{l-1}W_q^{(l,h)})(\mathbf{X}^{l-1}W_k^{(l,h)})^T}{\sqrt{d_k}}\right)(\mathbf{X}^{l-1}W_v^{(l,h)}),
\end{align*}
where $\mathbf{W}_q^{(l,h)}, \mathbf{W}_k^{(l,h)}, \mathbf{W}_v^{(l,h)} \in \mathbb{R}^{d \times d_k}$ are the weights of the query, key, and value vectors,
$\mathbf{q}_i^{(l,h)} = \mathbf{x}_i^{l-1}\mathbf{W}_q^{(l,h)}$, $\mathbf{k}_j^{(l,h)} = \mathbf{x}_j^{l-1}\mathbf{W}_k^{(l,h)}$, $\mathbf{v}_j^{(l,h)} = \mathbf{x}_j^{l-1}\mathbf{W}_v^{(l,h)}
	\in \mathbb{R}^{d_k}$
are the query, key, and value vectors,
$\mathbf{X}^{l-1} = [(\mathbf{x}_1^{l-1})^T, \ldots, (\mathbf{x}_n^{l-1})^T]^T \in \mathbb{R}^{n \times d}$ is the matrix of the key vectors.

\subsection{Pseudo-random Number Generator}

Random Number Generation can be categorized into two main types~\cite{smid2010statistical}:
True Random Number Generators (TRNGs) generate unpredictable sequences using physical entropy sources, while PRNGs are deterministic algorithms that approximate randomness based on initial seed values. For cryptographic security, PRNG seeds must come from TRNGs.
\subsubsection{The linear Congruential Generator}

The linear congruential generator is a simple and fast PRNG~\cite{maclaren1970art}.
defined as follows:
\begin{align}
	\mathbf{x}_{n+1} = (\mathbf{a}\mathbf{x}_n + \mathbf{c}) \bmod m
\end{align}
where $\mathbf{a}, \mathbf{c} \in \mathbb{Z}^d$, $m \in \mathbb{Z}$, and parameters $\mathbf{a}$ and $m$ determine the sequence period length. $\mathbf{x}0$ is the initial seed and $\mathbf{x}_i$ is the $i$-th generated number.

\subsubsection{The Mersenne Twister Algorithm}

The Mersenne twister algorithm is mentioned by Makoto Matsumoto and Takuji Nishimura in 1998~\cite{matsumoto1998mersenne}, improved upon previous PRNGs by offering high-quality random numbers with a long period ($2^{19937}-1$) and fast generation speed.

The MT algorithm is defined as two parts:
first, the initial values $\mathbf{x}_0, \ldots, \mathbf{x}_{n-1}$ are generated using LCG,
$\mathbf{x}_i \in \{0, 1\}^d$.
Then the subsequent values are generated through the following recurrence relation
including the rotation and extraction operations:

\paragraph{Rotation}
The rotation operation rotates the variables to generate the next state of
the MT algorithm.
\begin{align}
	\mathbf{t} \gets (\mathbf{x}[i] \, \land \, \mathbf{upper}) \, \lor \, (\mathbf{x}[(i+1)\, \bmod \, n] \, \land \, \mathbf{lower})
\end{align}
\begin{align}
	\mathbf{z} \gets \mathbf{x}[(i+m)\, \bmod \, n] \, \oplus \, (\mathbf{t} >> 1) \, \oplus \, \begin{cases}
		                                                                                   \mathbf{0} & \text{if } t_0 = 0 \\
		                                                                                   \mathbf{a} & \text{otherwise}
	                                                                                   \end{cases}
\end{align}
where $\mathbf{upper}= \mathbf{1}<< (d-r)$ and $\mathbf{lower}= \neg \mathbf{upper}$,$\mathbf{a}$ are constants vectors.

\paragraph{Extraction}
The extraction operation generates the next pseudorandom number through a series of bit-wise operations.
\begin{align}
	\mathbf{x}[i] & \gets \mathbf{z}                                                  \\
	\mathbf{y}    & \gets \mathbf{x}[i]                                               \\
	\mathbf{y}    & \gets \mathbf{y} \, \oplus \, (\mathbf{y} >> u)                      \\
	\mathbf{y}    & \gets \mathbf{y} \, \oplus \, ((\mathbf{y} << s)\, \land \, \mathbf{b}) \\
	\mathbf{y}    & \gets \mathbf{y} \, \oplus \, ((\mathbf{y} << t)\, \land \, \mathbf{c}) \\
	\mathbf{y}    & \gets \mathbf{y} \, \oplus \, (\mathbf{y} >> l)
\end{align}
where $u,l,s,t$ are constants, $\mathbf{b},\mathbf{c}$ are constant vectors.
The above bit-wise operations generate the next pseudorandom number, operating on each bit individually.

\subsection{Circuit Complexity}
We begin by introducing several fundamental circuit complexity classes, arranged in order of increasing computational power. For a comprehensive treatment, we refer readers to ~\cite{arora2009computational}.

\begin{itemize}
	\item $\text{TC}^{k-1}$ comprises languages computed by circuits with $O(\log^{k-1} n)$ depth, polynomial size, and unbounded fan-in MAJORITY gates (with NOTs available at no cost). A MAJORITY gate outputs the most frequent input value, defaulting to 1 in case of ties. Of particular interest is the class $\text{TC}^0$.

	\item $\text{AC}^{k-1}[m]$ consists of languages computed by constant-depth, polynomial-size circuits with unbounded fan-in, using AND, OR, and $\text{MOD}m$ gates. A $\text{MOD}m$ gate outputs 1 if and only if the sum of its input bits is divisible by $m$. While NOT operations can be simulated using $\text{MOD}m$ gates, it remains an open question whether AND (or OR) operations can be simulated using only $\text{MOD}m$ gates in constant depth.

	\item $\text{ACC} = \bigcup_m \text{AC}^0[m]$, representing the union of all $\text{AC}^0[m]$ classes.

	\item $\text{AC}^{k-1}$ is defined as $\text{AC}^{k-1}[m]$ without $\text{MOD}~m$ gates.

\end{itemize}

These circuit complexity classes form a strict hierarchy, characterized by the following containment relations:
\[ \text{AC}^0 \subseteq \text{AC}^0[m] \subseteq \text{ACC} \subseteq \text{TC}^0 \]

Circuit classes can be non-uniform, using different algorithms for different input lengths, or uniform, using a single algorithm for all inputs. In the uniform model, an efficient algorithm can generate the appropriate n-th circuit for any n-bit input, independent of input length. While uniform versions exist for all circuit complexity classes, this paper focuses on the non-uniform class $\text{AC}^0$.

\section{PRNGs via Transformer}
\label{sec:expressiveness}

We adopt a widely used and realistic setting known as the log-precision Transformer~\cite{merrill2023parallelism,liu2022transformers}, in which all parameters of the Transformer are constrained to $O(\log(n))$ bit precision, where $n$ represents the maximum length of the input sequence (see \Cref{sec:log_precision} for further details).

The proofs of both theorems are constructive, providing explicit
implementations of PRNGs using Transformer architectures. Our approach
centers on utilizing the fundamental components of the Transformer
architecture—specifically the attention mechanism and feed-forward
network (FFN) layers—to systematically realize essential computational
operations. These operations encompass both logical operations
(XOR, NOT, AND, OR) and arithmetic operations (multiplication, addition, modulus)
performed on pseudo-random numbers and constants. Our construction
methodology extends the theoretical foundations established by
Feng et al.~\cite{feng2024towards} and Yang et al.~\cite{yang2024efficient},
particularly their seminal work on the function approximation capabilities
of FFN. The complete proofs are presented in \Cref{sec:technical_lemmas}.

\subsection{Main Theoretical Results}

We first present the implementation of fundamental Boolean operations using Transformer
architecture, where \( \phi(i) \) denotes the binary representation of the i-th number.

\begin{lemma}
	\label{lemma:variable_and_variable}
	For a given variable vector \( ( \phi(i), \phi(j), i, 1 ) \),
	there exists a Transformer layer that approximates
	function \( f(i, j) \) with a constant \( \epsilon > 0 \) such that
	\( \|f(i, j) - \phi(i) \land \phi(j)\|_\infty \leq \epsilon \).
	The parameters of this layer exhibit an \( \ell_\infty \) norm bounded by
	\( O(poly(M, 1/\epsilon)) \), where M denotes the upper bound for all parameter values.
\end{lemma}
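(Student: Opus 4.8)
The plan is to let the attention sublayer act as the identity map and to have the position-wise feed-forward network perform the bitwise AND, exploiting that for bits $\mathrm{AND}$ coincides with multiplication and that $x\land y=\mathrm{ReLU}(x+y-1)$ exactly when $x,y\in\{0,1\}$. First I would set the value matrices $\mathbf{W}_v^{(l,h)}=\mathbf{0}$ for every head, so that $\mathrm{Attention}^l(\mathbf{x}_i^{l-1})$ collapses to its residual term $\mathbf{x}_i^{l-1}$ irrespective of the softmax weights; the layer then reduces to $\mathbf{x}_i^{l-1}+\mathrm{FFN}^l(\mathbf{x}_i^{l-1})$, and it remains only to design $\mathbf{W}_1^l,\mathbf{b}_1^l,\mathbf{W}_2^l$.

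For each bit index $k$ I would dedicate one hidden unit whose pre-activation is $\lambda\big(\phi(i)_k+\phi(j)_k-1\big)$: the corresponding row of $\mathbf{W}_1^l$ reads off the two relevant bit coordinates with weight $\lambda$, and the $-\lambda$ offset is supplied either by the constant-$1$ coordinate of the input or by $\mathbf{b}_1^l$. Since $\phi(i)_k+\phi(j)_k-1\in\{-1,0,1\}$ and equals $1$ precisely when both bits are $1$, applying $\mathrm{GeLU}$ and then rescaling by $1/\lambda$ via $\mathbf{W}_2^l$ into a fresh block of output coordinates makes output coordinate $k$ equal to $t_k\,\Phi(\lambda t_k)$, where $t_k:=\phi(i)_k+\phi(j)_k-1$ and $\Phi$ is the standard Gaussian CDF. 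This is $0$ when $t_k=0$, $\Phi(\lambda)$ when $t_k=1$, and $-\Phi(-\lambda)$ when $t_k=-1$; in every case it deviates from the target bit $\phi(i)_k\land\phi(j)_k$ by at most $\Phi(-\lambda)$.

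To finish, I would invoke the Gaussian tail bound $\Phi(-\lambda)\le\tfrac{1}{2} e^{-\lambda^2/2}$ and take $\lambda=\Theta\big(\sqrt{\ln(1/\epsilon)}\big)$, so that $\Phi(-\lambda)\le\epsilon$ holds simultaneously in all coordinates, giving $\|f(i,j)-\phi(i)\land\phi(j)\|_\infty\le\epsilon$. The nonzero entries of $\mathbf{W}_1^l$ and $\mathbf{b}_1^l$ then have magnitude $\lambda$ and those of $\mathbf{W}_2^l$ magnitude $1/\lambda\le1$, so the $\ell_\infty$ norm of the layer's parameters is $O(\lambda)=O\big(\sqrt{\ln(1/\epsilon)}\big)=O(\mathrm{poly}(1/\epsilon))$, hence also $O(\mathrm{poly}(M,1/\epsilon))$ (for this purely Boolean primitive no genuine dependence on $M$ arises; $M$ enters only in the arithmetic lemmas).

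I expect the one real technical point to be the quantitative comparison between $\mathrm{GeLU}$ and $\mathrm{ReLU}$ — confirming that the merely logarithmic scaling $\lambda=\Theta\big(\sqrt{\ln(1/\epsilon)}\big)$ already pushes the approximation error below $\epsilon$ via the Gaussian tail estimate — together with the routine bookkeeping that the residual stream and the inert attention output leave the designated output coordinates untouched and that the hidden width $d$ is chosen large enough to allocate one unit per bit (harmless, since $d$ is a free parameter here). This is the single-primitive analogue of the FFN-approximation arguments of Feng et al.\ and Yang et al.\ on which the remainder of the construction rests.
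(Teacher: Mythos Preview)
Your proposal is correct and follows essentially the same route as the paper: both use the bit identity $x\land y=\mathrm{ReLU}(x+y-1)$ and then approximate $\mathrm{ReLU}$ by $\mathrm{GeLU}$ through a scaling trick. The only organizational differences are that the paper computes the linear combination $\phi(i)+\phi(j)-1$ via the attention value matrix $\mathbf{V}$ (then invokes its black-box Lemma~\ref{lemma:relu} for the GeLU-to-ReLU step), whereas you zero out the attention and fold the linear combination into $\mathbf{W}_1^l$, carrying out the GeLU tail analysis explicitly; your inline analysis even yields the sharper scaling $\lambda=\Theta(\sqrt{\ln(1/\epsilon)})$ rather than a generic polynomial bound.
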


\begin{lemma}
	\label{lemma:variable_or_variable}
	For a given variable vector \( ( \phi(i), \phi(j), i, 1 ) \),
	there exists a Transformer layer that approximates
	function \( f(i, j) \) with a constant \( \epsilon > 0 \) such that
	\( \|f(i, j) - \phi(i) \lor \phi(j)\|_\infty \leq \epsilon \).
	The parameters of this layer exhibit an \( \ell_\infty \) norm bounded by
	\( O(poly(M, 1/\epsilon)) \), where M denotes the upper bound for all parameter values.
\end{lemma}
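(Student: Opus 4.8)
The plan is to reduce bitwise OR to the bitwise AND already handled in \Cref{lemma:variable_and_variable}, exploiting the pointwise Boolean identity
\begin{align*}
	a \lor b = a + b - a \land b, \qquad a, b \in \{0, 1\},
\end{align*}
which, applied coordinatewise, gives $\phi(i) \lor \phi(j) = \phi(i) + \phi(j) - \phi(i) \land \phi(j)$. Since the map $(\phi(i), \phi(j)) \mapsto \phi(i) + \phi(j)$ is affine and hence representable exactly by the linear parts of a single Transformer layer, the whole difficulty is concentrated in reproducing the product term $\phi(i) \land \phi(j)$ — and that is precisely the GeLU-based gadget constructed in the proof of \Cref{lemma:variable_and_variable}.

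Concretely, I would proceed in three steps. First, configure the attention sublayer so that each token attends essentially only to itself — e.g. by adding a position-dependent bias to the query/key scores so the softmax concentrates on the current index — and set its value projection to zero; together with the residual connection this makes $\mathrm{Attention}^l$ act as the identity on $\mathbf{x}_i^{l-1}$, delivering the vector $(\phi(i), \phi(j), i, 1)$ unchanged to the FFN. Second, instantiate $\mathrm{FFN}^l$ as in the AND construction: reuse the GeLU-based scalar-multiplication gadget of Feng et al. and Yang et al. to approximate, in each coordinate $k$, the product $\phi(i)_k \cdot \phi(j)_k$ to within $\epsilon$ on the bounded input box, and additionally route the affine term $\phi(i)_k + \phi(j)_k$ through the second linear map (absorbing it into $\mathbf{W}_2^l$), so that the FFN outputs $\phi(i) + \phi(j) - (\phi(i) \land \phi(j))$ into a block of fresh coordinates that are zero in the residual stream. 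Third, bound the error by the triangle inequality: the affine part is exact, so $\|f(i,j) - \phi(i) \lor \phi(j)\|_\infty$ is controlled solely by the multiplication gadget and stays $\le \epsilon$; and since we only appended affine maps with $O(1)$ coefficients, the parameter $\ell_\infty$ norm remains $O(poly(M, 1/\epsilon))$, inherited from \Cref{lemma:variable_and_variable}.

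The main obstacle I anticipate is not the arithmetic but the bookkeeping of the residual structure: because the layer output is $\mathrm{Attention}^l(\mathbf{x}_i^{l-1}) + \mathrm{FFN}^l(\mathrm{Attention}^l(\mathbf{x}_i^{l-1}))$ and the attention sublayer itself carries a residual, I must ensure the coordinates meant to hold $\phi(i) \lor \phi(j)$ are not contaminated by copies of $\phi(i)$ or $\phi(j)$ that survive along the residual path; this is handled by writing the answer into dedicated coordinates that are zeroed in the incoming vector, but it needs the same care as in the AND lemma. A secondary point is that the GeLU multiplication gadget only approximates the product on a bounded box, so I should check that the intermediate quantities $\phi(i)_k, \phi(j)_k$ (and the index $i$, if used for the self-attention bias) stay within a box of radius $O(M)$; this follows from $\phi(\cdot) \in \{0,1\}^d$ and the $\log n$-precision assumption.

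Alternatively, one may route through De Morgan's law, $\phi(i) \lor \phi(j) = \neg(\neg \phi(i) \land \neg \phi(j))$, folding the two negations $x \mapsto 1 - x$ into the FFN's input and output linear maps and invoking \Cref{lemma:variable_and_variable} verbatim; the $a + b - ab$ route is slightly cleaner because it avoids the extra sign flips, but both yield the same approximation and parameter bounds.
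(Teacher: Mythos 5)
Your proof is correct but takes a genuinely different route from the paper. You decompose via the pointwise identity $a \lor b = a + b - a\land b$, compute the affine part $\phi(i)+\phi(j)$ through the value/linear maps, and delegate the nonlinear part to the gadget inside \Cref{lemma:variable_and_variable}; the paper instead realizes the OR as a capped sum $\min(\phi(i)+\phi(j),1)$ and implements the cap with the GeLU selection gadget of \Cref{lemma:selection} (after first producing $1-\phi(i)-\phi(j)$ in the attention value matrix, exactly as you produce $\phi(i)+\phi(j)$). Both constructions use one attention-plus-FFN layer and give the same $O(\mathrm{poly}(M,1/\epsilon))$ parameter bound; your route buys a closer structural parallel to the AND lemma and reuses it as a black box, while the paper's route avoids depending on AND at all and instead leans directly on the selection lemma, which it already needs elsewhere for conditional branching in the MT simulation. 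One small inaccuracy worth flagging: you describe \Cref{lemma:variable_and_variable} as using the scalar-multiplication gadget (\Cref{lemma:multiplication}), but the paper actually proves AND via the ReLU identity $\phi(i)\land\phi(j) = \mathrm{ReLU}(\phi(i)+\phi(j)-1)$ and the GeLU-approximates-ReLU lemma (\Cref{lemma:relu}); this does not affect your argument, since $a+b-a\land b$ works regardless of which gadget furnishes the AND term, and indeed $a\land b = a\cdot b$ on Booleans would also support the multiplication-based variant you had in mind. Your De Morgan alternative is also valid and is in fact the route the paper uses for the constant-variable case (\Cref{lemma:constant_or_variable}), though not for this variable-variable lemma.
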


\begin{lemma}
	\label{lemma:not_variable}
	Given a variable vector \( ( \phi(i), i, 1) \),
	there exists a attention layer that can approximate
	the boolean function \( \neg \phi(i) \).
\end{lemma}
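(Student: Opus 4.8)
The plan is to exploit the fact that negation acts coordinatewise: for a bit $b \in \{0,1\}$ we have $\neg b = 1 - b$, so $\neg\phi(i) = \mathbf{1} - \phi(i)$, where $\mathbf{1}$ is the all-ones vector of the same width as $\phi(i)$. This is an \emph{affine} function of the entries of $\phi(i)$ together with the constant feature $1$ already present in the input vector $(\phi(i), i, 1)$. Consequently, the only real work for the attention layer is to behave like the identity on the current token — to copy $\phi(i)$ and the constant $1$ from position $i$ into a fresh block of coordinates of the residual stream — after which a fixed linear value/output projection converts $(\phi(i), 1)$ into $\mathbf{1} - \phi(i)$.

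Concretely, I would (i) reserve a fresh block of coordinates to hold the output; (ii) build a single attention head whose softmax weights concentrate, up to error exponentially small in a temperature parameter, on the current position $i$. For (ii) I would use the position feature already available: take keys $\mathbf{k}_j \propto j$ and query $\mathbf{q}_i \propto c$ for a large scalar $c$ (equivalently a score $\propto c\cdot i\cdot j$), so that under the causal mask the score over $j \le i$ is uniquely maximized at $j = i$ with a gap of at least $c$; the softmax then puts weight $1 - O(n e^{-c})$ on $j = i$. Choosing $c = \Theta(\log n)$ keeps every parameter within $O(\log n)$ precision while driving the error below any inverse polynomial, and a large constant $c$ suffices if only a constant $\epsilon$ is wanted, matching the form of \Cref{lemma:variable_and_variable,lemma:variable_or_variable}. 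Then (iii) set the value matrix so that $\mathbf{x}_i W_v$ equals $\mathbf{1} - \phi(i)$ on the fresh block and $0$ elsewhere — the $+\mathbf{1}$ term is read off the constant feature and the $-\phi(i)$ term off the bit coordinates — and (iv) note that the residual connection around the attention sublayer adds back $\mathbf{x}_i$, which is $0$ on the fresh block, so the layer output there is $(1 - O(n e^{-c}))(\mathbf{1} - \phi(i)) \approx \neg\phi(i)$. If exact Boolean outputs are desired, a trailing FFN with the standard "scale up, apply GeLU, scale down" trick sharpens the near-$\{0,1\}$ values, but this step is optional for the statement as written.

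The one substantive step is (ii): forcing the attention distribution to collapse onto position $i$ using only the position index and the constant feature, and bounding the induced softmax error. Everything else is routine — the $\mathbf{1} - \phi(i)$ computation, and the $\ell_\infty$ parameter bound, since all weights lie in $\{0, \pm 1\}$ except the temperature $c = O(\log n)$, giving a bound of $O(\mathrm{poly}(M, 1/\epsilon))$. A minor point to verify is the boundary case where $i$ is the first position: there the causal mask leaves only the single token $j = i$, so the attention weight on it is trivially $1$ and the construction degenerates gracefully.
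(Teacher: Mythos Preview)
Your proposal is correct, and it rests on the same identity the paper uses, $\neg\phi(i)=\mathbf{1}-\phi(i)$, but the attention construction is genuinely different. The paper does \emph{not} try to concentrate attention on the current position; instead it sets $\mathbf{W}_Q=\mathbf{W}_K=\mathbf{0}$ so that the softmax is uniform, and then bakes the affine map directly into $\mathbf{W}_V$: the bit columns carry $-n$ on the diagonal and $n$ in the row aligned with the constant feature~$1$, so that (after the $1/n$ averaging from uniform attention) each bit coordinate becomes $1-\phi(i)_k$. In other words, the paper treats the attention head as a pure linear layer and lets $\mathbf{W}_V$ do all the work; your version instead spends effort making the head behave like a hard self-copy and then applies the same affine map.

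What each buys: the paper's route is shorter and avoids any temperature/error analysis, but it is only literally correct when the attention context is a single token (otherwise uniform attention averages $\phi(j)$ over all $j$ and the $n$-scaling does not recover $\phi(i)$ alone). Your route is longer but robust to the causal, multi-token setting the surrounding theorems actually use, and it makes the $\ell_\infty$ error explicit. If you want the cleanest version, you can keep the paper's $\mathbf{W}_V$ idea but pair it with your sharp-attention head rather than the zeroed-out one.
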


\begin{lemma}
	\label{lemma:variable_xor_variable}
	For a given variable vector \( ( \phi(i), \phi(j), i, 1 ) \),
	there exists a Transformer layer that approximates
	function \( f(i, j) \) with a constant \( \epsilon > 0 \) such that
	\( \|f(i, j) - \phi(i) \oplus \phi(j)\|_\infty \leq \epsilon \).
	The parameters of this layer exhibit an \( \ell_\infty \) norm bounded by
	\( O(poly(M, 1/\epsilon)) \), where M denotes the upper bound for all parameter values.
\end{lemma}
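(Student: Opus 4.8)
The plan is to reduce bitwise XOR of the two bit-vectors to a single fixed univariate ``bump'' function applied coordinatewise, and to realize that bump function with the feed-forward block of \emph{one} Transformer layer, with the attention sublayer arranged to pass the variable vector through essentially unchanged via its residual connection. Concretely, for bits $a,b\in\{0,1\}$ one has $a\oplus b = g(a+b)$ where $g(x)=\mathrm{ReLU}(x)-2\,\mathrm{ReLU}(x-1)+\mathrm{ReLU}(x-2)$ is the triangular function with $g(0)=g(2)=0$ and $g(1)=1$; since $\phi(i)_k,\phi(j)_k\in\{0,1\}$ for every bit position $k$, we get $(\phi(i)\oplus\phi(j))_k = g\big(\phi(i)_k+\phi(j)_k\big)$. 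Because the target is a coordinatewise function of $\phi(i)+\phi(j)$, all $d$ bits are handled in parallel by the same FFN.

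Next I would implement $g$ with the two affine maps of the FFN and the $\mathrm{GeLU}$ nonlinearity. The first map $\mathbf{W}_1$ (using the constant-$1$ coordinate of the input to supply the shifts) produces, for each bit $k$, the three scaled pre-activations $\lambda(\phi(i)_k+\phi(j)_k)$, $\lambda(\phi(i)_k+\phi(j)_k-1)$, $\lambda(\phi(i)_k+\phi(j)_k-2)$ with a large gain $\lambda>0$. Using $\mathrm{GeLU}(\lambda z)/\lambda = z\,\Phi(\lambda z)\to \mathrm{ReLU}(z)$ uniformly on the bounded range $z\in[-2,2]$ — the same $\mathrm{GeLU}$-to-$\mathrm{ReLU}$ estimate used by Feng et al.\ and Yang et al.\ — each pre-activation is converted to an approximation of the corresponding $\mathrm{ReLU}$ value to within $\delta$ by taking $\lambda=\Theta(1/\delta)$ up to logarithmic factors. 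The second map $\mathbf{W}_2$ then forms the combination $(+1,-2,+1)$, rescales by $1/\lambda$, and writes the result into dedicated output coordinates reserved for the $d$ answer bits. Summing the three per-$\mathrm{ReLU}$ errors and choosing $\delta=\epsilon/3$ yields $\|f(i,j)-\phi(i)\oplus\phi(j)\|_\infty\le\epsilon$. For the norm bound: entries of $\mathbf{W}_1$ and its bias are $O(\lambda)=O(\mathrm{poly}(1/\epsilon))$, entries of $\mathbf{W}_2$ are $O(1/\lambda)=O(1)$, and the inputs are bounded by $M$, so the $\ell_\infty$ norm of all parameters is $O(\mathrm{poly}(M,1/\epsilon))$, as claimed.

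I would also note an alternative route: since $a\oplus b=(a\wedge\neg b)\vee(\neg a\wedge b)$, one could instead chain \Cref{lemma:not_variable}, \Cref{lemma:variable_and_variable}, and \Cref{lemma:variable_or_variable}; this reproves the statement but spends several layers, whereas the bump-function construction keeps everything in a single layer and directly mirrors the one-$\mathrm{ReLU}$ construction for $\wedge$ and the two-$\mathrm{ReLU}$ construction for $\vee$.

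The main obstacle I anticipate is the single-layer bookkeeping rather than any hard estimate: we must set aside distinct coordinates in the residual stream for the $d$ output bits so that the residual connection around the FFN does not overwrite them, and arrange the attention sublayer to act as (approximately) the identity on the coordinates carrying $\phi(i)$, $\phi(j)$ and the constant $1$ — e.g.\ a single head with zero value projection, so that $\mathrm{Attention}^l(\mathbf{x})+\mathbf{x}=\mathbf{x}$. A secondary point is that the $\mathrm{GeLU}$-versus-$\mathrm{ReLU}$ error must be controlled uniformly over all $d$ bit positions at once; but the per-coordinate error depends only on $\lambda$ and not on $d$, so a single choice of $\lambda$ suffices and the $\ell_\infty$ guarantee is unaffected by $d=O(n)$.
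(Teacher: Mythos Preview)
Your argument is correct, but the paper takes a different and slightly slicker route. Instead of the triangular bump $g(a+b)=\mathrm{ReLU}(a+b)-2\,\mathrm{ReLU}(a+b-1)+\mathrm{ReLU}(a+b-2)$, the paper observes that for bits $a,b\in\{0,1\}$ one has $a\oplus b=(a-b)^2$; it then uses a linear $\mathbf{V}$ in the attention sublayer to form $\phi(i)-\phi(j)$ coordinatewise and invokes the multiplication MLP of \Cref{lemma:multiplication} to square each coordinate. Your construction routes through \Cref{lemma:relu} (the $\mathrm{GeLU}\!\to\!\mathrm{ReLU}$ approximation) instead of \Cref{lemma:multiplication}, and spends three hidden neurons per output bit rather than the four of the multiplication gadget. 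Both stay within a single Transformer layer and yield the same $O(\mathrm{poly}(M,1/\epsilon))$ parameter bound; the paper's version has the advantage that all the $\mathrm{GeLU}$ error analysis is already encapsulated inside the cited multiplication lemma, while yours makes the per-coordinate uniformity of the error more explicit. Your bookkeeping worries about the residual stream are handled in the paper by the blanket assumption, stated at the start of \Cref{sec:proof_expressiveness}, that residual connections may be replaced by concatenation without loss of expressive power. Your alternative De Morgan route via \Cref{lemma:not_variable}, \Cref{lemma:variable_and_variable}, and \Cref{lemma:variable_or_variable} is also valid but, as you note, costs several layers; neither you nor the paper uses it.
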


Building on the aforementioned lemmas, we can utilize the Transformer module to implement the specific steps of both the LCG and MT algorithms, thereby completing our proof.

\begin{theorem}\label{thm:LCG}
	For the linear congruential generator algorithm,
	we can construct an autoregressive Transformer as defined
	in ~\Cref{sec:preliminaries} that is capable of
	simulating and generating $n$ pseudo-random numbers.
	The constructed Transformer has a hidden dimension of $d = O(n)$,
	consists of \textbf{one layer} with \textbf{one attention head},
	and all parameters are polynomially bounded by $O(poly(n))$.
\end{theorem}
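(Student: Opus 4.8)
The plan is to exploit the autoregressive (Chain-of-Thought) mode of generation: seeded with $\mathbf{x}_0$ as the first token, the Transformer emits $\mathbf{x}_1,\dots,\mathbf{x}_n$ one token per step, and at step $t$ it reads the prefix $\mathbf{x}_0,\dots,\mathbf{x}_{t-1}$ and must output
\[ \mathbf{x}_t \;=\; (\mathbf{a}\,\mathbf{x}_{t-1} + \mathbf{c}) \bmod m . \]
First I would fix the encoding. The token-embedding map sends each $\mathbf{x}_i$ to a residual-stream vector holding its binary representation $\phi(\mathbf{x}_i)$, a position marker derived from $i$, a constant-$1$ coordinate, and a block of scratch coordinates for the feed-forward sublayer; since $\phi(\mathbf{x}_i)$ itself occupies $\Theta(n)$ coordinates (the state being an $O(n)$-dimensional word vector) and the overhead is a constant factor, this gives the stated hidden dimension $d = O(n)$. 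The positional embedding $\mathrm{pos}(i)$ is chosen so that one attention head implements exact previous-token retrieval: the query at position $t$ scores position $t-1$ at magnitude $\Theta(\log n)$ and every other position at $O(1)$, so the (log-precision) softmax concentrates essentially all its mass on $t-1$ and copies $\phi(\mathbf{x}_{t-1})$ into the residual stream. Since retrieval is the only task for attention, one head in one layer suffices on the attention side, and \Cref{lemma:not_variable} shows that simple Boolean post-processing, if needed, can be folded into the same head.

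Next I would show that the single feed-forward sublayer of that layer computes the transition bit-exactly, coordinatewise and in parallel across all components of the state. Because each component of $\mathbf{x}_{t-1}$ is below $m$, the map $\phi(\mathbf{x}_{t-1}) \mapsto \phi\big((\mathbf{a}\,\mathbf{x}_{t-1}+\mathbf{c})\bmod m\big)$ is, per component, a fixed Boolean function of $w = \lceil\log m\rceil$ input bits; assuming $w = O(\log n)$ (in particular for any fixed word size), it is realized exactly by a one-hidden-layer GeLU network of width $2^{O(w)} = \mathrm{poly}(n)$ — one hidden unit per relevant input pattern, with output weights selecting the correct bits — which is precisely the form of $\mathrm{FFN}$ allowed by the architecture of \Cref{sec:preliminaries}. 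The elementary gadgets of \Cref{lemma:variable_and_variable,lemma:variable_or_variable,lemma:not_variable,lemma:variable_xor_variable} together with the FFN function-approximation machinery of Feng et al. and Yang et al. supply both the building blocks and the error bookkeeping: every coordinate is produced within a constant $\epsilon < 1/2$ of its target, and a final GeLU-based rounding step recovers $\phi(\mathbf{x}_t)$ exactly, after which the unembedding maps it back to the token $\mathbf{x}_t$. The conceptual point is that constant-factor multiplication with its carries, the constant addition, and the reduction $\bmod m$ are never performed ``symbolically''; they are absorbed into a finite lookup table at the cost of polynomial \emph{width} but not of depth, so the entire transition fits in one layer and all parameters keep $\ell_\infty$ norm $O(\mathrm{poly}(n))$. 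An induction on $t$ over the CoT steps then shows the construction reproduces the LCG sequence exactly.

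The step I expect to be the main obstacle is the analysis of this single feed-forward layer under the log-precision constraint: one must verify that the GeLU interpolation and threshold gadgets reproduce the $\bmod m$ map on all inputs simultaneously, with errors provably below $1/2$ and with weights that remain polynomially bounded — the modular reduction is the delicate piece, since $\bmod m$ admits no low-depth circuit in general, and the argument only works because the pre-reduction value $\mathbf{a}\,\mathbf{x}_{t-1}+\mathbf{c}$ has just $O(\log m)$ bits. A secondary technical point is confirming that an $O(\log n)$-bit softmax can make previous-token retrieval sharp enough, uniformly over all $t \le n$, for the copied bits to land within $\epsilon$ of $\{0,1\}$; this is routine but must be checked against the precision budget. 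With both in hand, the construction — one layer, one attention head, hidden dimension $O(n)$, polynomially bounded parameters — generates the LCG stream, and the theorem follows.
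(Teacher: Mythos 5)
Your construction takes a genuinely different route from the paper, and the comparison is worth making explicit. The paper's proof gives the attention layer a real job: a special value matrix $\mathbf{V}$ carries $(\mathbf{s}_j, j, 1) \mapsto (\mathbf{s}_j\mathbf{A} + \mathbf{C}, j, 1)$, so the affine step $a x + c$ is performed in the attention sublayer (at integer scale, not bit by bit), and the residual/FFN then applies \Cref{lemma:modulus}, which computes $i \bmod m$ for $i$ up to $m^2$ with only $O(m)$ hidden units by writing $\lfloor i/m\rfloor$ as a sum of $O(m)$ paired step functions. You instead make attention do previous-token retrieval and then realize the entire bit-level transition $\phi(x_{t-1}) \mapsto \phi\big((a x_{t-1}+c)\bmod m\big)$ as a one-hot lookup table in the FFN. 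Both are legitimate, but note two things. First, the retrieval is unnecessary: in the autoregressive setup the position emitting $x_t$ already holds $\phi(x_{t-1})$ in its own residual stream, so the paper does not need (and does not build) a sharp previous-token head with $\Theta(\log n)$ score separation — the attention's only role is to apply $\mathbf{V}$. Your retrieval head is harmless but is extra machinery that must then be argued correct under the log-precision softmax, which the paper sidesteps entirely. Second, your width claim is looser than the theorem's. You conclude ``width $2^{O(w)} = \mathrm{poly}(n)$'' from $w = O(\log n)$, but the theorem asserts hidden dimension $O(n)$, which your lookup only matches under the stronger (though implicitly assumed by everyone here) condition $m = O(n)$, i.e., $2^w = O(n)$. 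The paper's structured modulus gadget is the tool that would let you state the $O(n)$ bound cleanly even when the pre-reduction value is as large as $m^2$, whereas a raw lookup on the post-affine value would cost $O(m^2)$; you avoid this only because you index the table by $x_{t-1}$ rather than by $a x_{t-1}+c$, which is the right move but should be said. Finally, your remark that ``$\bmod m$ admits no low-depth circuit in general'' is a red herring — for fixed $m$ with polynomially bounded inputs it is trivially shallow — and the clean reason it fits is exactly the structured quotient decomposition the paper uses, not the bit budget per se.
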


\begin{proof}[Proof Sketch]
	We utilize the attention module of the Transformer to implement the multiplication of the pseudo-random number \( x_{i-1} \) by the constant \( a \) (as established in the lemma). This result is then added to the constant \( c \), followed by the application of the FFN module to perform the modulus operation \( \mod(m) \). The detailed proof can be found in \cref{sec:proof_expressiveness}.
\end{proof}

In \Cref{thm:MT}, we employed chain of thought to enable the Transformer to simulate the intermediate steps of the MT algorithm sequentially.

\begin{theorem}\label{thm:MT}
	For the Mersenne Twister algorithm,
	we can construct an autoregressive Transformer with Chain of Thought as defined
	in ~\Cref{sec:preliminaries} that is capable of
	simulating and generating $n$ pseudo-random numbers.
	The constructed Transformer has a hidden dimension of $d = O(n)$,
	consists of \textbf{seventeen layers}
	with at most \textbf{four attention heads per layer},
	and all parameters are polynomially bounded by $O(poly(n))$.
\end{theorem}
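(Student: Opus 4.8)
The plan is to build directly on the proof of \Cref{thm:LCG} and on the Boolean‑operation layers of \Cref{lemma:variable_and_variable,lemma:variable_or_variable,lemma:not_variable,lemma:variable_xor_variable}: I would decompose one step of the MT recurrence into a short pipeline of elementary operations — state retrieval, fixed bit‑shifts, masking against constant vectors, and $\oplus/\land/\lor$ on variable vectors — realize each by a constant number of Transformer layers, and then use Chain of Thought to emit the intermediate quantities $\mathbf{t}$, $\mathbf{z}$, and the successive values of $\mathbf{y}$ as fresh tokens, so that the recurrence unrolls \emph{across the sequence} rather than across depth. In this way the full generator is a constant‑depth network whose computation length scales with the CoT length.

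First I would dispose of the initialization: the first $n$ state words $\mathbf{x}_0,\dots,\mathbf{x}_{n-1}$ are produced by the LCG, so \Cref{thm:LCG} already supplies a sub‑Transformer generating them; I would fix an embedding layout in which these words sit at consecutive token positions with their bit‑vectors $\phi(\cdot)$ in a dedicated coordinate block, together with the position index $i$ and the constant $1$ that the Boolean lemmas require. Second, I would implement the \textbf{rotation}. The masks $\mathbf{upper}$ and $\mathbf{lower}$ are fixed bit patterns, so $\mathbf{x}[i]\land\mathbf{upper}$ and $\mathbf{x}[(i+1)\bmod n]\land\mathbf{lower}$ are coordinate‑wise selections foldable into a linear map (or, for a uniform interface, obtained from \Cref{lemma:variable_and_variable} with one operand constant), and the combining $\lor$ is \Cref{lemma:variable_or_variable}. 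The genuinely new ingredient is fetching $\mathbf{x}[(i+1)\bmod n]$ and $\mathbf{x}[(i+m)\bmod n]$: I would devote one attention head per offset, building query/key vectors from the position index so that the softmax concentrates on the token whose index equals $(i+k)\bmod n$, handling the wrap‑around by encoding position on a cycle (e.g.\ via $(\cos,\sin)$ of $2\pi i/n$, or a two‑case linear selector between $i+k$ and $i+k-n$). The shift $\mathbf{t}>>1$ is a fixed coordinate permutation (linear), the two $\oplus$'s are \Cref{lemma:variable_xor_variable}, and the conditional term is handled by the identity "add $\mathbf{0}$ or $\mathbf{a}$ according to $t_0$" $=\;\oplus\,(t_0\land\mathbf{a})$, i.e.\ broadcast the single bit $t_0$ against the constant vector $\mathbf{a}$ (a rank‑one linear map) and XOR once more.

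Third, the \textbf{extraction} is a chain of four "shift, mask, XOR" stages on $\mathbf{y}$: each shift ($\mathbf{y}>>u$, $\mathbf{y}<<s$, $\mathbf{y}<<t$, $\mathbf{y}>>l$) is a linear permutation of coordinates, each mask ($\land\mathbf{b}$, $\land\mathbf{c}$) is a coordinate selection, and each combining $\oplus$ is \Cref{lemma:variable_xor_variable}; since every stage depends on the previous one they must be stacked, contributing the bulk of the depth. Accounting roughly two to three layers for the rotation's retrieve‑and‑combine and about two layers for each of the four extraction stages plus a final layer to format the output word as the next CoT token yields the claimed seventeen layers, and since at most two offsets are retrieved simultaneously and each Boolean lemma uses a bounded number of heads, four heads per layer suffice. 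I would then confirm the log‑precision / $O(poly(n))$ parameter bound: all added weights are fixed permutations, constant masks, rank‑one broadcasts, or the bounded‑norm weights from the lemmas, and the attention logits need only $O(\log n)$ bits to separate the $n$ positions; the hidden width stays $O(n)$ because every intermediate vector ($\mathbf{t},\mathbf{z},\mathbf{y}$) has $d$ bits and we keep only a constant number of working copies.

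The step I expect to be the main obstacle is the combination of modular‑offset retrieval with the depth budget: one must design the positional features so a single head cleanly selects position $(i+m)\bmod n$ despite the wrap, \emph{and} arrange the elementary operations so that the entire rotation‑plus‑extraction pipeline — including the conditional $\mathbf{a}$‑term and maintaining the right working copies of $\mathbf{y}$ across four dependent XOR stages — compresses into seventeen layers with four heads. This forces several nominally separate operations (mask $+$ XOR, or shift $+$ XOR) to be merged into a single layer, and verifying that such merges remain within the approximation error $\epsilon$ tolerated by the Boolean lemmas is the delicate part of the argument.
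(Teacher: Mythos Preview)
Your decomposition---rotation and extraction broken into shifts, constant masks, and the Boolean lemmas, stacked across depth---is essentially the paper's plan, and your layer count lands in the right neighborhood. Two design choices, however, diverge from the paper and the first one hides a real gap.

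The paper's Chain of Thought is not ``emit $\mathbf{t},\mathbf{z}$, and the successive $\mathbf{y}$'s''; it is block-structured with a separator token $\Rightarrow$, and each block carries a full copy of the current state array. This matters because MT updates the state in place: after generating the $k$-th number, $\mathbf{x}[k\bmod n]$ has been overwritten by $\mathbf{z}$, and the next iteration's retrieval must fetch the \emph{updated} word, not the original one. The paper's Layer~17 is a four-way selector that, at each position within a block, decides whether to output the freshly computed $\mathbf{y}_t$, the newly rotated state word, a copied-forward state word, or $\Rightarrow$; Layer~1 then counts separators to locate the previous block, and Layer~3 attends into it. Your sketch never says how, after many iterations, attention finds the most-recent $\mathbf{z}$ at logical index $(i{+}m)\bmod n$ rather than the stale initial value; your cyclic $(\cos,\sin)$ encoding indexes \emph{logical} slots, not token positions, and does not by itself distinguish old from updated entries.

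The second difference is retrieval itself: rather than trigonometric features, the paper computes the three target indices arithmetically (via the modulus lemma, \Cref{lemma:modulus}), turns them into actual token positions within the previous block, builds $0/-\infty$ indicator keys, and uses four attention heads in a single layer to copy the needed words. Your alternative could be made to work, but you would still need the state-replication scheme above for it to point at the right tokens.
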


\begin{proof}[Proof Sketch]
	Initially, we employ an attention layer to compute the total count of
	generated pseudo-random numbers, denoted as $cnt_{\Rightarrow}$.
	Subsequently, utilizing this $cnt_{\Rightarrow}$ in conjunction with
	the attention layer, we determine the requisite variable indices for
	generating new pseudo-random numbers in the MT algorithm.
	Based on these indices, we retrieve the necessary random numbers and
	implement the rotation and extraction operations characteristic of the
	MT algorithm using a multi-layer Transformer architecture.
	Finally, we implement an MLP to distinguish between symbolic output and
	pseudo-random number generation. The complete proof is presented in \Cref{sec:proof_expressiveness}.
\end{proof}

We can also simulate basic Boolean operations using Transformers, demonstrating that log-precision autoregressive Transformers can express non-uniform $\text{AC}^0$.

\begin{corollary}
	\label{cor:transformer_ac0}
	Log-precision autoregressive Transformers with polynomial size can simulate
	any circuit family in non-uniform $\text{AC}^0$. Specifically, for any
	circuit family $\{C_n\}_{n \in \mathbb{N}}$ in $\text{AC}^0$ with size
	$\text{poly}(n)$, there exists a family of log-precision decoder-only
	Transformers $\{T_n\}_{n \in \mathbb{N}}$ of size $\text{poly}(n)$ that
	can simulate $\{C_n\}_{n \in \mathbb{N}}$.
\end{corollary}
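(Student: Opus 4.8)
The plan is to reduce the simulation of an $\text{AC}^0$ circuit to the repeated application of the Boolean‑gate gadgets of \Cref{lemma:variable_and_variable,lemma:variable_or_variable,lemma:not_variable,lemma:variable_xor_variable}. Fix $n$ and let $C_n$ be a circuit of depth $\delta = O(1)$ and size $s = poly(n)$ over unbounded fan‑in $\land,\lor,\neg$ gates. First I would rewrite every unbounded fan‑in gate of fan‑in $f \le s$ as a balanced binary tree of $f-1$ two‑input gates of the same type; since $\log f = O(\log n)$, this produces an equivalent circuit $C_n'$ of depth $\delta' = O(\delta \log n) = O(\log n)$, size still $poly(n)$, in which every gate has fan‑in at most two. (Alternatively, an unbounded fan‑in $\lor$ / $\land$ can be realized directly by a single attention head that averages the relevant bits together with an FFN that thresholds the average against $0$ / $1$ using $poly(n)$‑sized weights, yielding a constant‑depth construction; I present the binary‑tree version since it reuses the already‑proven lemmas verbatim.)

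Next I would fix an encoding of the circuit evaluation into the Transformer state, following the ``compute‑then‑output'' pattern of \Cref{thm:LCG} and \Cref{thm:MT}: the $n$ input bits are placed into the token embeddings (one bit per token, with $\mathrm{pos}(i)$ supplying the index), and we reserve a block of $s = poly(n)$ coordinates of the hidden dimension $d = O(poly(n))$ to hold the value of every gate of $C_n'$, with the remaining coordinates carrying the constants required by the lemmas. Processing proceeds level by level: for $\ell = 1,\dots,\delta'$, a constant number of Transformer layers computes, \emph{in parallel across disjoint coordinate blocks}, the two‑input $\land$, $\lor$, $\neg$ of all gates at level $\ell$, reading their inputs from the coordinates written at earlier levels; causal masking is harmless since all input bits are visible at the final token, where the computation is carried out (or, if desired, via CoT tokens). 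The \textbf{non‑uniformity} is used exactly once here: the fixed wiring of $C_n'$ — which gate output feeds which gate input — is a fixed $\{0,1\}$ linear map that I would absorb into the query/key/value projections and the FFN matrices $\mathbf{W}_1^l,\mathbf{W}_2^l$ of that level, so each $T_n$ may depend on $n$ but on nothing else. After level $\delta'$, the value of the output gate sits in a designated coordinate and is read off.

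For correctness I would control the approximation error by composition. Each gadget outputs every bit within $\epsilon$ of its true value; choosing $\epsilon < 1/4$ and appending to each level a one‑layer FFN threshold (``round to the nearer of $0$ and $1$'', built from $\mathrm{GeLU}$ with $poly(n)$ weights) snaps the block back to exact $\{0,1\}$ values, so errors do not accumulate across the $O(\log n)$ levels. Counting resources: $O(\log n)$ levels times $O(1)$ layers per level gives $O(\log n)$ layers; each layer has width $O(poly(n))$ and, since $\epsilon$ is a fixed constant, $\ell_\infty$ parameter norm $poly(n)$, hence is representable with $O(\log n)$ bits, so $\{T_n\}$ is indeed a family of log‑precision, polynomial‑size decoder‑only Transformers computing $\{C_n\}$. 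The main point requiring care is the resource accounting around the unbounded‑to‑binary fan‑in conversion — checking that the $\log$‑factor blow‑up in depth keeps the whole network polynomial‑size and log‑precision — together with the bookkeeping that a single fixed circuit's topology can be hard‑wired into the (input‑length‑dependent) weight matrices; neither is a genuine obstacle once the gadgets of the preceding lemmas are available.
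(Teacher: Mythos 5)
Your proposal follows the same basic strategy as the paper---reduce circuit simulation to the two-input Boolean-gate gadgets of \Cref{lemma:variable_and_variable} and \Cref{lemma:variable_or_variable} (plus \Cref{lemma:not_variable})---but it is considerably more complete than the paper's own two-sentence argument, and the additional care is genuinely necessary. In particular, the paper's proof never addresses the defining feature of $\text{AC}^0$, namely \emph{unbounded} fan-in; the gate lemmas only treat binary $\land,\lor$, so there is a real gap between ``we can do two-input $\land$ and $\lor$'' and ``we can simulate $\text{AC}^0$.'' You close it two ways: the balanced-binary-tree rewriting (which is correct but inflates depth to $O(\log n)$ Transformer layers---permissible here because the corollary only bounds \emph{size}, not depth, yet worth flagging since the paper's main theorems all use $O(1)$ layers), and the averaging-plus-threshold trick inside a single attention head, which recovers constant depth and matches the standard proof that log-precision attention can emulate unbounded fan-in gates. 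You also make explicit the level-by-level composition, the error control via rounding after each level (so the per-gadget $\epsilon$ does not compound), and the observation that non-uniformity is used exactly once---to hard-wire $C_n'$'s topology into $T_n$'s weight matrices. These are precisely the points the paper's proof silently assumes; your write-up would serve as a better proof of \Cref{cor:transformer_ac0} than the one currently printed. One small note: if you adopt the constant-depth averaging variant, be careful that the softmax must place (essentially) zero weight on irrelevant tokens for each gate, which again is a fixed, circuit-dependent mask that you absorb into $\mathbf{W}_Q,\mathbf{W}_K$ per head---consistent with non-uniformity, but worth stating, since an unbounded-fan-in gate at the final position must selectively attend to a $poly(n)$-sized subset of earlier positions.
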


\begin{proof}
	Leveraging \Cref{lemma:variable_and_variable} and \Cref{lemma:variable_or_variable},
	we demonstrate that a polynomial-sized Transformer layer can simulate the
	fundamental circuit gates AND and OR in $\text{AC}^0$. This result establishes
	that log-precision autoregressive Transformers possess at least the expressive
	power of non-uniform $\text{AC}^0$.
\end{proof}

In the aforementioned theorems, we provided detailed constructions using
Transformer architecture to implement each step of both the LCG and MT
pseudo-random number generation algorithms, thereby establishing a theoretical
foundation for Transformers' capability to simulate some pseudo-random generators. Subsequently, in ~\Cref{sec:experiments}, we validate our theoretical findings through comprehensive empirical evaluation, designing experiments to assess both the effectiveness of Transformers as pseudo-random number generators and their potential in prediction-based attack scenarios.

\section{Experiments}
\label{sec:experiments}

Although we analyze the MT algorithm and the LCG can be simulated by transformer
with constant depth in previous section,
it is still important to verify the effectiveness of our theoretical analysis
by experiments. In this section, we will design several experiments.
First, we will verify the effectiveness of our theoretical analysis on the MT algorithm.
Then, we find that generator based on transformer can pass the most statistical tests designed by NIST.
Finally, we explore how transformers can be used for prediction attacks, such as predicting the output of the MT algorithm. All experiments related to training and sequence generation were conducted on a machine running the RokeyLinux operating system, equipped with an A800 GPU with 80 GB of memory. The NIST statistical tests were performed separately on a personal laptop computer.

\subsection{Transformer Simulation of Mersenne Twister}
\paragraph{Model and Dataset}
We use the MT 19937 algorithm to generate
32bit pseudo-random numbers(mt19937-32) as our base dataset.
These numbers were subsequently transformed into 8-bit, 12-bit, and 16-bit representations
through modular arithmetic operations. The detailed dataset configurations are
presented in~\Cref{tab:mt19937-dataset}.

\begin{table}[h]
	\centering
	\begin{tabular}{ccc} %
		\toprule
		Dataset       & Sequence Length & Training Set Size \\
		\midrule
		mt19937-8bit  & 256             & 256               \\
		mt19937-12bit & 4096            & 4096              \\
		mt19937-16bit & 4096            & 4096              \\
		\bottomrule
	\end{tabular}
	\caption{Dataset configuration}
	\label{tab:mt19937-dataset}
\end{table}

To evaluate the simulation capability, we trained a GPT-2 architecture
on the aforementioned datasets. The model was trained to minimize the cross-entropy
loss, with training accuracy serving as our primary metric for assessing the quality
of PRNG simulation. We conducted extensive experiments with various hyperparameter
configurations, and the detailed model architecture specifications are summarized
in~\Cref{tab:mt19937-model}.

\begin{table}[h]
	\centering
	\begin{tabular}{ccc}
		\toprule
		Parameter    & Value & Description                  \\
		\midrule
		attn\_pdrop  & 0.1   & Dropout for attention layers \\
		embd\_pdrop  & 0.1   & Dropout for embeddings       \\
		model\_type  & gpt2  & Architecture type            \\
		n\_embd      & 768   & Embedding dimension          \\
		n\_head      & 12    & Attention heads              \\
		n\_layer     & 12    & Transformer layers           \\
		n\_positions & 1024  & Max sequence length          \\
		resid\_pdrop & 0.1   & Dropout for residuals        \\
		vocab\_size  & 50257 & Vocabulary size              \\
		\bottomrule
	\end{tabular}
	\caption{Model configuration}
	\label{tab:mt19937-model}
\end{table}

\begin{figure}[!ht]
    \centering
	\begin{minipage}{0.8\textwidth}
		\subfigure[Training Accuracy]{
			\includegraphics[width=0.7\textwidth]{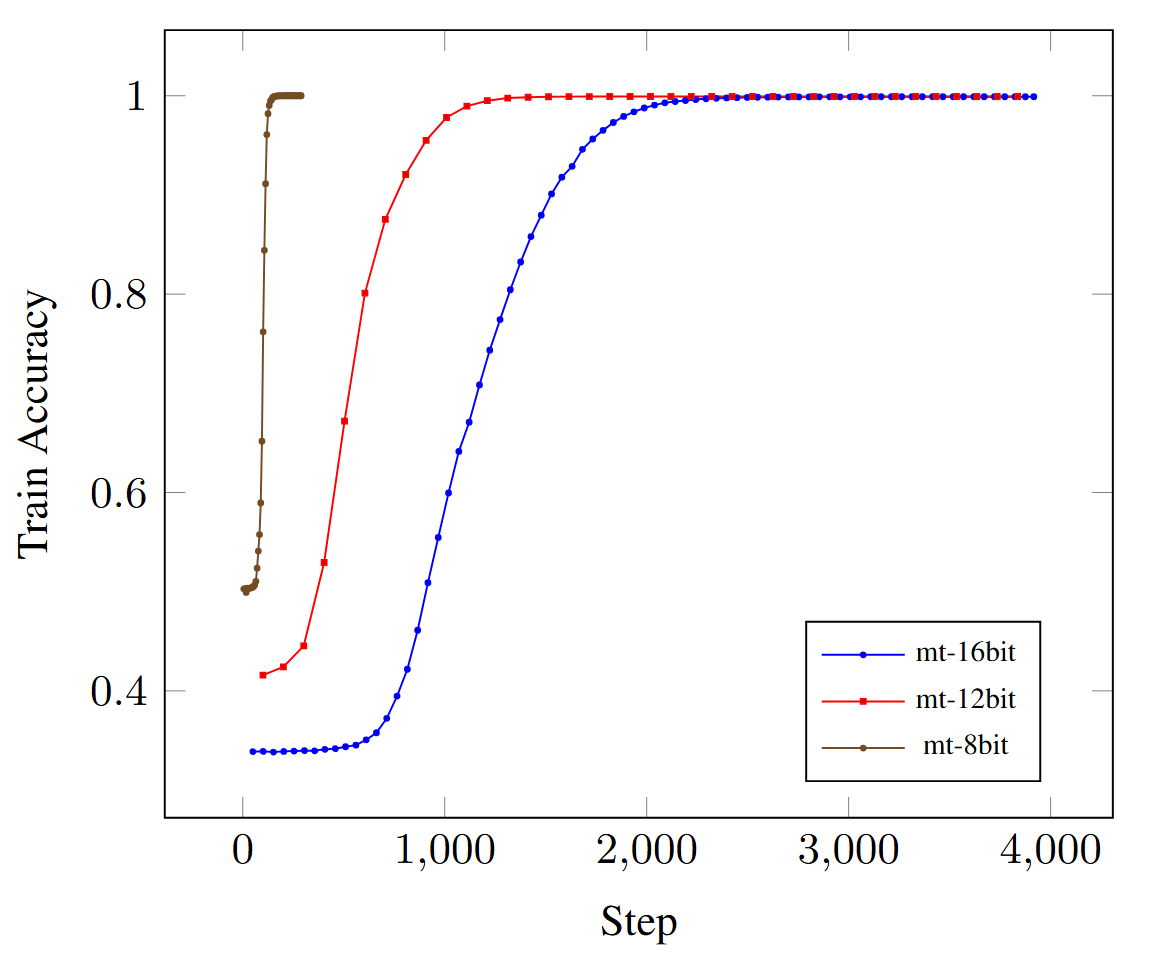}
		}
	\end{minipage}
	\hfill
	\begin{minipage}{0.8\textwidth}
		\subfigure[Training Loss]{
			\includegraphics[width=0.7\textwidth]{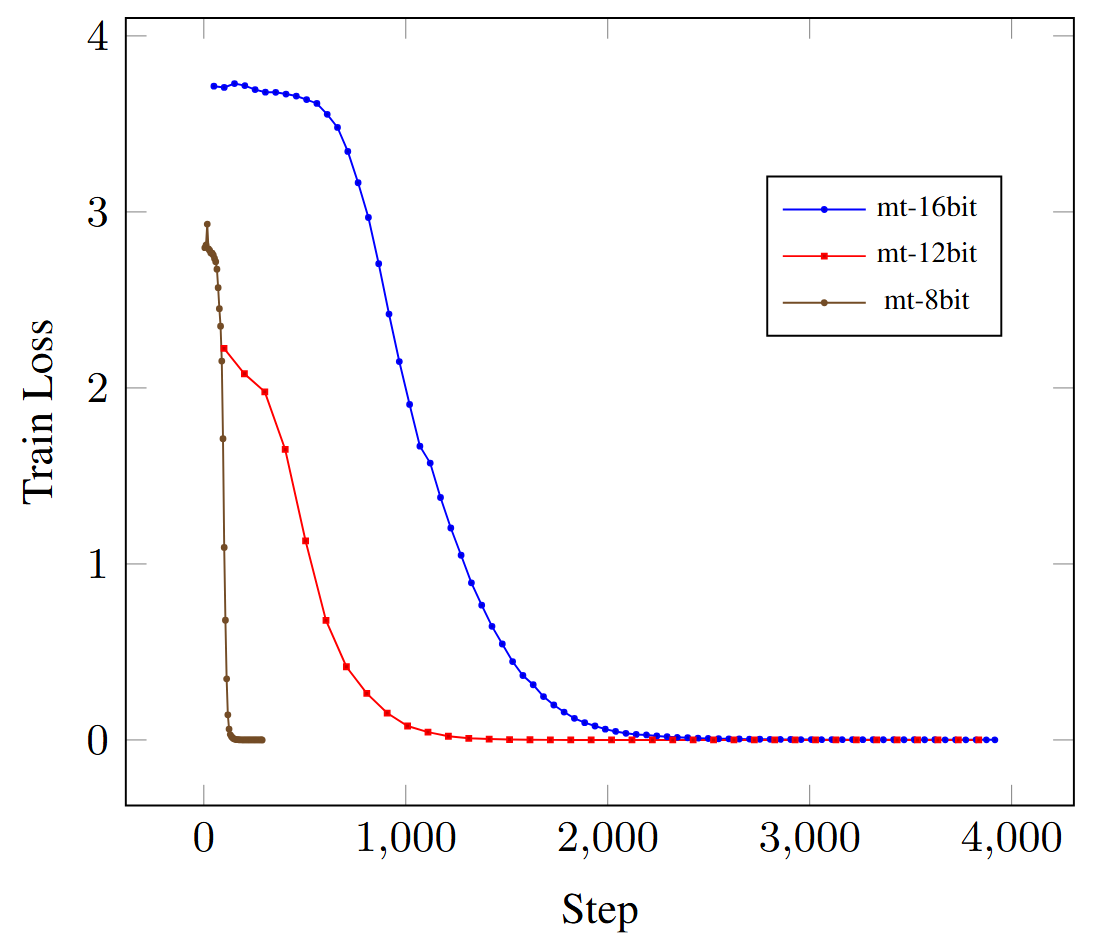}
			}
	\end{minipage}
	\caption{Training Accuracy and Loss}
	\label{fig:mt19937-train}
\end{figure}


\paragraph{Results}

The experimental results demonstrate consistent convergence patterns across different bit-width configurations (8-bit, 12-bit, and 16-bit) of the MT simulation. As shown in ~\Cref{fig:mt19937-train}, the training accuracy curves exhibit sigmoid-like behavior, with the 8-bit model converging most rapidly (within 500 steps), followed by the 12-bit model (approximately 1,000 steps), and the 16-bit model requiring the longest training period (around 2,000 steps) to achieve convergence. Correspondingly, the training loss curves display exponential decay, with all three models eventually stabilizing at minimal loss values, indicating successful model optimization. Notably, all configurations ultimately achieve near-perfect accuracy (approximately 1.0), suggesting that the Transformer architecture can effectively simulate the MT algorithm regardless of bit-width, albeit with varying convergence rates inversely proportional to the complexity of the target bit-width.

\begin{figure}[!ht]
	\centering
	\subfigure[Before training]{
		\includegraphics[width=0.7\textwidth]{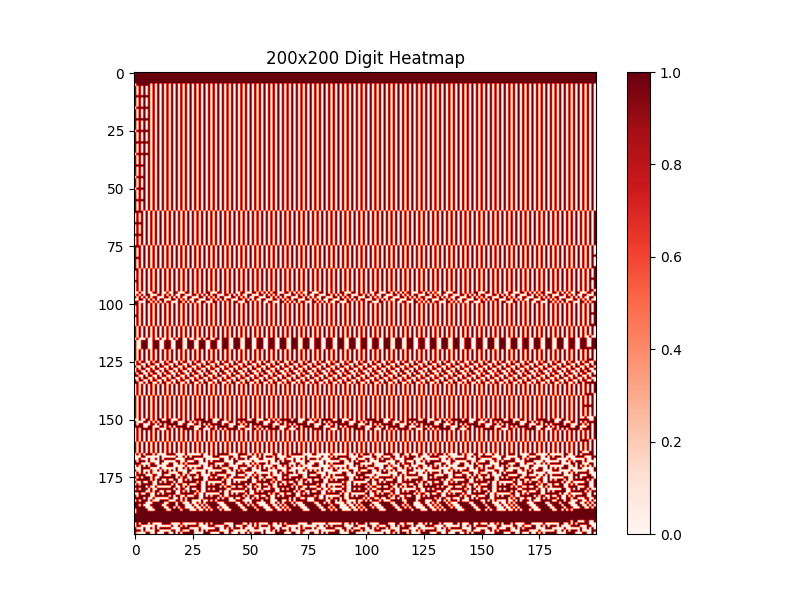}
		\label{fig:mt19937-heatmap-before}
	}
	\subfigure[After training]{
		\includegraphics[width=0.7\textwidth]{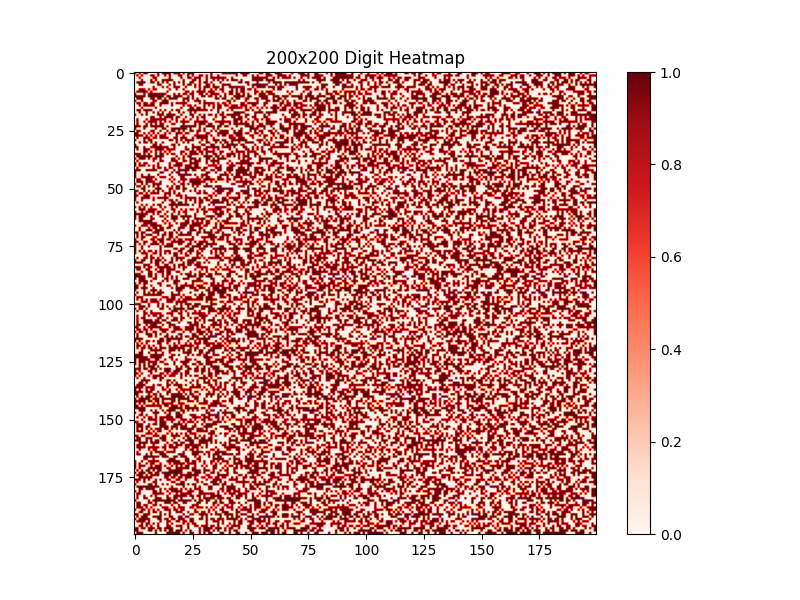}
		\label{fig:mt19937-heatmap-after}
	}
	\caption{heatmap of bit stream}
	\label{fig:mt19937-heatmap}
\end{figure}

\subsection{NIST Statistical Tests for Randomness}

\paragraph{NIST Statistical Test Suite}

We evaluated our pseudo-random sequences using the NIST Statistical Test Suite (NIST STS) ~\cite{smid2010statistical}, which includes 15 tests: Frequency, Block Frequency, Cumulative Sums, Runs, Longest Run of Ones, Rank, Discrete Fourier Transform, Nonperiodic Template Matchings, Overlapping Template Matchings, Universal Statistical, Approximate Entropy, Random Excursions, Random Excursions Variant, Serial, and Linear Complexity. Each test evaluates specific statistical properties using rigorous mathematical principles to detect potential non-random patterns and statistical deviations in binary sequences.

Our experimental methodology involved training a Transformer model on a 16-bit dataset, with model checkpoints systematically preserved at training accuracies thresholds of 0.63, 0.68, 0.81, and 0.89. The pseudo-random sequences generated from each checkpoint were subsequently subjected to the complete NIST STS battery of tests.

Our NIST testing passed 11 of 15 tests (corresponding to training accuracies of 0.63, 0.68, 0.81, and 0.89, respectively). The results of the NIST Statistical Test Suite that passed, as illustrated in~\Cref{fig:pvalues}, indicate that the generated pseudo-random sequences exhibit satisfactory statistical properties. Specifically, all tests, except for the Block Frequency test, yielded p-values exceeding the significance level of 0.01, suggesting that the sequences successfully passed the majority of the statistical tests. This outcome implies that the Transformer model effectively generates sequences that maintain randomness characteristics, thereby validating the robustness of the simulation process employed in this study.

The test failures reveal several limitations in our generator: RandomExcursions and RandomExcursionsVariant tests show insufficient random walk characteristics and missing P-values due to inadequate sequence variations; the Universal Test fails due to sequence compressibility issues from pattern reproduction in training data; and the Approximate Entropy Test fails due to limited sequence complexity caused by fixed-length training sequences.Notably, these results just achieved using merely a basic GPT-2 architecture, highlighting the considerable potential of Transformer-based approaches in pseudo-random number generation.

The heatmap presented in ~\Cref{fig:mt19937-heatmap}
illustrates the transformation of the generated pseudo-random sequences before and after the training process.
~\Cref{fig:mt19937-heatmap-before} reveals a distinct pattern characterized by regularity and structure, indicative of a non-random distribution of values.
Conversely, ~\Cref{fig:mt19937-heatmap-after} demonstrates a significant shift towards randomness, as evidenced by the chaotic and uniform distribution of values across the grid. This stark contrast underscores the effectiveness of the training regimen, which successfully enhances the stochastic properties of the output sequences, thereby validating the capability of the Transformer model to approximate true randomness in the generated bit stream.

\begin{figure*}[!htbp]
	\centering
	\includegraphics[width=1.0\textwidth]{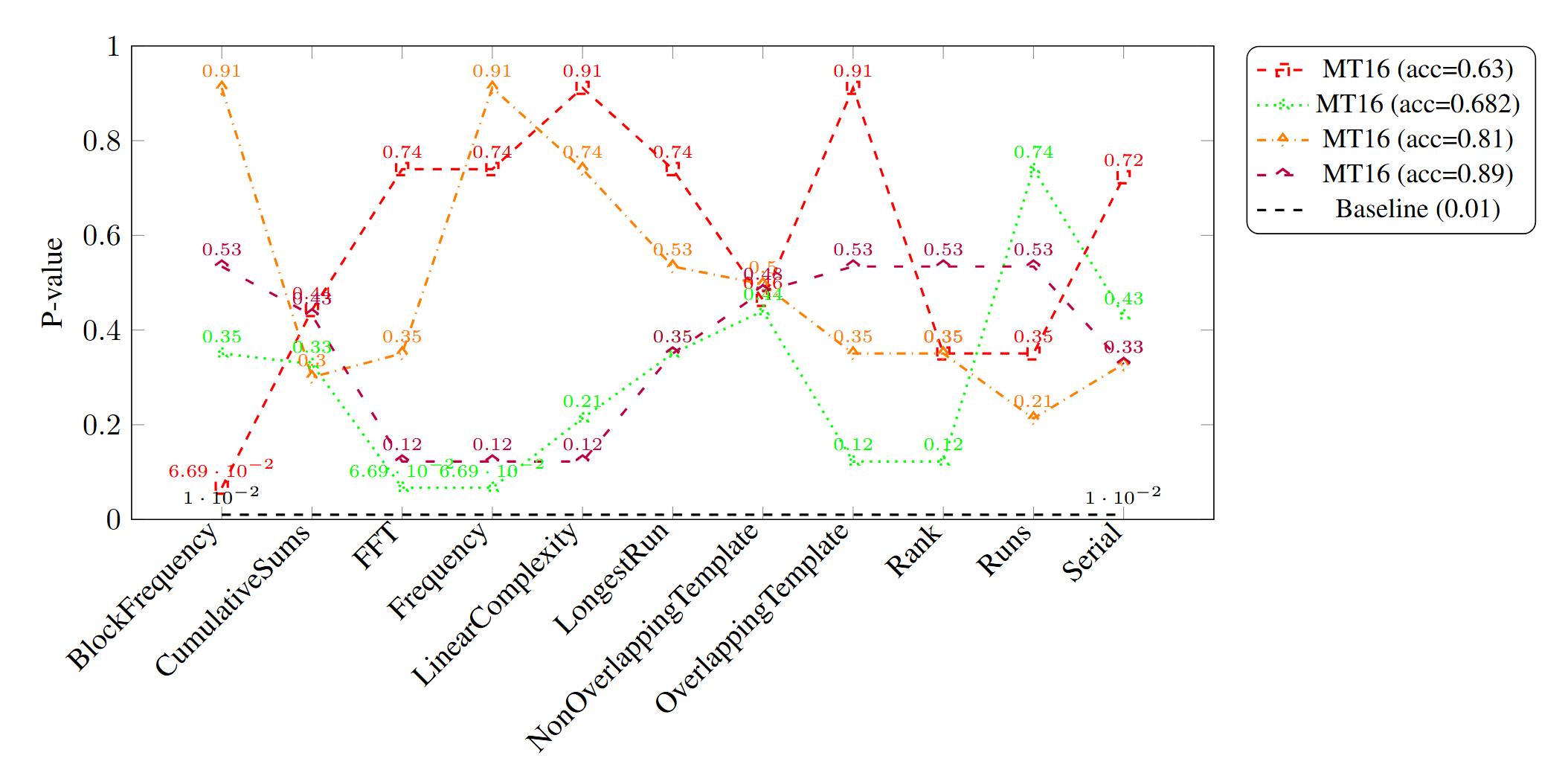}
	\caption{P-values for Different Statistical Tests}
	\label{fig:pvalues}
\end{figure*}

\subsection{Prediction Attack on Mersenne Twister}
Building upon ~\cite{tao2025can}'s findings on Transformers' ability to utilize prime factorizations and RNS representations in LCG sequence prediction,
we employed a Transformer-based architecture to model and predict the output sequences of the Mersenne Twister (MT) algorithm. The experimental setup involved training the model with both 8-bit and 12-bit random number sequences, utilizing training-to-test set ratios of 1:10 and 1:20, respectively. Our empirical results demonstrate that the Transformer model achieves prediction accuracy ranging from 0.7 to 0.8 when inferring MT algorithm outputs. The detailed performance metrics and convergence characteristics are illustrated in~\Cref{fig:mt19937-prediction}.
These results not only advance our understanding of Transformer capabilities in sequence prediction but also provide valuable insights for developing more secure cryptographic algorithms.

\begin{figure}[!ht]
	\centering
	\includegraphics[width=0.7\textwidth]{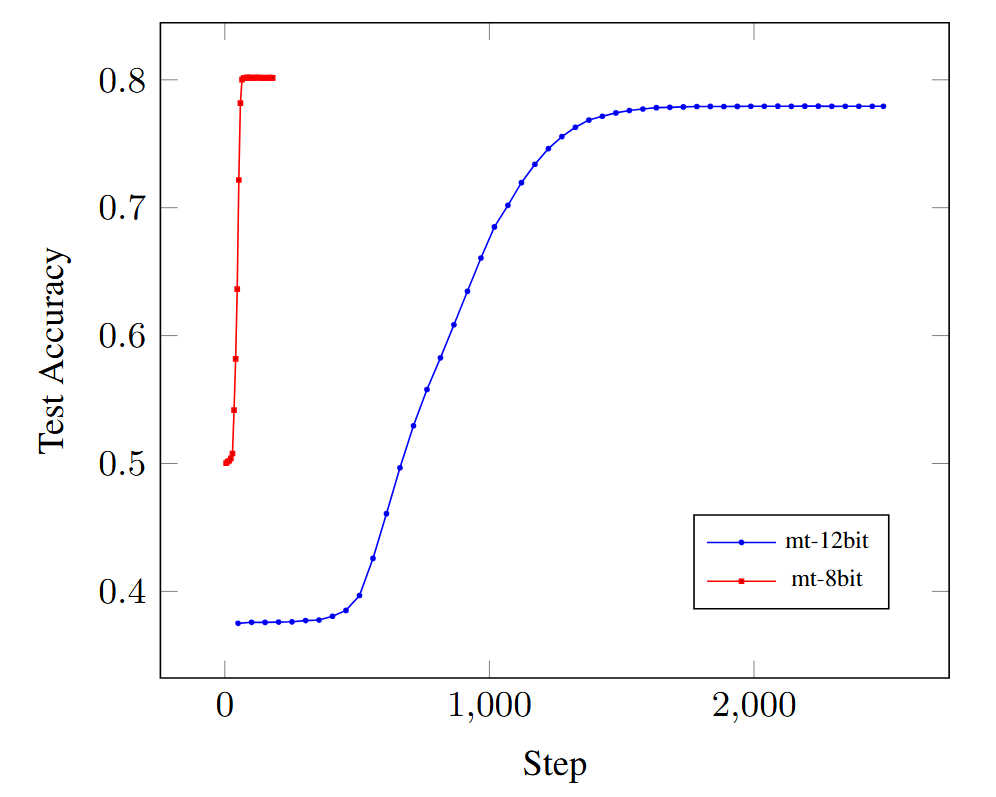}
	\caption{Prediction Accuracy}
	\label{fig:mt19937-prediction}
\end{figure}

\section{Related Work}
\label{sec:related}

Our work investigates the expressive power and limitations of Transformers,
with a specific focus on pseudo-random number generators. Transformer expressiveness research evolved in two main phases. Initially, ~\cite{yun2019transformers} proved Transformers' universal approximation capabilities for continuous sequence functions, later extended to Sparse~\cite{yun2020n} and Linear Transformers~\cite{alberti2023sumformer}. Recent work focuses on in-context learning, with ~\cite{garg2022can,dai2022can,von2023transformers} demonstrating basic function learning, while ~\cite{feng2024towards} showed Chain of Thought~\cite{wei2022chain} capabilities for linear equations and dynamic programming. Further studies revealed their ability to solve P-class~\cite{merrill2023expresssive} and P/poly problems~\cite{li2024chain}. For a complete review, see ~\cite{yang2024efficient}.

From the perspective of constructing PRNGs, researchers have extensively explored various machine learning architectures to develop novel generators. These approaches can be categorized into three main frameworks:
(1) Recurrent architectures have been widely adopted, leveraging their sequential processing capabilities. These include LSTM-based approaches that excel at capturing long-term dependencies~\cite{jeong2018pseudo,pasqualini2020pseudo,jeong2020pseudo}, traditional RNNs for basic sequence generation~\cite{desai2012pseudo,desai2012using}, Elman networks for their feedback mechanisms~\cite{desai2011pseudo}, and Hopfield networks known for their stability properties~\cite{hameed2018utilizing,lin2023triple,bao2024two}.
(2) Generative approaches using GANs have demonstrated promising results by employing adversarial training to improve random number quality~\cite{de2019pseudo,oak2019poster,wu2025gan,okada2023learned}. The generator-discriminator architecture helps ensure the generated numbers exhibit desired statistical properties.
(3) Reinforcement learning-based methods have emerged as an innovative approach, using reward mechanisms to optimize random number generation~\cite{pasqualini2020pseudo_rl,pasqualini2020pseudo,park2022dynamical,almardeny2022reinforcement}. These methods adapt their generation strategies based on feedback about the quality of produced sequences.
For a comprehensive review of these approaches and their comparative advantages, see ~\cite{wu2025pseudorandom}.


Note that we use transformer model to simulate specific step of LCG and MT PRNGs.
Our approach differs from previous work often to simulate the abstract model.
such as simulating the Turing machine~\cite{merrill2023expresssive} to solve
P class problems and simulating the circuits~\cite{li2024chain} to solve
P/poly class problems.
It is similar to the proof technique of~\cite{feng2024towards}.

\section{Conclusion and Future Directions}
\label{sec:conclusion}

Our theoretical analysis and experimental results indicate that the non-linear
transformations and robust sequence modeling capabilities inherent in Transformer
architectures can be effectively leveraged for both PRNGs construction and
security analysis. Specifically, our experiments reveal that Transformer-based
generators can produce high-quality pseudo-random sequences that satisfy the
majority of NIST statistical criteria. Furthermore, the generated sequences
demonstrate resistance to prediction attacks, suggesting that Transformers
have potential applications both as pseudo-random number generators and as
tools for evaluating PRNGs security.

Future research on Transformer-based PRNGs will explore three key directions, building upon neural networks' established strengths in parallel processing and versatile applications: (1) Enhancing Transformers' capabilities to generate more secure and efficient pseudo-random numbers by leveraging their parallel architecture, (2) Developing Transformer-based frameworks for comprehensive PRNG security testing and evaluation, and (3) Theoretically analyzing Transformers' boundaries in fitting highly nonlinear functions, particularly identifying which types of functions remain beyond their computational reach.

\bibliography{refs}

\begin{thebibliography}{39}
\providecommand{\natexlab}[1]{#1}
\providecommand{\url}[1]{\texttt{#1}}
\expandafter\ifx\csname urlstyle\endcsname\relax
  \providecommand{\doi}[1]{doi: #1}\else
  \providecommand{\doi}{doi: \begingroup \urlstyle{rm}\Url}\fi

\bibitem[Alberti et~al.(2023)Alberti, Dern, Thesing, and Kutyniok]{alberti2023sumformer}
Silas Alberti, Niclas Dern, Laura Thesing, and Gitta Kutyniok.
\newblock Sumformer: Universal approximation for efficient transformers.
\newblock In \emph{Topological, Algebraic and Geometric Learning Workshops 2023}, pages 72--86. PMLR, 2023.

\bibitem[Almardeny et~al.(2022)Almardeny, Benavoli, Boujnah, and Naredo]{almardeny2022reinforcement}
Yahya Almardeny, Alessio Benavoli, Noureddine Boujnah, and Enrique Naredo.
\newblock A reinforcement learning system for generating instantaneous quality random sequences.
\newblock \emph{IEEE Transactions on Artificial Intelligence}, 4\penalty0 (3):\penalty0 402--415, 2022.

\bibitem[Arora and Barak(2009)]{arora2009computational}
Sanjeev Arora and Boaz Barak.
\newblock \emph{Computational complexity: a modern approach}.
\newblock Cambridge University Press, 2009.

\bibitem[Bao et~al.(2024)Bao, Tang, Su, Bao, Chen, and Xu]{bao2024two}
Bocheng Bao, Haigang Tang, Yuanhui Su, Han Bao, Mo~Chen, and Quan Xu.
\newblock Two-dimensional discrete bi-neuron hopfield neural network with polyhedral hyperchaos.
\newblock \emph{IEEE Transactions on Circuits and Systems I: Regular Papers}, 2024.

\bibitem[Dai et~al.(2022)Dai, Sun, Dong, Hao, Ma, Sui, and Wei]{dai2022can}
Damai Dai, Yutao Sun, Li~Dong, Yaru Hao, Shuming Ma, Zhifang Sui, and Furu Wei.
\newblock Why can gpt learn in-context? language models implicitly perform gradient descent as meta-optimizers.
\newblock \emph{arXiv preprint arXiv:2212.10559}, 2022.

\bibitem[De~Bernardi et~al.(2019)De~Bernardi, Khouzani, and Malacaria]{de2019pseudo}
Marcello De~Bernardi, MHR Khouzani, and Pasquale Malacaria.
\newblock Pseudo-random number generation using generative adversarial networks.
\newblock In \emph{ECML PKDD 2018 Workshops: Nemesis 2018, UrbReas 2018, SoGood 2018, IWAISe 2018, and Green Data Mining 2018, Dublin, Ireland, September 10-14, 2018, Proceedings 18}, pages 191--200. Springer, 2019.

\bibitem[Desai et~al.(2012{\natexlab{a}})Desai, Patil, Deshmukh, and Rao]{desai2012pseudo}
V~Desai, Ravindra~T Patil, V~Deshmukh, and D~Rao.
\newblock Pseudo random number generator using time delay neural network.
\newblock \emph{World}, 2\penalty0 (10):\penalty0 165--169, 2012{\natexlab{a}}.

\bibitem[Desai et~al.(2012{\natexlab{b}})Desai, Patil, and Rao]{desai2012using}
Veena Desai, Ravindra Patil, and Dandina Rao.
\newblock Using layer recurrent neural network to generate pseudo random number sequences.
\newblock \emph{International Journal of Computer Science Issues}, 9\penalty0 (2):\penalty0 324--334, 2012{\natexlab{b}}.

\bibitem[Desai et~al.(2011)Desai, Deshmukh, and Rao]{desai2011pseudo}
VV~Desai, VB~Deshmukh, and DH~Rao.
\newblock Pseudo random number generator using elman neural network.
\newblock In \emph{2011 IEEE Recent Advances in Intelligent Computational Systems}, pages 251--254. IEEE, 2011.

\bibitem[Feng et~al.(2024)Feng, Zhang, Gu, Ye, He, and Wang]{feng2024towards}
Guhao Feng, Bohang Zhang, Yuntian Gu, Haotian Ye, Di~He, and Liwei Wang.
\newblock Towards revealing the mystery behind chain of thought: a theoretical perspective.
\newblock \emph{Advances in Neural Information Processing Systems}, 36, 2024.

\bibitem[Garg et~al.(2022)Garg, Tsipras, Liang, and Valiant]{garg2022can}
Shivam Garg, Dimitris Tsipras, Percy~S Liang, and Gregory Valiant.
\newblock What can transformers learn in-context? a case study of simple function classes.
\newblock \emph{Advances in Neural Information Processing Systems}, 35:\penalty0 30583--30598, 2022.

\bibitem[Gentle(2003)]{gentle2003random}
James~E Gentle.
\newblock \emph{Random number generation and Monte Carlo methods}, volume 381.
\newblock Springer, 2003.

\bibitem[Hameed and Ali(2018)]{hameed2018utilizing}
Sarab~M Hameed and Layla M~Mohammed Ali.
\newblock Utilizing hopfield neural network for pseudo-random number generator.
\newblock In \emph{2018 IEEE/ACS 15th International Conference on Computer Systems and Applications (AICCSA)}, pages 1--5. IEEE, 2018.

\bibitem[Jeong et~al.(2018)Jeong, Oh, Cho, and Choi]{jeong2018pseudo}
Young-Seob Jeong, Kyojoong Oh, Chung-Ki Cho, and Ho-Jin Choi.
\newblock Pseudo random number generation using lstms and irrational numbers.
\newblock In \emph{2018 IEEE international conference on big data and smart computing (BigComp)}, pages 541--544. IEEE, 2018.

\bibitem[Jeong et~al.(2020)Jeong, Oh, Cho, and Choi]{jeong2020pseudo}
Young-Seob Jeong, Kyo-Joong Oh, Chung-Ki Cho, and Ho-Jin Choi.
\newblock Pseudo-random number generation using lstms.
\newblock \emph{The Journal of Supercomputing}, 76:\penalty0 8324--8342, 2020.

\bibitem[Li et~al.(2024)Li, Liu, Zhou, and Ma]{li2024chain}
Zhiyuan Li, Hong Liu, Denny Zhou, and Tengyu Ma.
\newblock Chain of thought empowers transformers to solve inherently serial problems.
\newblock \emph{arXiv preprint arXiv:2402.12875}, 2024.

\bibitem[Lin et~al.(2023)Lin, Wang, Yu, Hong, Xu, and Sun]{lin2023triple}
Hairong Lin, Chunhua Wang, Fei Yu, Qinghui Hong, Cong Xu, and Yichuang Sun.
\newblock A triple-memristor hopfield neural network with space multistructure attractors and space initial-offset behaviors.
\newblock \emph{IEEE Transactions on Computer-Aided Design of Integrated Circuits and Systems}, 42\penalty0 (12):\penalty0 4948--4958, 2023.

\bibitem[Liu et~al.(2022)Liu, Ash, Goel, Krishnamurthy, and Zhang]{liu2022transformers}
Bingbin Liu, Jordan~T Ash, Surbhi Goel, Akshay Krishnamurthy, and Cyril Zhang.
\newblock Transformers learn shortcuts to automata.
\newblock \emph{arXiv preprint arXiv:2210.10749}, 2022.

\bibitem[MacLaren(1970)]{maclaren1970art}
M~Donald MacLaren.
\newblock The art of computer programming. volume 2: Seminumerical algorithms (donald e. knuth).
\newblock \emph{SIAM Review}, 12\penalty0 (2):\penalty0 306--308, 1970.

\bibitem[Matsumoto and Nishimura(1998)]{matsumoto1998mersenne}
Makoto Matsumoto and Takuji Nishimura.
\newblock Mersenne twister: a 623-dimensionally equidistributed uniform pseudo-random number generator.
\newblock \emph{ACM Transactions on Modeling and Computer Simulation (TOMACS)}, 8\penalty0 (1):\penalty0 3--30, 1998.

\bibitem[Merrill and Sabharwal(2023{\natexlab{a}})]{merrill2023expresssive}
William Merrill and Ashish Sabharwal.
\newblock The expresssive power of transformers with chain of thought.
\newblock \emph{arXiv preprint arXiv:2310.07923}, 2023{\natexlab{a}}.

\bibitem[Merrill and Sabharwal(2023{\natexlab{b}})]{merrill2023parallelism}
William Merrill and Ashish Sabharwal.
\newblock The parallelism tradeoff: Limitations of log-precision transformers.
\newblock \emph{Transactions of the Association for Computational Linguistics}, 11:\penalty0 531--545, 2023{\natexlab{b}}.

\bibitem[Oak et~al.(2019)Oak, Rahalkar, and Gujar]{oak2019poster}
Rajvardhan Oak, Chaitanya Rahalkar, and Dhaval Gujar.
\newblock Poster: Using generative adversarial networks for secure pseudorandom number generation.
\newblock In \emph{Proceedings of the 2019 ACM SIGSAC Conference on Computer and Communications Security}, pages 2597--2599, 2019.

\bibitem[Okada et~al.(2023)Okada, Endo, Yasuoka, and Kurabayashi]{okada2023learned}
Kiyoshiro Okada, Katsuhiro Endo, Kenji Yasuoka, and Shuichi Kurabayashi.
\newblock Learned pseudo-random number generator: Wgan-gp for generating statistically robust random numbers.
\newblock \emph{PloS one}, 18\penalty0 (6):\penalty0 e0287025, 2023.

\bibitem[Park et~al.(2022)Park, Kim, Kim, and Nam]{park2022dynamical}
Sungju Park, Kyungmin Kim, Keunjin Kim, and Choonsung Nam.
\newblock Dynamical pseudo-random number generator using reinforcement learning.
\newblock \emph{Applied Sciences}, 12\penalty0 (7):\penalty0 3377, 2022.

\bibitem[Pasqualini and Parton(2020{\natexlab{a}})]{pasqualini2020pseudo}
Luca Pasqualini and Maurizio Parton.
\newblock Pseudo random number generation through reinforcement learning and recurrent neural networks.
\newblock \emph{Algorithms}, 13\penalty0 (11):\penalty0 307, 2020{\natexlab{a}}.

\bibitem[Pasqualini and Parton(2020{\natexlab{b}})]{pasqualini2020pseudo_rl}
Luca Pasqualini and Maurizio Parton.
\newblock Pseudo random number generation: A reinforcement learning approach.
\newblock \emph{Procedia Computer Science}, 170:\penalty0 1122--1127, 2020{\natexlab{b}}.

\bibitem[Press(2007)]{press2007numerical}
William~H Press.
\newblock \emph{Numerical recipes 3rd edition: The art of scientific computing}.
\newblock Cambridge university press, 2007.

\bibitem[Radford et~al.(2019)Radford, Wu, Child, Luan, Amodei, Sutskever, et~al.]{radford2019language}
Alec Radford, Jeffrey Wu, Rewon Child, David Luan, Dario Amodei, Ilya Sutskever, et~al.
\newblock Language models are unsupervised multitask learners.
\newblock \emph{OpenAI blog}, 1\penalty0 (8):\penalty0 9, 2019.

\bibitem[Smid et~al.(2010)Smid, Leigh, Levenson, Vangel, DavidBanks, and JamesDray]{smid2010statistical}
Elaine~Barker Smid, Stefan Leigh, Mark Levenson, Mark Vangel, A~DavidBanks, and SanVo JamesDray.
\newblock A statistical test suite for random and pseudorandom number generators for cryptographic applications.
\newblock \emph{Her research interest includes Computer security, secure operating systems, Access control, Distributed systems, Intrusion detection systems}, 2010.

\bibitem[Tao et~al.(2025)Tao, Doshi, Kalra, He, and Barkeshli]{tao2025can}
Tao Tao, Darshil Doshi, Dayal~Singh Kalra, Tianyu He, and Maissam Barkeshli.
\newblock (how) can transformers predict pseudo-random numbers?
\newblock \emph{arXiv preprint arXiv:2502.10390}, 2025.

\bibitem[Vaswani(2017)]{vaswani2017attention}
A~Vaswani.
\newblock Attention is all you need.
\newblock \emph{Advances in Neural Information Processing Systems}, 2017.

\bibitem[Von~Oswald et~al.(2023)Von~Oswald, Niklasson, Randazzo, Sacramento, Mordvintsev, Zhmoginov, and Vladymyrov]{von2023transformers}
Johannes Von~Oswald, Eyvind Niklasson, Ettore Randazzo, Jo{\~a}o Sacramento, Alexander Mordvintsev, Andrey Zhmoginov, and Max Vladymyrov.
\newblock Transformers learn in-context by gradient descent.
\newblock In \emph{International Conference on Machine Learning}, pages 35151--35174. PMLR, 2023.

\bibitem[Wei et~al.(2022)Wei, Wang, Schuurmans, Bosma, Xia, Chi, Le, Zhou, et~al.]{wei2022chain}
Jason Wei, Xuezhi Wang, Dale Schuurmans, Maarten Bosma, Fei Xia, Ed~Chi, Quoc~V Le, Denny Zhou, et~al.
\newblock Chain-of-thought prompting elicits reasoning in large language models.
\newblock \emph{Advances in neural information processing systems}, 35:\penalty0 24824--24837, 2022.

\bibitem[Wu et~al.(2025{\natexlab{a}})Wu, Han, Zhang, Li, and Cui]{wu2025gan}
Xuguang Wu, Yiliang Han, Minqing Zhang, Yu~Li, and Su~Cui.
\newblock Gan-based pseudo random number generation optimized through genetic algorithms.
\newblock \emph{Complex \& Intelligent Systems}, 11\penalty0 (1):\penalty0 31, 2025{\natexlab{a}}.

\bibitem[Wu et~al.(2025{\natexlab{b}})Wu, Han, Zhang, Zhu, Cui, Wang, and Peng]{wu2025pseudorandom}
Xuguang Wu, Yiliang Han, Minqing Zhang, ShuaiShuai Zhu, Su~Cui, Yuanyuan Wang, and Yixuan Peng.
\newblock Pseudorandom number generators based on neural networks: a review.
\newblock \emph{Journal of King Saud University Computer and Information Sciences}, 37\penalty0 (3):\penalty0 1--27, 2025{\natexlab{b}}.

\bibitem[Yang et~al.(2024)Yang, Ackermann, He, Feng, Zhang, Feng, Ye, He, and Wang]{yang2024efficient}
Kai Yang, Jan Ackermann, Zhenyu He, Guhao Feng, Bohang Zhang, Yunzhen Feng, Qiwei Ye, Di~He, and Liwei Wang.
\newblock Do efficient transformers really save computation?
\newblock \emph{arXiv preprint arXiv:2402.13934}, 2024.

\bibitem[Yun et~al.(2019)Yun, Bhojanapalli, Rawat, Reddi, and Kumar]{yun2019transformers}
Chulhee Yun, Srinadh Bhojanapalli, Ankit~Singh Rawat, Sashank~J Reddi, and Sanjiv Kumar.
\newblock Are transformers universal approximators of sequence-to-sequence functions?
\newblock \emph{arXiv preprint arXiv:1912.10077}, 2019.

\bibitem[Yun et~al.(2020)Yun, Chang, Bhojanapalli, Rawat, Reddi, and Kumar]{yun2020n}
Chulhee Yun, Yin-Wen Chang, Srinadh Bhojanapalli, Ankit~Singh Rawat, Sashank Reddi, and Sanjiv Kumar.
\newblock O (n) connections are expressive enough: Universal approximability of sparse transformers.
\newblock \emph{Advances in Neural Information Processing Systems}, 33:\penalty0 13783--13794, 2020.

\end{thebibliography}

\appendix  
\section*{Appendix}
\label{sec:appendix}

\numberwithin{theorem}{section}  

The appendix is organized as follows: 
\begin{itemize}
    \item  \Cref{sec:log_precision}, we introduce the log-precision parameters of the Transformer. 
    \item  \Cref{sec:technical_lemmas}, we use FFN and Transformer to implement
some basic operations in the linear congruential generator algorithm and the Mersenne Twister algorithm.
    \item  \Cref{sec:proof_expressiveness}, we prove the main theorem of this paper, 
the expressiveness of the Transformer.
\end{itemize}

\section{Log-precision}  
\label{sec:log_precision}

In real computers, the parameters of transformers are 
finite precision floating-point numbers. Therefore, 
when conducting theoretical analysis, we need to adopt assumptions 
that align with practical situations.
There are two most common assumptions to store real numbers:
the fixed-point format and floating-point format.
Also, there are many methods to truncate the number to a certain precision.
such as round-to-the-nearest, round-to-zero, round-up, and round-down.

Our assumption is that the number is log-precision, which means
the number is represented by \( O(\log n) \) bits,
where \( n \) is the maximum length of the input sequence.
Our theoretical results also hold for all above truncation methods.
The analysis of truncation error is similar as \cite{feng2024towards}.
The function represented by transformers are continuous.
Therefore, the approximation error in a hidden
neuron will smoothly influence the next neuron.
This impact is bounded by the Lipschitz constant of the transformer
which depends on its architecture.
In particular, the softmax function is \( 1 \)-Lipschitz,
the GeLU function is \( 2 \)-Lipschitz,
and the Lipschitz constant of the Linear layer is 
determined by its weight parameters.
Combining these Lipschitz constants leads to results:
given a bound-depth log-precision Transformer of polynomial size,
when all parameters are bounded by \( O(poly(n)) \),
the truncation error is \( O(poly(\frac{1}{n})) \).
So if problem can be solved by a infinite precision Transformer of polynomial size,
it can also be solved by a log-precision Transformer of polynomial size.

\section{Technical Lemmas}
\label{sec:technical_lemmas}

In this section, we prove some technical lemmas for the MLP and Transformer.
Our proof is similar to Feng et al. \cite{feng2024towards}.
So we will use some lemmas 
in Feng et al. \cite{feng2024towards} and Yang et al. \cite{yang2024efficient}
without proof.

\subsection{Technical Lemmas for MLP}

First, we use MLP to approximate the multiplication function.

\begin{lemma}{(Lemma C.1 in \cite{feng2024towards})}
    \label{lemma:multiplication}
    Let \( f : \mathbb{R}^2 \to \mathbb{R} \) 
    be a two-layer MLP with GeLU activation, and the hidden dimension is 4. 
    Then, for any \( \epsilon > 0 \) and \( M > 0 \), 
    there exist MLP parameters with \( \ell_\infty \) norm upper bounded by 
    \( O(poly(M, 1/\epsilon)) \) such that 
    \[
    |f(a, b) - ab| \leq \epsilon
    \]
    holds for all \( a, b \in [-M, M] \).
\end{lemma}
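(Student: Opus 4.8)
The plan is to build the MLP from the polarization identity $ab = \tfrac14\big[(a+b)^2 - (a-b)^2\big]$ combined with a symmetric finite-difference approximation of the square function by the GeLU nonlinearity. First I would record the relevant local behavior of GeLU: writing $\sigma(x) = x\Phi(x)$ with $\Phi$ the standard normal CDF, one has $\sigma(0)=0$, $\sigma \in C^\infty$ with all derivatives globally bounded (each is a polynomial times the Gaussian density), and $\sigma''(0) = 2\phi(0) = \sqrt{2/\pi} =: c > 0$. Consequently the symmetric combination $u(x) := \sigma(x) + \sigma(-x)$ is even, satisfies $u(0)=0$, $u'(0)=u'''(0)=0$, and by Taylor's theorem $u(t) = c\,t^2 + R(t)$ with $|R(t)| \le C_4\,t^4$ for a universal constant $C_4$, the odd-order terms having cancelled by symmetry.

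Next I would exhibit the two layers explicitly. The hidden layer has width $4$: on input $(a,b)$ it forms the pre-activations $\delta(a+b)$, $-\delta(a+b)$, $\delta(a-b)$, $-\delta(a-b)$ for a small parameter $\delta>0$ to be chosen, so the first-layer weights are $\pm\delta$ and the bias is $0$. Applying $\sigma$ componentwise and then taking the second-layer linear combination with coefficients $\tfrac{1}{4c\delta^2}$ on the two "$(a+b)$" units and $-\tfrac{1}{4c\delta^2}$ on the two "$(a-b)$" units (and zero output bias, which is legitimate precisely because $\sigma(0)=0$) produces
\[
f(a,b) \;=\; \frac{u(\delta(a+b)) - u(\delta(a-b))}{4c\delta^2}.
\]
By the Taylor estimate this equals $\tfrac14\big[(a+b)^2 - (a-b)^2\big] \;+\; \tfrac{1}{4c\delta^2}\big(R(\delta(a+b)) - R(\delta(a-b))\big) \;=\; ab + E$, and for $a,b\in[-M,M]$ we have $|\delta(a\pm b)|\le 2M\delta$, hence $|E| \le \tfrac{2C_4}{4c\delta^2}(2M\delta)^4 = O\!\big(M^4\delta^2\big)$.

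Finally I would tune $\delta$. Choosing $\delta = \Theta\!\big(\sqrt{\epsilon}/M^2\big)$ forces $|E|\le\epsilon$ uniformly on $[-M,M]^2$. With this choice the first-layer weights have magnitude $\delta = O(\sqrt{\epsilon}/M^2)$ and the second-layer weights have magnitude $\tfrac{1}{4c\delta^2} = O(M^4/\epsilon)$, so every parameter is bounded by $O(\mathrm{poly}(M,1/\epsilon))$; the width is exactly $4$ and the depth is $2$, matching the statement.

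The main obstacle I anticipate is the error control rather than the construction. One must verify that GeLU genuinely has a nonvanishing second derivative at the origin (so the leading finite-difference term does not degenerate, which would blow up the second-layer scaling), and that its higher derivatives are uniformly bounded on $\mathbb{R}$ (so $R$ admits the clean quartic bound even though the arguments $\delta(a\pm b)$ are rescaled); then one must track carefully how the domain radius $M$ propagates through the rescaling $x\mapsto\delta x$ into both the error term $E$ and the parameter norms, to confirm the final $\mathrm{poly}(M,1/\epsilon)$ bound. If one preferred an activation with $g(0)\neq 0$, a compensating constant would simply be absorbed into the output bias, which is harmless but worth noting.
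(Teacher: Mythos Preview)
Your argument is correct and is the standard construction for this lemma. Note, however, that the present paper does not supply its own proof of this statement: it is quoted verbatim as Lemma~C.1 of Feng et al.\ and explicitly listed among the lemmas used ``without proof.'' So there is no in-paper proof to compare against; your polarization-identity-plus-symmetric-GeLU construction is exactly the intended approach in the cited source, and your error and parameter-norm bookkeeping is in order.
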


Next, we use MLP to approximate the ReLU activation function.

\begin{lemma}{(Lemma C.2 in \cite{feng2024towards})}
    \label{lemma:relu}
    Let \( g : \mathbb{R}^{d_1} \to \mathbb{R}^{d_2} \) 
    be a two-layer MLP with ReLU activation, 
    and all parameter values are upper bounded by \( M \). 
    Then, for any \( \epsilon > 0 \), 
    there exists a two-layer MLP \( f \) of the same size with GeLU activation 
    and parameters upper bounded by \( O(poly(M, 1/\epsilon)) \) in the \( \ell_\infty \) norm, 
    such that for all \( x \in \mathbb{R}^{d_1} \), we have 
    $\| f(x) - g(x) \|_\infty \leq \epsilon.$
\end{lemma}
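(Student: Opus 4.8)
The plan is to exploit the fact that the GeLU activation, \( \mathrm{GeLU}(x) = x\,\Phi(x) \) with \( \Phi \) the standard Gaussian CDF, is a \emph{globally} bounded perturbation of \( \mathrm{ReLU} \). First I would establish the estimate \( \sup_{x \in \mathbb{R}} \lvert \mathrm{GeLU}(x) - \mathrm{ReLU}(x) \rvert \le C \) for a universal constant \( C \): for \( x \ge 0 \) the gap equals \( -x\,\Phi(-x) \), and for \( x < 0 \) it equals \( x\,\Phi(x) \); in both regimes a Mills-ratio / Gaussian-tail bound of the form \( t\,\Phi(-t) \le e^{-t^2/2}/\sqrt{2\pi} \) for \( t>0 \) shows the gap is bounded by \( e^{-x^2/2}/\sqrt{2\pi} \), hence by a finite constant (it also vanishes at \( 0 \) and at \( \pm\infty \)).

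Next I would upgrade this to a \emph{uniform} approximation via the positive homogeneity of \( \mathrm{ReLU} \): since \( \mathrm{ReLU}(\lambda x) = \lambda\,\mathrm{ReLU}(x) \) for \( \lambda>0 \), dividing the previous bound by \( \lambda \) gives \( \bigl\lvert \tfrac{1}{\lambda}\mathrm{GeLU}(\lambda x) - \mathrm{ReLU}(x) \bigr\rvert \le C/\lambda \) for every \( x \in \mathbb{R} \). This is the crux of the construction, because it removes any need to restrict to a compact domain. I would then build \( f \) from \( g(x) = W_2\,\mathrm{ReLU}(W_1 x + b_1) + b_2 \) by rescaling: set \( W_1' = \lambda W_1 \), \( b_1' = \lambda b_1 \), \( W_2' = W_2/\lambda \), \( b_2' = b_2 \), so that \( f(x) = \tfrac{1}{\lambda} W_2\,\mathrm{GeLU}\bigl(\lambda(W_1 x + b_1)\bigr) + b_2 \).

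Applying the coordinatewise bound to the preactivation \( z = W_1 x + b_1 \) yields \( \tfrac{1}{\lambda}\mathrm{GeLU}(\lambda z) = \mathrm{ReLU}(z) + e \) with \( \lVert e \rVert_\infty \le C/\lambda \); propagating \( e \) through the second linear layer gives \( \lVert f(x) - g(x) \rVert_\infty = \lVert W_2 e \rVert_\infty \le d_h M C/\lambda \), where \( d_h \) is the (fixed) hidden width of \( g \). Choosing \( \lambda = \lceil d_h M C/\epsilon \rceil \) makes this at most \( \epsilon \) for all \( x \); moreover \( \lVert W_1' \rVert_\infty, \lVert b_1' \rVert_\infty \le \lambda M = O(\mathrm{poly}(M,1/\epsilon)) \) and \( \lVert W_2' \rVert_\infty \le M \), so \( f \) has the same size as \( g \) and all its parameters are \( O(\mathrm{poly}(M,1/\epsilon)) \), using \Cref{lemma:relu} is thereby established in reverse direction (here we go from ReLU to GeLU, which is all that is needed downstream). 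Note that \Cref{lemma:multiplication} is not invoked; this lemma is purely about activation substitution.

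The main obstacle is the global estimate in the first step: one must bound the GeLU–ReLU gap uniformly over the whole real line rather than on a compact set, which requires the Gaussian-tail argument on \emph{both} the positive and negative halves. Once that is in hand, the homogeneity trick and the error-propagation bookkeeping are routine. A secondary point to watch is that the \( d_h \) factor in the output error forces \( \lambda \), and hence the first-layer parameters, to scale with the hidden width; but since \( f \) need only match the size of \( g \), \( d_h \) is a constant and the bound stays polynomial in \( M \) and \( 1/\epsilon \).
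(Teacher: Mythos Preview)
The paper does not give its own proof of this lemma; it is quoted verbatim as Lemma~C.2 of Feng et al.\ and used without proof (the appendix explicitly says these lemmas are taken from \cite{feng2024towards} and \cite{yang2024efficient} without reproving them). Your argument is correct and is in fact the standard one: the Gaussian-tail estimate $t\,\Phi(-t)\le \phi(t)$ gives a uniform bound $\lvert \mathrm{GeLU}(x)-\mathrm{ReLU}(x)\rvert\le 1/\sqrt{2\pi}$ on all of $\mathbb{R}$, the positive homogeneity of $\mathrm{ReLU}$ then yields $\lvert \tfrac{1}{\lambda}\mathrm{GeLU}(\lambda x)-\mathrm{ReLU}(x)\rvert\le C/\lambda$ uniformly, and absorbing the scaling into the two linear layers ($W_1'=\lambda W_1$, $b_1'=\lambda b_1$, $W_2'=W_2/\lambda$) realises the approximation inside an MLP of identical size with parameters bounded by $\lambda M = O(d_h M^2/\epsilon)=O(\mathrm{poly}(M,1/\epsilon))$.
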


We use MLP to approximate the selection function.
\begin{lemma}{(Lemma C.4 in \cite{feng2024towards})}
    \label{lemma:selection}
    Define the selection function \( g : \mathbb{R}^d \times \mathbb{R}^d \times \mathbb{R} \to \mathbb{R}^d \) as follows:
    \[
    g(x, y, t) = 
    \begin{cases} 
    x & \text{if } t \geq 0, \\ 
    y & \text{if } t < 0. 
    \end{cases}
    \]
    
    Let \( f : \mathbb{R}^d \times \mathbb{R}^d \times \mathbb{R} \to \mathbb{R} \) be a two-layer MLP with GeLU activation, 
    and the hidden dimension is \( 2d + 2 \). 
    Then, for any \( \epsilon > 0, \alpha > 0, \) and \( M > 0 \), 
    there exist MLP parameters with \( \ell_\infty \) norm bounded by \( O(poly(M, 1/\alpha, 1/\epsilon)) \), 
    such that for all \( x \in [-M, M]^d, y \in [-M, M]^d, \) and \( t \in [-\infty, -\alpha] \cup [\alpha, +\infty] \), 
    we have 
    $\| f(x, y, t) - g(x, y, t) \|_\infty \leq \epsilon.$
\end{lemma}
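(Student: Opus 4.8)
The plan is to build the selection function \emph{exactly} with a one-hidden-layer ReLU network of width $2d+2$, and then transfer it to GeLU via \Cref{lemma:relu}. The crucial structural fact is that $t$ never lies in the open interval $(-\alpha,\alpha)$, so after the affine rescaling $t\mapsto\lambda t$ with $\lambda=C/\alpha$ (for a constant $C>M$, say $C=2M$) the two admissible regimes $t\ge\alpha$ and $t\le-\alpha$ become ``far positive'' and ``far negative,'' and ReLU units can be wired as hard gates, with the $\pm M$ offsets absorbing the range of $x_k,y_k\in[-M,M]$.

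Concretely, for each coordinate $k\in\{1,\dots,d\}$ I would use the two identities
\begin{align*}
	\mathrm{ReLU}(x_k+\lambda t+M-C)-\mathrm{ReLU}(\lambda t+M-C) &=
	\begin{cases} x_k & t\ge\alpha,\\ 0 & t\le-\alpha,\end{cases}\\
	\mathrm{ReLU}(y_k-\lambda t+M-C)-\mathrm{ReLU}(-\lambda t+M-C) &=
	\begin{cases} 0 & t\ge\alpha,\\ y_k & t\le-\alpha,\end{cases}
\end{align*}
each checked by noting that when the gate is ``open'' both ReLU arguments are nonnegative (at the boundary they equal $x_k+M\ge0$ and $M>0$, and they are monotone in $t$), so the difference telescopes, whereas when the gate is ``closed'' both arguments are $\le 2(M-C)<0$, so the difference vanishes. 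Summing the two lines reproduces $g(x_k,y_k,t)$ exactly. The two correction units $\mathrm{ReLU}(\pm\lambda t+M-C)$ depend only on $t$ and are therefore shared across all $d$ outputs; together with the $2d$ coordinate-specific units this uses precisely $2d+2$ hidden neurons, and the output layer merely takes the signed sum. All weights and biases are $O(\mathrm{poly}(M,1/\alpha))$.

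The final step is to replace each ReLU by GeLU using \Cref{lemma:relu}, which costs an additional $\ell_\infty$ error of at most $\epsilon$ and inflates the parameter bound to $O(\mathrm{poly}(M,1/\alpha,1/\epsilon))$, giving the claimed MLP. The one point deserving care --- and the only place the argument could fail --- is that $t$ is allowed to be unbounded, so individual hidden units can receive arbitrarily large inputs; one must verify that the telescoping cancellation above is genuinely exact for all such $t$ (which it is, by the monotonicity observation) and that the ReLU$\to$GeLU substitution in \Cref{lemma:relu} holds uniformly over all of $\mathbb{R}^{d_1}$ rather than only on a bounded box --- which is exactly how that lemma is stated. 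No further obstruction arises.
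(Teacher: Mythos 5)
The paper does not actually prove this lemma — it is imported verbatim as Lemma C.4 of Feng et al. (2024), which the paper explicitly cites without proof. Your self-contained argument is correct: I checked the four case analyses (both identities, both signs of $t$) and with $\lambda = 2M/\alpha$ and $C = 2M$ the telescoping is exact, the $\pm M$ bias shifts correctly absorb the range of $x_k, y_k \in [-M,M]$, the two $t$-only correction units are indeed sharable across coordinates giving exactly $2d+2$ hidden neurons, and the parameter bound $O(M/\alpha)$ at the ReLU stage combined with \Cref{lemma:relu} (which is stated uniformly over $\mathbb{R}^{d_1}$, so the unbounded $t$ causes no trouble) gives the claimed $O(\mathrm{poly}(M,1/\alpha,1/\epsilon))$ bound. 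This gated-ReLU-plus-correction construction is the standard way to realize a hard selection function at width $2d+2$ and is, in all likelihood, the same idea as in the cited source; your writeup simply makes explicit a derivation the paper leaves to the reference.

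One small stylistic note: you should state up front that the ReLU network you build computes $g$ \emph{exactly} on the admissible domain (not just approximately), so that the entire $\epsilon$ budget is spent only in the ReLU$\to$GeLU substitution via \Cref{lemma:relu}; this is implicit in your argument but worth saying plainly, since it is what makes the final error bound come out to $\epsilon$ rather than $2\epsilon$.
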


We use a two-layer MLP to approximate the modulus function.
First, we can rewrite the modulus operation as $ x \mod y = x - y \lfloor \frac{x}{y} \rfloor $.
Then we prove that an MLP can approximate the function $f(x, y) = x - y \lfloor \frac{x}{y} \rfloor$.

\begin{lemma}
    \label{lemma:modulus}
    Given positive number \( n \), and integer \( i \in [1, n^2] \),
    There exists a two-layer MLP \( f : \mathbb{R}^2 \to \mathbb{R} \) 
    with GeLU activation, and the hidden dimension is \( O( n ) \).
    Then exists a constant \( \epsilon > 0 \) such that 
    \( \|f(i, n) - (i \bmod n)\|_\infty \leq \epsilon \) and 
    parameters with \( \ell_\infty \) norm upper bounded by \( O(poly(M, 1/\epsilon)) \),
    where \( M \) is an upper bound for all parameter values.
\end{lemma}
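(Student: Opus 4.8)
The plan is to realize $i \bmod n = i - n\lfloor i/n \rfloor$ by building an MLP that, on input $(i,n)$, first detects which multiple of $n$ the integer $i$ falls into, and then subtracts that multiple. Since $i \in [1, n^2]$, the quotient $q = \lfloor i/n \rfloor$ lies in $\{0, 1, \dots, n\}$, so there are only $O(n)$ possible values; this is exactly why a hidden dimension of $O(n)$ suffices. First I would construct, for each candidate quotient $k \in \{0,\dots,n\}$, a ``bump'' neuron that is $1$ when $kn \le i < (k+1)n$ and $0$ otherwise. Each such indicator can be written as a difference of two shifted ReLU-threshold (step) functions of the scalar $i - kn$; because $i$ and $n$ are integers, the steps have a margin of at least $1$, so the standard sharp-sigmoid/ReLU approximation (in the style of \Cref{lemma:relu}, and analogous to the selection construction in \Cref{lemma:selection}) approximates each step to within any desired $\epsilon'$ with parameters of size $O(\mathrm{poly}(M,1/\epsilon'))$. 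Summing $k$ times the $k$-th bump gives an approximation of $q$; but since at most one bump is active, we can equivalently directly output $\sum_k (i - kn)\cdot[\text{bump}_k]$, which is exactly $i - qn = i \bmod n$ on the active interval.

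The one subtlety is that the term $(i-kn)\cdot[\text{bump}_k]$ is a product of the input-dependent quantity $i-kn$ with a neuron value, which a plain two-layer GeLU MLP cannot compute directly as a post-activation linear readout. I would handle this in one of two equivalent ways. Option A: first use the bump neurons to compute (an approximation of) the scalar quotient $q$ in the hidden layer, carry $i$ and $n$ through via ReLU identity gadgets, and then invoke the multiplication lemma (\Cref{lemma:multiplication}) to form $nq$ and subtract — but this costs an extra layer, so instead Option B: observe that within the active interval the desired output is an affine function of $i$ with slope $1$ and intercept $-kn$, so I can fold the intercept into the bias and exploit that the GeLU bump, multiplied by a large constant and then passed through the near-linear regime, already carries the needed $i$-dependence; more cleanly, I replace each pair of step neurons by a ``clipped ramp'' neuron $\mathrm{ReLU}(i-kn) - \mathrm{ReLU}(i-(k+1)n)$, whose sum over $k=0,\dots,q-1$ telescopes and, after subtracting $\mathrm{ReLU}(i - qn)$-type corrections, yields precisely $i - qn$. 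The standard ReLU$\to$GeLU conversion (\Cref{lemma:relu}) then gives a two-layer GeLU MLP of hidden width $O(n)$ with polynomially bounded parameters.

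Finally I would bound the error: each of the $O(n)$ ramp/step gadgets contributes at most $\epsilon'$ error, and since only $O(1)$ of them are "active" near a transition (the integrality margin of $1$ keeps all others saturated at their exact values $0$), the accumulated error is $O(\epsilon')$, so choosing $\epsilon' = \Theta(\epsilon)$ gives $\|f(i,n) - (i \bmod n)\|_\infty \le \epsilon$ with parameters of $\ell_\infty$ norm $O(\mathrm{poly}(M,1/\epsilon))$, as claimed. The main obstacle I anticipate is precisely the product-of-two-nonconstant-terms issue in the readout layer: getting the $i$-dependent slope and the $k$-dependent intercept to both appear correctly while staying within two layers and without an explicit multiplication gadget. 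Resolving it via the telescoping clipped-ramp construction is the crux of the argument; everything else is a routine sharpening-of-step-functions estimate together with the ReLU-to-GeLU transfer lemma already available in the paper.
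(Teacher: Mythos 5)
Your high-level decomposition matches the paper's: you rewrite $i \bmod n = i - n\lfloor i/n\rfloor$, you observe that the quotient $q=\lfloor i/n\rfloor$ ranges over only $O(n)$ values because $i\in[1,n^2]$, and you propose to detect $q$ with $O(n)$ step/bump neurons, each realized as a difference of two sharpened ReLUs and then converted to GeLU via \Cref{lemma:relu}. That much is exactly the paper's route (the paper writes $\lfloor i/n\rfloor$ as a sum of $O(n)$ indicators $\mathbb{I}[i\le jn]$, each approximated by a pair of ReLUs, citing Lemma~A.7 of~\cite{yang2024efficient}).

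Where you go off course is in the readout. You decide you must output $\sum_k (i-kn)\cdot[\text{bump}_k]$ and then spend most of the argument fighting the product-of-nonconstants obstacle this creates. But the obstacle is self-imposed: $n$ is a fixed parameter of the construction, not a genuinely free input --- the hidden width is $O(n)$ and the thresholds $kn$ are hardcoded in the first-layer weights, so the network is built \emph{for} a particular $n$. Consequently, once the hidden layer produces $q=\sum_k \text{step}_k(i)$ (a linear combination of the ReLU neurons) together with $i=\mathrm{ReLU}(i)$ (valid since $i\ge 1$), the output $i - n\,q$ is a plain linear readout with constant coefficient $-n$ on each step neuron. No multiplication gadget, no extra layer, no product. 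Your Option~A is therefore unnecessary, and your Option~B does not repair anything it needed to repair; worse, Option~B as written does not hold up. The telescoping sum $\sum_{k=0}^{q-1}\bigl[\mathrm{ReLU}(i-kn)-\mathrm{ReLU}(i-(k+1)n)\bigr]$ has a summation range that depends on $q$, which the fixed readout weights cannot express, and the unrestricted sum $\sum_{k\ge 0}\bigl[\mathrm{ReLU}(i-kn)-\mathrm{ReLU}(i-(k+1)n)\bigr]$ telescopes to $i$, not to $i\bmod n$; a single fixed linear combination $\sum_k c_k\bigl[\mathrm{ReLU}(i-kn)-\mathrm{ReLU}(i-(k+1)n)\bigr]$ cannot equal $i-qn$ on every interval because the required constraints ($c_q=1$ and $\sum_{k<q}c_k=0$ for all $q$) are mutually inconsistent. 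The fix is to drop the wide clipped ramps entirely and use the narrow step approximations you already described in the first paragraph, feed them into the linear readout $i-n\sum_k\text{step}_k(i)$, and close with the ReLU$\to$GeLU transfer of \Cref{lemma:relu}, keeping all parameters $O(\mathrm{poly}(M,1/\epsilon))$ because the sharpened step slopes are $O(n/\delta)$ with $\delta=\Theta(1)$ thanks to integrality of $i$. Your error analysis at the end is fine once the construction is corrected.
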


\begin{proof}
    The modulus operation can be rewritten as $ x \bmod y = x - y \lfloor \frac{x}{y} \rfloor $.
    Given that \( i \in [1, n^2] \), we know \( \lfloor \frac{i}{n} \rfloor \in [1, n] \).
    Following \cite{yang2024efficient}Lemma A.7, we can construct an MLP with hidden dimension \( O( n ) \) to
    get \(\lfloor \frac{i}{n} \rfloor\) as follows:
    \begin{align*}
    \lfloor \frac{i}{n} \rfloor 
    &= \left(\sum_{j=1}^{n} \mathbb{I}[i \leq jn]\right) - 1 \\
    &= \left(\sum_{j=1}^{n} ReLU[- 2i + 2 (jn + \frac{3}{4})] - ReLU[- 2i + 2 (jn + \frac{1}{4})]\right) - 1.
    \end{align*}
    Then we can calculate \( i - n \lfloor \frac{i}{n} \rfloor \) by an MLP with ReLU activation 
    with hidden dimension \( O( n ) \) and 
    parameters with \( \ell_\infty \) norm upper bounded by \( O(poly(M, 1/\epsilon)) \) according to \cite{yang2024efficient}Lemma A.7.
\end{proof}

\subsection{Technical Lemmas for Transformer}

In this section, we prove one attention layer can 
approximate boolean function.Such as AND, OR, NOT, XOR operation.
Before prove, we need some definitions.
Let's \( x_1, x_2, \cdots, x_d \) be a variable vector.
\( x_i = ( \phi(i), i, 1) \), where \( \phi(i) \in \{0, 1\}^d \) is a boolean vector,
\( i \in [n] \) is a index variable, 1 is a constant value.
First, we prove constant AND variable, constant OR variable, constant XOR variable 
can be approximated by one attention layer.
Then,  combine these operations and MLP lemma, we can prove 
variable AND variable, variable OR variable, 
variable XOR variable can be approximated by Transformer layer.

\subsubsection{Boolean Operations between Constants and Variables}

\begin{lemma}
    \label{lemma:constant_and_variable}
    Given a constant vector \( \mathbf{c} \in \{0, 1\}^d \) 
    and a variable vector \( \mathbf{x}_i = ( \phi(i), i, 1) \),
    there exists a attention layer that can approximate 
    the boolean function \( \mathbf{c} \land \phi(i) \).
\end{lemma}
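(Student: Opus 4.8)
The plan is to realize the bitwise AND of a fixed constant vector $\mathbf{c} \in \{0,1\}^d$ with the variable $\phi(i)$ using a single self-attention head whose value vectors already carry $\phi(i)$. Since $\mathbf{c}$ is a constant, the key observation is that $c_k \land \phi(i)_k = c_k \cdot \phi(i)_k$, i.e. componentwise the output is either $0$ (when $c_k = 0$) or $\phi(i)_k$ (when $c_k = 1$). So the operation is just a coordinate-wise masking by the constant $\mathbf{c}$, which is a linear map. First I would set up an attention head that attends essentially to the current position $i$ itself (using the position index stored in $\mathbf{x}_i$ to make the query-key inner product peak at $j = i$), so that the attention output at position $i$ recovers a vector proportional to $\mathbf{x}_i^{l-1}$; concretely, take $\mathbf{W}_q, \mathbf{W}_k$ so that the score $\langle \mathbf{q}_i, \mathbf{k}_j \rangle$ is maximized (and, after scaling by a large temperature absorbed into the weights, sharply peaked) at $j = i$, yielding $\mathrm{softmax}$ weights concentrated on $j=i$ up to error $O(\mathrm{poly}(1/n))$.

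Next I would choose the value matrix $\mathbf{W}_v$ to be the diagonal $0/1$ matrix that picks out the $\phi$-block of coordinates and zeroes out coordinate $k$ whenever $c_k = 0$; this implements the multiplication by $\mathbf{c}$ directly inside the value projection. After the residual connection and the linear output map, the head writes $\mathbf{c} \land \phi(i)$ (up to the additive approximation error) into a designated block of fresh coordinates, leaving $\phi(i)$, $i$, and the constant $1$ intact in their own coordinates. Because everything here is a fixed linear operation composed with one softmax, the parameters are $0/1$-valued except for the temperature factor used to sharpen attention, which is $O(\mathrm{poly}(n))$ and hence polynomially (indeed log-precision) bounded; no FFN is needed, matching the statement that an attention layer alone suffices.

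The main obstacle — really the only subtlety — is controlling the attention-sharpening error: the softmax never assigns mass exactly $1$ to position $j=i$, so the output picks up a small contamination from other positions' value vectors. I would handle this exactly as in the treatment of positional selection in Feng et al.\ \cite{feng2024towards}: make the gap between the top score and the runner-up score $\Theta(\log n)$ by scaling, so that the leaked mass is $O(1/\mathrm{poly}(n))$, which is below any fixed constant $\epsilon$ and, under the log-precision regime of \Cref{sec:log_precision}, is further swallowed by rounding; the Lipschitz-continuity argument there guarantees this error does not blow up through later layers. The remaining details — verifying the query/key construction yields a unique arg-max at $j=i$ given that the index $i$ is stored linearly, and checking the value-matrix masking does the right thing coordinatewise — are routine, so I would state them briefly and refer to the analogous constructions in \cite{feng2024towards} and \cite{yang2024efficient}.
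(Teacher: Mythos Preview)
Your key observation---that $c_k \land \phi(i)_k = c_k \cdot \phi(i)_k$, so bitwise AND with a fixed constant is a diagonal linear map that can be absorbed directly into the value projection $\mathbf{W}_V$---is exactly the paper's insight. The only difference lies in how the attention weights are handled: you sharpen the softmax onto $j=i$ via a large temperature and then bound the leakage error, whereas the paper simply sets $\mathbf{W}_Q = \mathbf{W}_K = \mathbf{0}$ (so all scores are identically zero and the softmax is uniform) and absorbs a compensating factor of $n$ into the diagonal of $\mathbf{W}_V$, taking the $k$-th diagonal entry to be $n\cdot c_k$ rather than $c_k$. This lets the paper skip your entire sharpening-error analysis. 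Your route is correct and arguably more careful about isolating the current token's contribution from the rest of the sequence; the paper's is more compact but does not spell out why uniform averaging together with the $n$ scaling actually recovers the desired single value.
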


\begin{proof}
    Because the AND operation with constants in Boolean algebra 
    is consistent with matrix multiplication, 
    we can construct the \( \mathbf{W}_V \) matrix as follows.
    \[
    \mathbf{W}_V = \begin{pmatrix}
    n*c_1 & 0 & \cdots & 0  & 0 & 0\\
    0 & n*c_2 & \cdots & 0 & 0 & 0\\
    \vdots & \vdots & \ddots & \vdots & \vdots & \vdots\\
    0 & 0 & \cdots & n*c_d  & 0 & 0\\
    0 & 0 & \cdots & 0 & 1 & 0\\
    0 & 0 & \cdots & 0 & 0 & 1\\
    \end{pmatrix}
    \]
    Then setting \( \mathbf{W}_Q = \mathbf{W}_K =  \mathbf{0}_{(d+2) \times (d+2)} \) and \( \mathbf{W}_O = I \),
    we can get the boolean function \(  \mathbf{c} \land \phi(i) = softmax((\mathbf{X} \mathbf{W}_Q)(\mathbf{X} \mathbf{W}_K)^T)\mathbf{X} \mathbf{W}_V \mathbf{W}_O \).
\end{proof}

\begin{corollary}
    \label{corollary:constant_right_left_shift_variable}
    Given a constant number \( c \in \mathbb{N} \)
    and a variable vector \( \mathbf{x}_i = ( \phi(i), i, 1) \),
    there exists a attention layer that can approximate 
    the boolean function \( \phi(i) \ll c \) and \( \phi(i) \gg c \).
\end{corollary}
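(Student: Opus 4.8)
The plan is to follow the pattern established in \Cref{lemma:constant_and_variable}: show that a right shift or left shift of a Boolean vector by a fixed constant $c$ is itself a linear map on the coordinates of $\phi(i)$, and therefore can be implemented by a single attention head whose value matrix $\mathbf{W}_V$ realizes this linear map while the attention weights are forced to be uniform (or, more conveniently, to attend only to the current position). First I would write $\phi(i) = (b_1, b_2, \ldots, b_d)$ and recall the bit-level definition of the shifts used in the MT algorithm: $\phi(i) \gg c$ sends the bit in position $j$ to position $j+c$ (discarding the low $c$ bits and padding the high end with zeros), and $\phi(i) \ll c$ does the reverse. In both cases the output coordinate $(\phi(i) \lessgtr c)_j$ equals exactly one input coordinate $b_{j \mp c}$ when that index lies in $[1,d]$, and equals $0$ otherwise. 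Hence the shift is computed by a $0$--$1$ permutation-like matrix $\mathbf{S}$ (a shifted identity, with the overflow rows/columns zeroed out), acting on the $\phi$-block of the token representation.

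The key steps, in order, would be: (1) fix the target bit-width $d$ and the constant $c$, and define the $d \times d$ shift matrix $\mathbf{S}$ explicitly (super-/sub-diagonal of ones, truncated at the boundary); (2) embed $\mathbf{S}$ into a $(d+2)\times(d+2)$ value matrix $\mathbf{W}_V$ exactly as in \Cref{lemma:constant_and_variable}, placing $n\cdot\mathbf{S}$ in the $\phi$-block and the identity in the last two coordinates so the index variable $i$ and the constant $1$ are preserved (the factor $n$ compensates for the $1/n$ averaging when the softmax attends uniformly over $n$ tokens, or one instead sets $\mathbf{W}_Q=\mathbf{W}_K=\mathbf{0}$ so the head simply reads its own row after accounting for the uniform weighting); (3) set $\mathbf{W}_Q=\mathbf{W}_K=\mathbf{0}_{(d+2)\times(d+2)}$ and $\mathbf{W}_O=I$, so that $\mathrm{softmax}((\mathbf{X}\mathbf{W}_Q)(\mathbf{X}\mathbf{W}_K)^T)\mathbf{X}\mathbf{W}_V\mathbf{W}_O$ reduces to the desired linear image $\phi(i)\cdot\mathbf{S}^T = \phi(i)\gg c$ (resp. $\ll c$); (4) observe that all entries of $\mathbf{W}_V$ are $0$, $1$, or $n$, so the $\ell_\infty$ parameter norm is $O(n)$, i.e. polynomially bounded, and the construction uses one layer and one head, matching the statement.

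I do not expect a genuine obstacle here, since shifting is linear over the bit coordinates and the residual/uniform-attention trick of the preceding lemma transfers verbatim; the only point requiring a little care is the bookkeeping at the boundary bits (ensuring the $c$ overflowed positions are truly zeroed rather than wrapped around, which distinguishes a logical shift from a rotation) and, as in the other Boolean lemmas, absorbing the uniform $1/n$ softmax averaging into the scaling of $\mathbf{W}_V$ so that the output lands within the stated $\epsilon$ of an exact $0$--$1$ vector after the subsequent rounding. Since the statement is phrased as a corollary of \Cref{lemma:constant_and_variable}, I would keep the proof to a few lines: exhibit $\mathbf{S}$, note it is the composition of the ``$\land$ with a suitable mask constant'' step already handled by that lemma with a coordinate relabeling, and conclude that the same single attention layer — with $\mathbf{W}_V$ replaced by the shifted matrix — approximates $\phi(i)\ll c$ and $\phi(i)\gg c$.
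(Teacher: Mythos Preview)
Your approach is correct and in fact more careful than the paper's own argument. The paper reduces the corollary to \Cref{lemma:constant_and_variable} via the one-line identities $\phi(i) \ll c = \phi(i) \land (c \ll 1)$ and $\phi(i) \gg c = \phi(i) \land (c \gg 1)$, but as written these are false: an AND with any fixed mask can only zero out bits, never move them to a different coordinate, so no choice of constant mask realizes a shift. Your direct route---observing that a logical shift by a fixed amount is a linear map on the bit coordinates given by a truncated off-diagonal identity $\mathbf{S}$, and then dropping $n\cdot\mathbf{S}$ into the $\phi$-block of $\mathbf{W}_V$ exactly as the AND lemma does with its diagonal mask---is the right fix. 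It reuses the same uniform-attention scaffolding ($\mathbf{W}_Q=\mathbf{W}_K=\mathbf{0}$, $\mathbf{W}_O=I$, the $n$-scaling to undo the softmax averaging) that the paper has already established, so the single-layer, single-head, $O(n)$-norm bounds carry over unchanged. What the paper's one-liner is presumably \emph{trying} to say is exactly your closing remark that a shift is ``the AND-style $\mathbf{W}_V$ construction with a coordinate relabeling''; you have made that explicit, including the boundary zeroing that distinguishes a shift from a rotation, where the paper's stated identity does not.
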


\begin{proof}
    Because \( \phi(i) \ll c = \phi(i) \land (c \ll 1) \) and \( \phi(i) \gg c = \phi(i) \land (c \gg 1) \),
    so we can use the~\Cref{lemma:constant_and_variable} to prove this corollary.
\end{proof}

\begin{lemma}
    \label{lemma:not_variable}
    Given a variable vector \( \mathbf{x}_i = ( \phi(i), i, 1) \),
    there exists a attention layer that can approximate 
        the boolean function \( \neg \phi(i) \).
\end{lemma}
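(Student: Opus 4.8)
The plan is to mimic the construction in \Cref{lemma:constant_and_variable}, since $\neg\phi(i)$ is just a bitwise affine transformation of $\phi(i)$: namely $\neg\phi(i) = \mathbf{1} - \phi(i)$ where $\mathbf{1}$ is the all-ones vector of length $d$. The key observation is that an attention layer followed by (or including) the value/output projection can realize affine maps of the input, because the softmax over a single position (or over uniform logits in the autoregressive case) reduces to an averaging operation that we can control, and the constant $1$ coordinate of the variable vector $\mathbf{x}_i = (\phi(i), i, 1)$ lets us inject the additive $\mathbf{1}$ term.

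Concretely, first I would set $\mathbf{W}_Q = \mathbf{W}_K = \mathbf{0}$ so that every attention logit is equal, making the softmax output the uniform average of the value vectors; in the single-token or appropriately masked setting this just returns the value vector at the current position (or an average we can rescale). Second, I would design $\mathbf{W}_V$ so that the $\phi$-block is mapped by $-I_d$ (negating each bit) while the constant coordinate contributes $+1$ to each of the first $d$ output coordinates — this is achieved by placing $-1$ on the diagonal of the $\phi$-block and $+1$ in the column of the constant coordinate for each of the first $d$ rows, and identity entries to carry through $i$ and the constant. Third, I would take $\mathbf{W}_O = I$ (up to the scaling needed to undo any averaging factor), so that the layer outputs $\mathbf{1} - \phi(i) = \neg\phi(i)$ in the $\phi$-block while preserving $i$ and $1$. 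Since all entries of $\mathbf{W}_V$ are $O(1)$ (or $O(n)$ if we need to compensate for averaging over $n$ positions, exactly as in \Cref{lemma:constant_and_variable}), the parameter bound is polynomial.

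The main obstacle I anticipate is handling the softmax averaging cleanly: in the autoregressive decoder, attention at position $i$ is over positions $1,\dots,i$, so with zero query/key weights the output is $\frac{1}{i}\sum_{j\le i}\mathbf{v}_j$ rather than $\mathbf{v}_i$ alone. To get $\neg\phi(i)$ specifically at position $i$, I would either (a) route $\phi(i)$ through the residual stream and use the attention layer only to add the constant correction $\mathbf{1}$ and negate — i.e., exploit the residual connection $\mathrm{Attention}^l(\mathbf{x}_i^{l-1}) + \mathbf{x}_i^{l-1}$ so that the attention block need only output $\mathbf{1} - 2\phi(i)$ and the residual supplies the missing $\phi(i)$ — or (b) use a positional-encoding trick in $\mathbf{W}_Q,\mathbf{W}_K$ to make the logit at $j=i$ dominate, so the softmax concentrates on the current token. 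Option (a) is cleaner and matches how the surrounding lemmas are phrased, so I would present that: the attention value map sends $\phi(i) \mapsto -2\phi(i)$ and the constant to $+\mathbf{1}$, the residual adds back $\phi(i)$, and the net effect is $\mathbf{1} - \phi(i)$. This also makes the $\ell_\infty$ parameter bound trivially $O(1)$, and the approximation is in fact exact (no $\epsilon$ needed), consistent with the fact that NOT is exactly representable — as already noted in the circuit-complexity preliminaries where NOTs are available at no cost.
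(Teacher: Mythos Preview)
Your proposal is correct and follows essentially the same construction as the paper: set $\mathbf{W}_Q = \mathbf{W}_K = \mathbf{0}$, $\mathbf{W}_O = I$, and use $\mathbf{W}_V$ to realize the affine map $\phi(i) \mapsto \mathbf{1} - \phi(i)$ by negating the $\phi$-block and injecting $+1$ from the constant coordinate. The only cosmetic difference is that the paper compensates for the uniform-softmax averaging by scaling the relevant entries of $\mathbf{W}_V$ by $n$ (so the diagonal carries $-n$ and the constant row carries $n$), whereas you prefer to route through the residual connection; both variants work and your discussion of the averaging issue is in fact more careful than the paper's.
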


\begin{proof}
    Because \( \neg \mathbf{x}_i = 1 - \mathbf{x}_i \),so 
    we can construct the \( \mathbf{W}_V \) matrix as follows.
    \[
    \mathbf{W}_V = \begin{pmatrix}
    -n & 0 & \cdots & 0 & 0 & 0\\
    0 & -n & \cdots & 0 & 0 & 0\\
    \vdots & \vdots & \ddots & \vdots & \vdots & \vdots\\
    0 & 0 & \cdots & -n & 0 & 0\\
    0 & 0 & \cdots & 0 & 1 & 0\\
    n & n & \cdots & n & 0 & 1\\
    \end{pmatrix}
    \]
    Then setting \( \mathbf{W}_Q = \mathbf{W}_K =  \mathbf{0}_{(d+2) \times (d+2)} \) and \( \mathbf{W}_O = I \),
    we can get the boolean function \( \neg \phi(i) = softmax((\mathbf{X} \mathbf{W}_Q)(\mathbf{X} \mathbf{W}_K)^T)\mathbf{X} \mathbf{W}_V \mathbf{W}_O \).
\end{proof}

\begin{lemma}
    \label{lemma:constant_or_variable}
    Given a constant vector \( \mathbf{c} \in \{0, 1\}^d \) 
    and a variable vector \( \mathbf{x}_i = ( \phi(i), i, 1) \),
    there exists a attention layer that can approximate 
    the boolean function \( \mathbf{c} \lor \phi(i) \).
\end{lemma}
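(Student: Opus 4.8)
The plan is to exploit the fact that disjunction with a fixed bit is \emph{affine} in the variable bit, so that --- unlike the variable-versus-variable case of \Cref{lemma:variable_or_variable} --- no feed-forward network is needed and a single value projection suffices. For each coordinate $k$ one has
\[
(\mathbf{c} \lor \phi(i))_k = c_k + (1-c_k)\,\phi(i)_k = \neg\bigl(\neg c_k \land \neg\phi(i)_k\bigr),
\]
because $c_k \in \{0,1\}$ forces the output to $1$ when $c_k = 1$ and leaves it equal to $\phi(i)_k$ when $c_k = 0$. First I would therefore reduce the lemma to realizing this affine map with the attention module alone, reusing the template of \Cref{lemma:constant_and_variable} and \Cref{lemma:not_variable}.

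Concretely, I would set $\mathbf{W}_Q = \mathbf{W}_K = \mathbf{0}_{(d+2) \times (d+2)}$ so the softmax weights become uniform, take $\mathbf{W}_O = I$, and choose the value matrix
\[
\mathbf{W}_V = \begin{pmatrix}
n(1-c_1) & 0 & \cdots & 0 & 0 & 0\\
0 & n(1-c_2) & \cdots & 0 & 0 & 0\\
\vdots & \vdots & \ddots & \vdots & \vdots & \vdots\\
0 & 0 & \cdots & n(1-c_d) & 0 & 0\\
0 & 0 & \cdots & 0 & 1 & 0\\
c_1 & c_2 & \cdots & c_d & 0 & 1\\
\end{pmatrix},
\]
whose $k$-th diagonal entry passes the bit $\phi(i)_k$ through when $c_k = 0$ and annihilates it when $c_k = 1$, while the constant-$1$ row adds back $c_k$ into output coordinate $k$; the trailing $2\times 2$ identity block preserves the index and the constant, so the output is again of the form $(\psi(i), i, 1)$ with $\psi(i) = \mathbf{c}\lor\phi(i)$. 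The $n$ factor on the first $d$ diagonal entries is there to cancel the $1/n$ (or $1/i$ under causal masking) normalization of the uniform softmax, exactly as in the AND and NOT constructions. Verifying correctness is then a two-case check on $c_k$, and since the inputs are Boolean the identity holds \emph{exactly}, so any $\epsilon>0$ works and all parameters are $O(n) = O(\mathrm{poly}(n))$, hence $O(\mathrm{poly}(M,1/\epsilon))$.

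An equivalent route, which I would mention as a sanity check, is De Morgan: $\neg\mathbf{c}$ is still a constant vector, $\neg\phi(i) = \mathbf{1}-\phi(i)$ is affine in the variable, $\neg\mathbf{c}\land\neg\phi(i)$ is diagonal in these coordinates as in \Cref{lemma:constant_and_variable}, and the outer negation is one more affine map as in \Cref{lemma:not_variable}; composing three affine maps yields a single affine map, which again collapses to one value projection. The only real obstacle --- and it is minor --- is the bookkeeping: making sure the De Morgan chain genuinely fits in \emph{one} attention layer (it does, precisely because every map involved is affine) and getting the softmax-normalization constants right, both of which are handled in the same way as the preceding constant-versus-variable lemmas.
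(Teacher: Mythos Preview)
Your proposal is correct, and your De Morgan ``sanity check'' is exactly the paper's proof: the paper simply writes $\mathbf{c} \lor \phi(i) = \neg(\neg\mathbf{c} \land \neg\phi(i))$ and cites \Cref{lemma:not_variable} and \Cref{lemma:constant_and_variable} without ever writing out the resulting matrix. Your explicit $\mathbf{W}_V$ is a welcome elaboration of that same idea; one small slip is that, to match the scaling convention in \Cref{lemma:not_variable}, the last-row entries feeding the first $d$ output coordinates should be $n c_k$ rather than $c_k$, so that the $1/n$ softmax normalization cancels uniformly across both terms of $c_k + (1-c_k)\phi(i)_k$.
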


\begin{proof}
    Because \( \mathbf{c} \lor \phi(i) = \neg (\neg \mathbf{c} \land \neg \phi(i)) \),
    so we can use the~\Cref{lemma:not_variable} and~\Cref{lemma:constant_and_variable} to construct special attention layer
    to prove this lemma.
\end{proof}

\begin{lemma}
    \label{lemma:constant_xor_variable}
    Given a constant vector \( \mathbf{c} \in \{0, 1\}^d \) 
    and a variable vector \( \mathbf{x}_i = ( \phi(i), i, 1) \),
    there exists a attention layer that can approximate 
    the boolean function \( \mathbf{c} \oplus \phi(i) \).
\end{lemma}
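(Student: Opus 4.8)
The plan is to reduce XOR against a constant to an affine transformation of $\phi(i)$ and to realize that transformation with the value matrix of a single attention layer, exactly mirroring the construction used for \Cref{lemma:not_variable}. The key observation is that, because $\mathbf{c}$ is fixed, the $k$-th coordinate of $\mathbf{c}\oplus\phi(i)$ equals $\phi(i)_k$ whenever $c_k=0$ and $1-\phi(i)_k$ whenever $c_k=1$; in both cases it can be written uniformly as $c_k+(1-2c_k)\,\phi(i)_k$. Hence $\mathbf{c}\oplus\phi(i)$ is obtained from $\phi(i)$ by a diagonal sign flip (multiplier $1-2c_k$ on coordinate $k$) composed with a constant offset $\mathbf{c}$, and both ingredients can be absorbed into $\mathbf{W}_V$, just as negation was in \Cref{lemma:not_variable}.

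Concretely, I would set $\mathbf{W}_Q=\mathbf{W}_K=\mathbf{0}_{(d+2)\times(d+2)}$ so that the (causal) softmax weights are uniform, take $\mathbf{W}_O=I$, and build $\mathbf{W}_V$ in the same block form as in \Cref{lemma:not_variable}: the diagonal block acting on the $\phi$ coordinates gets entries $n(1-2c_k)$ (that is, $+n$ where $c_k=0$ and $-n$ where $c_k=1$), the row corresponding to the constant-$1$ coordinate contributes $n\,c_k$ into output coordinate $k$, and the $i$- and $1$-coordinates are passed through by the identity. The leading factor $n$ cancels the $1/n$ normalization coming from the uniform softmax, so the layer outputs $\bigl(c_k+(1-2c_k)\phi(i)_k\bigr)_{k\le d}=\mathbf{c}\oplus\phi(i)$ together with $(i,1)$ unchanged, up to the truncation error already controlled in \Cref{sec:log_precision}. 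Every nonzero entry of $\mathbf{W}_V$ has magnitude at most $n$, so the parameters have $\ell_\infty$ norm $O(\mathrm{poly}(M,1/\epsilon))$ as required.

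An alternative route, parallel to the proof of \Cref{lemma:constant_or_variable}, is to use the identity $\mathbf{c}\oplus\phi(i)=(\mathbf{c}\land\neg\phi(i))\lor(\neg\mathbf{c}\land\phi(i))$ and compose \Cref{lemma:constant_and_variable}, \Cref{lemma:not_variable}, and \Cref{lemma:constant_or_variable}; this is conceptually cleaner but spends a few extra layers, whereas the affine construction fits in one attention layer. The point that needs care — and the main obstacle — is ensuring the attention layer acts as a pointwise map on the current token rather than averaging the $\phi$-blocks of all preceding positions; this is handled by the same $\pm n$ scaling device already used in \Cref{lemma:not_variable}, and one must verify that it yields the correct value at every position $i$ and that the residual connection does not corrupt the output. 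Once this is checked, the log-precision guarantee follows verbatim from the Lipschitz/truncation argument of \Cref{sec:log_precision}.
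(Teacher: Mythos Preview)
Your proposal is correct. The paper's own proof actually takes your \emph{alternative} route: it invokes the identity $\mathbf{c}\oplus\phi(i)=(\mathbf{c}\land\neg\phi(i))\lor(\neg\mathbf{c}\land\phi(i))$ and appeals to \Cref{lemma:constant_and_variable}, \Cref{lemma:not_variable}, and \Cref{lemma:constant_or_variable}, without spelling out how the composition collapses back into a single attention layer. Your primary approach---observing that XOR against a fixed constant is the affine map $\phi(i)_k\mapsto c_k+(1-2c_k)\phi(i)_k$ and packing that directly into $\mathbf{W}_V$---is more transparent and more economical: it matches the lemma's ``one attention layer'' claim on its face, reuses exactly the $\pm n$ scaling trick from \Cref{lemma:not_variable}, and avoids the bookkeeping of composing three sublemmas. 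The paper's decomposition buys modularity (the same Boolean identity generalizes to the variable--variable case, where no affine shortcut exists), but for constant--variable XOR your direct construction is the cleaner argument.
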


\begin{proof}
    Because \( \mathbf{c} \oplus \phi(i) = (\mathbf{c} \land \neg \phi(i)) \lor (\neg \mathbf{c} \land \phi(i)) \),
    so we can use the~\Cref{lemma:constant_and_variable}, \Cref{lemma:not_variable}, and~\Cref{lemma:constant_or_variable} 
    to construct special attention layer to prove this lemma.
\end{proof}

By combining the lemma of the MLP and the lemma of attention, 
we can achieve Boolean operations between variables.

\subsubsection{Boolean Operations between Variables}

\begin{lemma}
    \label{lemma:variable_and_variable}
    For a given variable vector \( ( \phi(i), \phi(j), i, 1 ) \),
    there exists a Transformer layer that approximates 
    function \( f(i, j) \) with a constant \( \epsilon > 0 \) such that 
    \( \|f(i, j) - \phi(i) \land \phi(j)\|_\infty \leq \epsilon \).
    The parameters of this layer exhibit an \( \ell_\infty \) norm bounded by 
    \( O(poly(M, 1/\epsilon)) \), where M denotes the upper bound for all parameter values.
\end{lemma}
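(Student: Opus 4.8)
The plan is to reduce variable-variable AND to the primitives already established: constant-variable Boolean operations (realizable by a single attention layer, \Cref{lemma:constant_and_variable} and its companions) together with the multiplication MLP (\Cref{lemma:multiplication}) and the ReLU$\to$GeLU conversion (\Cref{lemma:relu}). The key observation is the bitwise identity $\phi(i) \land \phi(j) = \phi(i) \cdot \phi(j)$ when these are viewed coordinatewise as $\{0,1\}$-valued vectors; so it suffices to compute, in parallel across the $d$ coordinates, the product of the two bit vectors, and the FFN block of a Transformer layer can do exactly this.

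First I would fix the input representation: the token vector carries the blocks $\phi(i), \phi(j) \in \{0,1\}^d$ along with the index $i$ and the constant $1$, padded into $\mathbb{R}^d$ with $d = O(n)$. I would use the attention sublayer trivially — set $\mathbf{W}_Q = \mathbf{W}_K = \mathbf{0}$ so attention outputs a fixed (averaged) value, or simply route the residual stream through so that both $\phi(i)$ and $\phi(j)$ are available unchanged to the FFN input (the construction here is local to a single token, so no genuine mixing across positions is needed). Second, I would invoke \Cref{lemma:multiplication}: for each coordinate $k \in [d]$, instantiate a hidden-width-$4$ GeLU sub-MLP computing $\phi(i)_k \cdot \phi(j)_k$ to accuracy $\epsilon$, with $M = O(1)$ since all inputs are bits; stacking these $d$ copies gives an FFN of hidden dimension $O(d) = O(n)$ and $\ell_\infty$ parameter norm $O(\mathrm{poly}(M, 1/\epsilon))$. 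Finally I would check that the composition error stays bounded by $\epsilon$ — since the bits are exactly $0$ or $1$ and GeLU is $2$-Lipschitz, the per-coordinate errors do not compound across coordinates (they are computed independently), and the $\ell_\infty$ norm of the output error is the max of the per-coordinate errors, hence $\le \epsilon$; I would also absorb the ReLU-to-GeLU step via \Cref{lemma:relu} if the cleanest form of the sub-MLP is stated with ReLU.

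The main obstacle — really the only subtlety — is bookkeeping the residual-stream layout so that the single Transformer layer's attention sublayer faithfully presents both $\phi(i)$ and $\phi(j)$ to the FFN without destroying the index/constant coordinates, and confirming that the definition of a "layer" in \Cref{sec:preliminaries} (which applies FFN to the attention output, with residual connections) actually permits feeding the raw input through; one handles this by choosing the attention value map to be a projection that preserves the needed coordinates and relying on the residual connection, exactly as in the constant-variable lemmas. Everything else is a direct assembly of \Cref{lemma:multiplication} and \Cref{lemma:relu} coordinatewise, so no new estimates are required.
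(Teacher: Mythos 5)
Your proposal is correct, but it takes a genuinely different route from the paper's. The paper proves this lemma by expressing AND through a ReLU threshold: it uses the identity $\phi(i) \land \phi(j) = \mathrm{ReLU}(\phi(i) + \phi(j) - 1)$, constructs the attention value matrix $\mathbf{V}$ to produce the linear combination $\phi(i) + \phi(j) - 1$ (appended to the residual stream), and then invokes \Cref{lemma:relu} so that a two-layer GeLU MLP approximates the ReLU within $\epsilon$. You instead use the bitwise identity $\phi(i) \land \phi(j) = \phi(i)\cdot\phi(j)$, make the attention sublayer a pure pass-through, and stack $d$ copies of the hidden-width-$4$ multiplication MLP from \Cref{lemma:multiplication} to do coordinatewise products. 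Both routes reach the same bound on the FFN hidden dimension, $O(d)=O(n)$, and the same $O(\mathrm{poly}(M,1/\epsilon))$ parameter bound, and both are sound: the per-coordinate errors are independent and the $\ell_\infty$ output error is their max. What the paper's ReLU approach buys is that the FFN need only realize a single ReLU per coordinate rather than a full bilinear gadget, which is slightly leaner and keeps AND on the same footing as OR (which the paper handles via the selection/saturation lemma). What your multiplication approach buys is that it is more direct conceptually (AND on bits \emph{is} multiplication) and mirrors the paper's own proof of the XOR lemma, where it uses $\phi(i)\oplus\phi(j)=(\phi(i)-\phi(j))^2$ together with \Cref{lemma:multiplication}; in that sense your proof unifies AND and XOR under one lemma. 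One minor remark: you suggest invoking \Cref{lemma:relu} at the end ``if the cleanest form of the sub-MLP is stated with ReLU,'' but \Cref{lemma:multiplication} already yields a GeLU MLP directly, so that step is unnecessary in your route.
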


\begin{proof}
    Because \( \phi(i) \land \phi(j) = ReLU(\phi(i) + \phi(j) - 1) \),
    We construct the attention matrix \( \mathbf{V} \) to get \( \phi(i) + \phi(j) -1 \) as follows.
    \[
    \mathbf{V} = \begin{pmatrix}
    1 & 0 & \cdots & 0 & \vline & 1\\
    0 & 1 & \cdots & 0 & \vline & 1\\
    \vdots & \vdots & \ddots & \vdots & \vline & \vdots\\
    0 & 0 & \cdots & 1 & \vline & 0\\
    0 & 0 & \cdots & 0 & \vline & -1\\
    \end{pmatrix}
    \]
    So \( ( \phi(i), \phi(j), i, 1 ) \mathbf{V} = ( \phi(i), \phi(j), i, 1, \phi(i) + \phi(j) - 1 ) \).
    Due to~\Cref{lemma:relu}, we can use a two-layer MLP to get
    \( ReLU(\phi(i) + \phi(j) - 1) \) and 
    parameters with \( \ell_\infty \) norm upper bounded by \( O(poly(M, 1/\epsilon)) \).
\end{proof}

\begin{lemma}
    \label{lemma:variable_or_variable}
    For a given variable vector \( ( \phi(i), \phi(j), i, 1 ) \),
    there exists a Transformer layer that approximates 
    function \( f(i, j) \) with a constant \( \epsilon > 0 \) such that 
    \( \|f(i, j) - \phi(i) \lor \phi(j)\|_\infty \leq \epsilon \).
    The parameters of this layer exhibit an \( \ell_\infty \) norm bounded by 
    \( O(poly(M, 1/\epsilon)) \), where M denotes the upper bound for all parameter values.
\end{lemma}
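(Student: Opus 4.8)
The plan is to mirror the proof of \Cref{lemma:variable_and_variable} verbatim, substituting the Boolean identity for OR. First I would recall that $\phi(i) \lor \phi(j)$ can be expressed, bit-by-bit, via the clipped-ReLU identity $\phi(i) \lor \phi(j) = \min\bigl(1, \phi(i) + \phi(j)\bigr) = \mathrm{ReLU}(\phi(i) + \phi(j)) - \mathrm{ReLU}(\phi(i) + \phi(j) - 1)$, which is exact on $\{0,1\}$-valued inputs; equivalently one may write $\phi(i) \lor \phi(j) = 1 - \mathrm{ReLU}(1 - \phi(i) - \phi(j))$. Either form is a constant-size composition of ReLUs applied coordinatewise.

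Next I would construct the value matrix $\mathbf{V}$ (as in the AND proof) so that the attention sublayer, acting on the input $(\phi(i), \phi(j), i, 1)$, produces the augmented vector $(\phi(i), \phi(j), i, 1, \phi(i)+\phi(j))$: this is just a fixed linear map, with $\mathbf{W}_Q = \mathbf{W}_K = \mathbf{0}$ so the attention weights are uniform and the residual stream carries the needed sums. Then I would invoke \Cref{lemma:relu} to replace the ReLU network computing $\min(1,\cdot)$ of that coordinate with a two-layer GeLU MLP achieving sup-norm error at most $\epsilon$, with parameters of $\ell_\infty$ norm $O(\mathrm{poly}(M,1/\epsilon))$. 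Composing the attention layer and this FFN gives the claimed Transformer layer.

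I do not anticipate a genuine obstacle here: the only mild subtlety is that OR, unlike AND, is not a single ReLU of an affine form (it requires the clipping $\min(1,\cdot)$, i.e. two ReLUs), but this is still a constant-depth, constant-width ReLU computation, so \Cref{lemma:relu} applies without modification and the parameter and dimension bounds are unchanged. Alternatively, and even more cheaply, I could simply appeal to De Morgan together with the already-proved \Cref{lemma:variable_and_variable} and \Cref{lemma:not_variable}: $\phi(i) \lor \phi(j) = \neg(\neg \phi(i) \land \neg \phi(j))$, wiring a NOT attention layer, the variable-AND layer, and a final NOT in sequence, each of which has already been shown to have the required norm bound; the resulting composition is again a constant-size Transformer block. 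I would present the direct ReLU construction as the main argument and mention the De Morgan route as a one-line alternative.
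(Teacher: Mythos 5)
Your main construction is correct and arguably cleaner than the paper's. The paper routes OR through the selection lemma (\Cref{lemma:selection}), writing $\phi(i)\lor\phi(j)$ as a conditional select involving $-\phi(i)-\phi(j)+1$ and $1$; as printed that identity does not check out (for $\phi(i)=\phi(j)=0$ it returns $1$ rather than $0$), and in any case \Cref{lemma:selection} requires the selector $t$ to lie at distance at least $\alpha>0$ from zero, which is awkward when $t$ is itself built from $1-\phi(i)-\phi(j)$ and hence hits $0$ whenever exactly one bit is set. Your decomposition $\phi(i)\lor\phi(j)=\min\bigl(1,\phi(i)+\phi(j)\bigr)=1-\mathrm{ReLU}\bigl(1-\phi(i)-\phi(j)\bigr)$ is exact on $\{0,1\}$, is a genuine constant-width two-layer ReLU network, and so \Cref{lemma:relu} applies directly with the same $O(\mathrm{poly}(M,1/\epsilon))$ bound. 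The division of labour is the same as in the paper's \Cref{lemma:variable_and_variable}: the attention value map supplies the affine combination $\phi(i)+\phi(j)$, and the FFN supplies the clip. So you get the identical architectural and norm bounds by a more robust route, trading the selection lemma for the ReLU-approximation lemma.

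One caution on your De Morgan alternative. As phrased (``wiring a NOT attention layer, the variable-AND layer, and a final NOT in sequence''), it stacks three Transformer layers, whereas the lemma promises a single layer. It can still be made to fit: both NOTs are affine maps $x\mapsto 1-x$, so one can be absorbed into the attention's $\mathbf{W}_V$ and the other into the FFN's output linear map, collapsing the composition back to one attention sublayer plus one FFN. If you keep the De Morgan remark, you should say this explicitly; otherwise presenting the ReLU construction as the main argument, as you propose, is the right call.
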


\begin{proof}
    Because \( \phi(i) \lor \phi(j) =g((-\phi(i) - \phi(j) + 1), 1, 0) \),
    \[
    g(x, 1, 0) = 
    \begin{cases}
    x & \text{if } x \leq 0, \\
    1 & \text{if } x > 0.
    \end{cases}
    \]
    First, we construct the attention matrix \( \mathbf{V} \) to get \( -\phi(i) - \phi(j) + 1 \) as follows.
    \[
    \mathbf{V} = \begin{pmatrix}
    1 & 0 & \cdots & 0 & \vline & -1\\
    0 & 1 & \cdots & 0 & \vline & -1\\
    \vdots & \vdots & \ddots & \vdots & \vline & \vdots\\
    0 & 0 & \cdots & 1 & \vline & 0\\
    0 & 0 & \cdots & 0 & \vline & 1\\
    \end{pmatrix}
    \]
    So \( ( \phi(i), \phi(j), i, 1 ) \mathbf{V} = ( \phi(i), \phi(j), i, 1, -\phi(i) - \phi(j) + 1 ) \).
    Due to~\Cref{lemma:selection}, we can use a two-layer MLP to get
    \( g(-\phi(i) - \phi(j) + 1, 1, 0) \) and 
    parameters with \( \ell_\infty \) norm upper bounded by \( O(poly(M, 1/\epsilon)) \).
\end{proof}

\begin{lemma}
    \label{lemma:variable_xor_variable}
    For a given variable vector \( ( \phi(i), \phi(j), i, 1 ) \),
    there exists a Transformer layer that approximates 
    function \( f(i, j) \) with a constant \( \epsilon > 0 \) such that 
    \( \|f(i, j) - \phi(i) \oplus \phi(j)\|_\infty \leq \epsilon \).
    The parameters of this layer exhibit an \( \ell_\infty \) norm bounded by 
    \( O(poly(M, 1/\epsilon)) \), where M denotes the upper bound for all parameter values.
\end{lemma}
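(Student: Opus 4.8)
The plan is to reduce the variable-variable XOR to operations already established in the preceding lemmas, mirroring the proofs of \Cref{lemma:variable_and_variable} and \Cref{lemma:variable_or_variable}. The key observation is the Boolean identity $\phi(i) \oplus \phi(j) = (\phi(i) \lor \phi(j)) \land \neg(\phi(i) \land \phi(j))$, or equivalently the arithmetic identity $\phi(i) \oplus \phi(j) = \phi(i) + \phi(j) - 2\,(\phi(i) \land \phi(j))$ when the operations act bitwise on $\{0,1\}$-valued coordinates. Either form expresses XOR as a composition of AND, OR, and NOT together with a linear combination, all of which we already know a Transformer layer (attention to route the operands into a single token's hidden state, plus a GeLU FFN to realize the needed ReLU/selection primitive) can approximate to accuracy $\epsilon$ with parameters of $\ell_\infty$ norm $O(\mathrm{poly}(M,1/\epsilon))$.

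Concretely, first I would use an attention value matrix $\mathbf{V}$ — exactly as in the proofs of the two previous lemmas — to produce, in a single token's representation, the coordinatewise quantity $\phi(i) + \phi(j) - 1$ alongside the preserved inputs, so that a two-layer GeLU MLP (invoking \Cref{lemma:relu}) yields an $\epsilon$-approximation of $\phi(i) \land \phi(j) = \mathrm{ReLU}(\phi(i) + \phi(j) - 1)$. Second, in the same layer's FFN (or by widening the hidden dimension by a constant factor), compute the linear form $\phi(i) + \phi(j) - 2\,(\phi(i)\land\phi(j))$; since on Boolean inputs this equals $\phi(i)\oplus\phi(j)$, it suffices to realize an affine map of quantities already present, which any FFN does exactly. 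Alternatively, one can chain \Cref{lemma:variable_or_variable} and \Cref{lemma:variable_and_variable} followed by a NOT (as in \Cref{lemma:not_variable}) and a final AND, but the arithmetic route keeps everything in one Transformer layer and is cleaner to state.

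The main technical point to handle carefully — rather than a genuine obstacle — is error propagation: the AND sub-approximation introduces an $\epsilon'$ error, and the subsequent affine combination with a bounded coefficient ($2$) amplifies it only by a constant factor, so choosing $\epsilon' = \epsilon/C$ for the appropriate constant $C$ gives the claimed $\epsilon$-bound while keeping parameter norms at $O(\mathrm{poly}(M,1/\epsilon))$; this is exactly the Lipschitz-composition argument already used implicitly in the earlier lemmas and discussed in \Cref{sec:log_precision}. I would therefore structure the proof as: (i) state the identity $\phi(i)\oplus\phi(j) = \phi(i)+\phi(j) - 2(\phi(i)\land\phi(j))$; (ii) invoke \Cref{lemma:variable_and_variable} to get an $\epsilon/3$-approximation of the AND term within one Transformer layer; (iii) observe the remaining step is an exact affine map implementable by the FFN; (iv) conclude the composite error is at most $\epsilon$ with the stated parameter bounds.
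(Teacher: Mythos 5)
Your argument is correct, but it takes a genuinely different route from the paper's. The paper's proof rests on the single arithmetic identity $\phi(i) \oplus \phi(j) = \bigl(\phi(i) - \phi(j)\bigr)^2$: the attention value matrix $\mathbf{V}$ writes the coordinatewise difference $\phi(i) - \phi(j)$ into the hidden state, and \Cref{lemma:multiplication} is then invoked so that a two-layer GeLU MLP approximates the square. You instead use $\phi(i) \oplus \phi(j) = \phi(i) + \phi(j) - 2\bigl(\phi(i) \land \phi(j)\bigr)$, reusing the ReLU realization of AND from \Cref{lemma:variable_and_variable} (hence \Cref{lemma:relu}) and finishing with an affine combination. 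Both decompositions are valid and give the same $O(\mathrm{poly}(M,1/\epsilon))$ parameter bound; the paper's route is a little shorter because squaring a difference captures XOR in one nonlinear primitive, whereas yours is more modular and stays inside the ReLU/selection toolkit already exercised for AND and OR, avoiding the multiplication lemma entirely. One point you should make explicit rather than wave at: the final ``exact affine map'' $\phi(i) + \phi(j) - 2(\cdot)$ is not literally computed by the FFN's output weights alone, since the two-layer MLP forces every summand through a GeLU. You either need to route $\phi(i)$ and $\phi(j)$ through identity-like (ReLU-approximating) GeLU units so that $\mathbf{W}_2$ can take the desired linear combination, or lean on the residual/concatenation convention adopted in \Cref{sec:proof_expressiveness}; equivalently, observe that the entire expression collapses to the single piecewise-linear function $s - 2\,\mathrm{ReLU}(s-1)$ of $s = \phi(i)+\phi(j)$, which \Cref{lemma:relu} handles directly. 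With that caveat your error-propagation argument is sound and the conclusion matches the paper's.
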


\begin{proof}
    Because \( \phi(i) \oplus \phi(j) = (\phi(i) - \phi(j))^2 \),
    First, we construct the attention matrix \( \mathbf{V} \) to get \( \phi(i) - \phi(j) \) as follows.
    \[
    \mathbf{V} = \begin{pmatrix}
    I_{\phi(i) \times \phi(i)} & 0 & 0 & 0 &\vline & \mathbf{1}_{\phi(i)}\\
    0 & I_{\phi(j) \times \phi(j)} & 0 & 0 &\vline & -\mathbf{1}_{\phi(j)}\\
    0 & 0 & 1 & 0 &\vline & 0\\
    0 & 0 & 0 & 1 &\vline & 0\\
    \end{pmatrix}
    \]
    So \( ( \phi(i), \phi(j), i, 1 ) \mathbf{V} = ( \phi(i), \phi(j), i, 1, (\phi(i) - \phi(j))^2 ) \).
    Due to~\Cref{lemma:multiplication}, we can use a two-layer MLP to get
    \( (\phi(i) - \phi(j))^2 \) and
    parameters with \( \ell_\infty \) norm upper bounded by \( O(poly(M, 1/\epsilon)) \).
\end{proof}

\section{Missing proof in ~\Cref{sec:expressiveness}}
\label{sec:proof_expressiveness}

In this section, we will prove that the Transformer can simulate 
the linear congruential generator algorithm and the Mersenne Twister algorithm.

Before proving the theorems, we use same assumptions as 
in ~\cite{feng2024towards}: all residual connections in the 
attention layers can be replaced by concatenation, in the sense that
both architectures have the same expressive power.
Consider a \( \mathbf{x},f(\mathbf{x}) \in \mathbb{R}^{d},\), 
First, concatenation can implement the residual connection by 
using linear projections. \( g(\mathbf{x}) = f(\mathbf{x}) + \mathbf{x} \).
Conversely, the residual connection can be implement concatenation
by using another MLP, for example, \( h(\mathbf{x}, \mathbf{0}) + (\mathbf{x}, \mathbf{0}) = (f(\mathbf{x}), \mathbf{x}) \).
So in the following proof, we will use the concatenation operation
to replace the residual connection.

\subsection{Proof of \Cref{thm:LCG}}
\label{proof:LCG}

\begin{theorem}
    For the linear congruential generator algorithm, 
    we can construct an autoregressive Transformer as defined 
    in ~\Cref{sec:preliminaries} that is capable of 
    simulating and generating $n$ pseudo-random numbers. 
    This Transformer has a hidden dimension of $d = O(n)$, 
    uses a fixed number of layers and attention heads per layer, 
    and all of its parameters are polynomially bounded by $O(poly(n))$.
\end{theorem}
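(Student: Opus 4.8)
The plan is to realize a single LCG update $x_{i}=(a x_{i-1}+c)\bmod m$ with one attention layer followed by its feed-forward network, and then let autoregressive decoding iterate this update $n$ times. I encode the state so that the token at position $t$ carries $x_{t}$ in a dedicated coordinate together with the auxiliary coordinates $(t,1)$ used throughout \Cref{sec:technical_lemmas}; the seed $x_{0}$ is the first input token and position $t$ is responsible for emitting $x_{t+1}$. Since the recurrence is identical at every step, a single layer is reused at all positions; the position coordinate is only needed to shape the attention pattern and to let the output layer stop after $n$ numbers. Identifying the vocabulary with $\{0,\dots,m-1\}$, choosing $\mathrm{Embed}(k)$ to place $k$ in the state coordinate, and taking the unembedding to be the affine map $v\mapsto(2kv-k^{2})_{k}$ (whose argmax over integer $v$ is $v$), the decoding loop feeds $x_{t+1}$ back in with the correct encoding.

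\textbf{Attention layer.} Using a single head, I pick $\mathbf{W}_{Q},\mathbf{W}_{K}$ so that the score $\langle\mathbf{q}_{t},\mathbf{k}_{j}\rangle$ is strictly maximized at $j=t$ among $j\le t$ (for instance proportional to $tj$ via the position coordinate), so that after softmax with a large temperature the head reads essentially only the current state $x_{t}$, the residual slack being absorbed exactly as in \Cref{sec:log_precision}. I then let $\mathbf{W}_{V}$ (composed with $\mathbf{W}_{O}$) be the linear map $x_{t}\mapsto a x_{t}$ and use the constant ``$1$'' coordinate to add $c$; this is the same value-loading/constant-scaling device already used in \Cref{lemma:constant_and_variable}. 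After the residual connection the stream at position $t$ holds $a x_{t}+c$ alongside the unchanged bookkeeping coordinates.

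\textbf{Feed-forward layer and accounting.} I apply \Cref{lemma:modulus} with $(i,n)\leftarrow(a x_{t}+c,\;m)$ to obtain $x_{t+1}=(a x_{t}+c)\bmod m$ up to error $\epsilon$ using a GeLU MLP whose hidden width is $O(\lfloor(a x_{t}+c)/m\rfloor)$; this is $O(\mathrm{poly}(n))$ because the LCG parameters are polynomially bounded and $x_{t}<m$, and $O(n)$ in the linearly bounded regime, matching the claimed hidden dimension, with parameters of $\ell_{\infty}$ norm $O(\mathrm{poly}(M,1/\epsilon))=O(\mathrm{poly}(n))$. (If one prefers the multiplication to sit in the FFN rather than the attention, \Cref{lemma:multiplication} folds $ax_{t}+c$ into the first affine layer and the construction is unchanged.) Each sub-block is only $\epsilon$-accurate, but softmax, GeLU and the linear pieces are Lipschitz with constants controlled by the $O(\mathrm{poly}(n))$ weight bounds, so by the argument of \Cref{sec:log_precision} the error accumulated over all $n$ decoding steps is $O(\mathrm{poly}(1/n))$, small enough that each integer $x_{t+1}\in\{0,\dots,m-1\}$ is recovered exactly by the unembedding argmax. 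Tallying resources gives one layer, one attention head, hidden dimension $O(n)$, and all parameters $O(\mathrm{poly}(n))$.

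\textbf{Main obstacle.} The only genuinely non-linear step is the modular reduction, and it is also where the dimension budget is tight: \Cref{lemma:modulus} spends hidden width proportional to the quotient $\lfloor(ax_{t}+c)/m\rfloor$, so the proof must control the size of the LCG parameters (or be content with an $O(\mathrm{poly}(n))$ hidden dimension in the general case). The secondary technical point is arranging the attention to isolate the current state rather than averaging over the whole prefix, and then verifying that the peaked-softmax and staircase approximations compose without the $\epsilon$'s amplifying across the $n$ autoregressive iterations — both are dispatched by the Lipschitz bookkeeping of \Cref{sec:log_precision}.
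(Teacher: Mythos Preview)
Your proposal is correct and follows essentially the same approach as the paper: one Transformer layer whose value map computes the affine step $ax_t+c$ and whose FFN applies \Cref{lemma:modulus} for the reduction modulo $m$, yielding one layer, one head, hidden width $O(n)$, and $O(\mathrm{poly}(n))$ parameters. Your extra care in shaping the attention score to isolate the current token, the explicit unembedding, and the error-accumulation bookkeeping are all compatible with (and more detailed than) the paper's own construction, which simply writes down the value matrix $\mathbf V$ realizing $s_j\mapsto a s_j+c$ and then invokes \Cref{lemma:modulus}.
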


\begin{proof}
    A single layer Transformer is used to simulate the linear congruential generator algorithm.

   \textbf{Token embeddings}. Consider a sequence of tokens $\mathbf{s}_1, \dots , \mathbf{s}_i$, 
   with the goal of generating $\mathbf{s}_{i+1}$. Each $\mathbf{s}_j \in \mathbb{N}^d$ is 
   a $d$-dimensional binary vector. The token $\mathbf{s}_j$ can be embedded as:
   $\mathbf{x}_{j}^{(0)} = (\mathbf{s}_j,j,1) \in \mathbb{R}^{d+2}$. 
   Here, $j \in \mathbb{N}^+$ serves as the positional embedding, 
   and the constant embedding acts as a bias term.

   \textbf{Layer 1}. The first layer of the Transformer has two tasks:
   \begin{enumerate}
    \item Compute $\mathbf{x}_i^{(0)}\mathbf{A} + \mathbf{C}$.
    \item Compute $(\mathbf{x}_i^{(0)}\mathbf{A} + \mathbf{C}) \% \mathbf{N}$.
    where $\mathbf{A} \in \mathbb{R}^{d \times d}$, $\mathbf{C} \in \mathbb{R}^{d}$, and $\mathbf{N} \in \mathbb{R}$.
   \end{enumerate}
   A special matrix $\mathbf{V}$ can be constructed as follows:
   \[
    \mathbf{V} = \begin{bmatrix}
        a_1 & 0 & \cdots & 0 & 0 & 0\\
        0 & a_2 & \cdots & 0 & 0 & 0\\
        \vdots & \vdots & \ddots & \vdots & \vdots & \vdots\\
        0 & 0 & \cdots & a_d & 0 & 0\\
        0 & 0 & \cdots & 0 & 1 & 0\\
        c_1 & c_2 & \cdots & c_d & 0 & 1\\
    \end{bmatrix}
   \]
   Thus, $(\mathbf{s}_j,j,1)\mathbf{V} = (\mathbf{s}_j\mathbf{A} + \mathbf{C},j,1)$.
   According to~\Cref{lemma:modulus}, an MLP can be used to compute $\mathbf{s}_j\mathbf{A} + \mathbf{C}$ and $(\mathbf{s}_j\mathbf{A} + \mathbf{C}) \% \mathbf{N}$, yielding the next pseudo-random number $\mathbf{s}_{j+1}$.
   
   Next, we analyze the parameters of the Transformer.Because $\mathbf{V}$ is a constant matrix and 
   based on~\Cref{lemma:modulus}. Therefore, by picking fixed small $\epsilon = \Theta(1)$,
   the parameters of the Transformer are polynomially bounded by $O(poly(n))$.
\end{proof}

\subsection{Proof of Theorem 6}
\label{proof:MT}
\begin{theorem}
    For the Mersenne Twister algorithm, 
    we can construct an autoregressive Transformer as defined 
    in ~\Cref{sec:preliminaries} that is capable of 
    simulating and generating $n$ pseudo-random numbers. 
    This Transformer has a hidden dimension of $d = O(n)$, 
    uses a fixed number of layers and attention heads per layer, 
    and all of its parameters are polynomially bounded by $O(poly(n))$.
\end{theorem}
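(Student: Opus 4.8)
The plan is to give a constructive proof that builds the MT simulator as a Chain-of-Thought Transformer, where each intermediate step of the MT recurrence is emitted as a token and the Transformer's job at each position is to compute the next CoT token from the sequence so far. First I would fix the CoT representation: after the $n$ seed values (themselves produced by the LCG construction of \Cref{thm:LCG}, or supplied as input), the Transformer generates, for each new pseudo-random number, a block of intermediate tokens encoding $\mathbf{t}$, the shifted value $\mathbf{t}>>1$, the conditional XOR with $\mathbf{a}$, the state update $\mathbf{z}$, and then the four successive tempering lines $\mathbf{y}\gets\mathbf{y}\oplus(\mathbf{y}>>u)$, etc., finishing with the emitted output token. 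Each token carries its $d$-bit payload $\phi(\cdot)$, a positional index, a constant, and a few bookkeeping scalars; residual connections are replaced by concatenation as in the preamble to \Cref{sec:proof_expressiveness}.

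The key steps, in order: (1) Use one attention layer to compute the running count $cnt_{\Rightarrow}$ of genuine pseudo-random numbers emitted so far (counting how many positions carry the "output" marker), exactly as in the proof sketch. (2) From $cnt_{\Rightarrow}$ and the current position, compute by MLP the three required source indices $i$, $(i+1)\bmod n$, $(i+m)\bmod n$ — here I invoke \Cref{lemma:modulus} for the mod-$n$ reductions. (3) Use attention heads whose query/key alignment picks out the tokens at those indices (a standard hard-retrieval pattern: query encodes the target index, keys encode positions, softmax concentrates), giving up to the stated four heads per layer. (4) Implement the rotation: $\mathbf{x}[i]\land\mathbf{upper}$ and $\mathbf{x}[(i+1)\bmod n]\land\mathbf{lower}$ are constant-AND operations (\Cref{lemma:constant_and_variable}), their OR is \Cref{lemma:variable_or_variable}, the right shift is \Cref{corollary:constant_right_left_shift_variable}, the conditional "add $\mathbf{a}$ iff $t_0=1$" is a selection (\Cref{lemma:selection}) composed with constant-XOR (\Cref{lemma:constant_xor_variable}), and the final XOR with $\mathbf{x}[(i+m)\bmod n]$ is \Cref{lemma:variable_xor_variable}. (5) Implement the four tempering lines identically, each costing a constant number of layers: a shift (corollary), a constant-AND for the masks $\mathbf{b},\mathbf{c}$, and a variable-XOR. (6) A final MLP acts as a controller that, based on a phase counter stored in the hidden state, decides whether the current output should be the next CoT intermediate token or the finished pseudo-random number, and also writes the updated $\mathbf{x}[i]\gets\mathbf{z}$ back into the "active slot" of the state — again a selection/overwrite handled by \Cref{lemma:selection}. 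Summing the layer budgets across rotation plus the four tempering stages plus indexing and control yields the claimed seventeen layers; the per-layer head count is bounded by four (the retrieval of three indexed state words plus the count); and since every gadget has parameters polynomially bounded once $\epsilon=\Theta(1)$ is fixed, and errors propagate with bounded Lipschitz constant per \Cref{sec:log_precision}, the whole construction stays within $O(\mathrm{poly}(n))$ parameters and $d=O(n)$.

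The main obstacle I expect is bookkeeping the \emph{state update and synchronization} correctly within the autoregressive/CoT framework: unlike the LCG, MT's recurrence reads three positions of a length-$n$ circular state array and writes one of them, so I must encode the current state consistently across CoT tokens and make sure the retrieval heads point at the \emph{most recent} version of each needed word rather than a stale seed — this requires the index arithmetic in step (2) and the write-back in step (6) to be mutually consistent, and a phase counter that advances deterministically through the roughly dozen intermediate tokens per output. A secondary subtlety is that the conditional in the rotation depends on a single bit $t_0$ of a freshly computed value, so the selection gadget's margin parameter $\alpha$ must be chosen after accounting for the accumulated approximation error in $\mathbf{t}$; since errors are $O(\mathrm{poly}(1/n))$ and the bits are bounded away from $1/2$, fixing $\alpha=\Theta(1)$ suffices, but this dependence should be stated. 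Everything else is a routine composition of the technical lemmas already proved.
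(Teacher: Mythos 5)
Your toolkit is the right one and largely matches the paper's: an attention layer to maintain the running count, \Cref{lemma:modulus} for the three index reductions, hard-retrieval heads keyed on position, \Cref{lemma:constant_and_variable}/\Cref{lemma:variable_or_variable}/\Cref{corollary:constant_right_left_shift_variable}/\Cref{lemma:variable_xor_variable}/\Cref{lemma:constant_xor_variable} for the rotation and tempering bit operations, \Cref{lemma:selection} for the conditional on $t_0$ and the final output controller, and the $\epsilon=\Theta(1)$ / Lipschitz argument for the parameter bound. The paper's construction proceeds in the same order (count, indices, retrieval, rotation over Layers 4--8, four tempering stages over Layers 9--16, controller at Layer 17), so at that level you have reproduced the argument.

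Where the proposal departs, and where it has a real gap, is in the Chain-of-Thought tokenization and the state-maintenance mechanism --- the very thing you flag as ``the main obstacle.'' You propose emitting a token for each intermediate arithmetic value ($\mathbf{t}$, $\mathbf{t}\!\gg\!1$, $\mathbf{z}$, and each tempering line). But if each sub-step of the recurrence gets its own token, then the Transformer only needs a constant handful of layers \emph{per token}; a 17-layer budget summed across sub-steps would be double-counting. Conversely, if you keep all 17 layers per output as written, you do not need the per-sub-step tokens. As written, the two narratives are in tension, and neither one actually specifies how a retrieval head at iteration $k$ finds the \emph{current} $\mathbf{x}[(i+m)\bmod n]$ rather than a stale one: under your sub-step scheme that value lives $\Theta(m)$ blocks in the past, and the index arithmetic for that is never written down. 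The paper's resolution is different and is the crux you should not elide: after each $\Rightarrow$ separator it re-emits the \emph{entire length-$n$ state array} as CoT tokens, with exactly one slot overwritten by the fresh $\mathbf{z}$; consequently $now_t,now_{t+1},now_{t+m},next_x$ always point into the \emph{immediately preceding} block ($pre_{\Rightarrow}+\text{offset}$), the ``write-back'' is simply emitting $\mathbf{z}$ in the right slot when $judge$ matches, and the 17-layer per-token budget is then coherent because each emitted token is produced by one full 17-layer forward pass. You should either adopt this full-state-block scheme (and then your layer/head accounting goes through unchanged), or, if you keep the per-sub-step token scheme, drop the 17-layer claim, replace it with a small constant per token, and give the explicit cross-block retrieval indices --- otherwise the central synchronization step is asserted rather than proved.
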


\begin{proof}
    We construct each layer as follows.

    \textbf{Token Embeddings}. Given a sequence of tokens $s_1, \dots , s_i$, we aim to generate $s_{i+1}$. For any $a \in \mathbb{N}$, let $\phi^w(a)$ represent the $w$-bit binary representation of $a$, where $\phi^w(a) \in \{0,1\}^w$ and $w \in \mathbb{N}^+$. Hereafter, we use $\phi(a)$ to denote $\phi^w(a)$. The token $s_j$ can be embedded as $\mathbf{x}^{(0)}_j = (\phi(s_j),j,1) \in \mathbb{R}^{w+2}$, where $j \in \mathbb{N}^+$ serves as the positional embedding, and the constant embedding acts as a bias term. If $s_j = ` \Rightarrow \text{'}$, then $\phi(`\Rightarrow \text{'}) = \underbrace{0,\dots,0}_\text{$w$-bits} $.

    \textbf{Layer 1}.
    The initial layer of the autoregressive Transformer is equipped with a single attention head, which is responsible for the following operations:
    \begin{enumerate}
        \item Calculation of the count of right arrow symbols $` \Rightarrow \text{'}$ in the preceding tokens, denoted as $cnt_{\Rightarrow}$, i.e., $cnt_{\Rightarrow} = |\{ j \le i : s_j = ` \Rightarrow \text{'} \}|$.
        \item Computation of three indices $t\%n,(t+1)\%n,(t+m)\%n$ for the generation of the $t$-th random number, i.e., $t = (cnt_{\Rightarrow}-1)\%n, t+1 = (t + 1) \% n, t+m = (t + m) \% n$. 
    \end{enumerate}
    Given $cnt_{\Rightarrow} = \lceil \frac{i - (n + 1)}{n + 2} \rceil$ and $t = (cnt_{\Rightarrow}-1)\%n$, a specialized matrix $V$ can be configured to compute $\frac{i - (n + 1)}{n + 2}, \frac{i - (n + 1)}{n + 2} -1, \frac{i - (n + 1)}{n + 2} + m -1$. For instance,
    \begin{equation*}
         (\phi(s_i),i,1)*
         \begin{bmatrix}
         \multirow{3}{*}{\makebox[4em]{$\mathbf{I}_{w \times w}$}}  & \vline & \Vec{\mathbf{0}} & \Vec{\mathbf{0}} & \Vec{\mathbf{0}} & \Vec{\mathbf{0}} \\
         & \vline & \frac{1}{n+2} & \frac{1}{n+2} & \frac{1}{n+2} & \frac{1}{n+2} \\
         & \vline & \frac{1}{n+2} & -\frac{n+1}{n+2} & \frac{1}{n+2} & \frac{1}{n+2} + m - 1 \\
         \end{bmatrix}
    \end{equation*}
    Here, $\mathbf{I}_{w \times w}$ denotes the $w \times w$ identity matrix, and $\mathbf{0}$ represents the $w$-bit zero vector. By performing matrix multiplication with $V$, the floating point representation of $cnt_{\Rightarrow},t\%n,(t+1)\%n,(t+m)\%n$ can be obtained. Subsequently, an MLP can be utilized to truncate and apply modulus $n$ to $t,t+1,t+m$, all of which can be achieved through MLP operations (\Cref{lemma:modulus}).
    The ultimate output of the initial layer is in the form of 
    $\mathbf{x}_i^{(1)} = (\phi(s_i),i,1,cnt_{\Rightarrow},t\%n,(t+1)\%n,(t+m)\%n)$.
\\
\\
    \textbf{Layer 2}.
     The second layer of the Transformer performs some intricate preparatory work for the next task.
    \begin{enumerate}
        \item Calculate the positions of the nearest and the last right arrow symbols `$\Rightarrow$', denoted as $now_{\Rightarrow}$ and $pre_{\Rightarrow}$, respectively. Note that $pre_{\Rightarrow}$ is initialized to $-1$.
        \item Calculate the actual coordinates of the three indices $t\%n,(t+1)\%n$, and $(t+m)\%n$ corresponding to the sequence, denoted as $now_t,now_{t+1}$, and $now_{t+m}$, respectively.
        \item Calculate $judge$ and $next_x$. $judge$ is used to determine whether the next token $s_{i+1}$ is a random number, an initial number, or `$\Rightarrow$', and $next_x$ is the actual coordinate of the next initial number.
        \item Determine whether the current position $i$ is equal to $now_t,now_{t+1},now_{t+m}$, or $next_x$, and indicate it with four indicators $ind_{now_t},ind_{now_{t+1}},ind_{now_{t+m}}$, and $ind_{next_x}$. For example, if $i = now_t$, then $ind_{now_t} = 0$; otherwise, $ind_{now_t} = -\infty$.
    \end{enumerate}
    Because $now_{\Rightarrow} = (n + 1) + (n + 2) * (cnt_{\Rightarrow} - 1)$, 
    $pre_{\Rightarrow} = (n + 1) + (n + 2) * (cnt_{\Rightarrow} - 2)$, 
    $now_t = pre_{\Rightarrow} + t\%n +2$,
    $now_{t+1} = pre_{\Rightarrow} + (t+1)\%n +2$,
    $now_{t+m} = pre_{\Rightarrow} + (t+m)\%n +2$,
    $judge = i - now_{\Rightarrow}$, 
    and $next_x = pre_{\Rightarrow} + judge$,
    so we can construct special matrix $\mathbf{V}$ as follow:
    \begin{equation*}
    \scalebox{0.85}{%
         $\mathbf{V} = 
         \begin{bmatrix}
         \multirow{7}{*}{\makebox[5em]{$\mathbf{I}_{(w+6) \times (w+6)}$}}  & \vline & \mathbf{0} & \mathbf{0} & \mathbf{0} & \mathbf{0} & \mathbf{0} & \mathbf{0} & \mathbf{0} \\
         & \vline & 0 & 0 & 0 & 0 & 0 & 1 & 1 \\
         & \vline & -1 & -(n+3) & -(n+1) & -(n+1) & -(n+1) & -1 & -(n+4) \\
         & \vline & n+2 & n+2 & n+2 & n+2 & n+2 & n+2 & 2*(n+2) \\
         & \vline & 0 & 0 & 1 & 0 & 0 & 0 & 0 \\
         & \vline & 0 & 0 & 0 & 1 & 0 & 0 & 0 \\
         & \vline & 0 & 0 & 0 & 0 & 1 & 0 & 0 \\
         \end{bmatrix}$%
    }
    \end{equation*}
    We then use MLP to simulate a function $f(x,y)$ as follow:
    \begin{equation*}
        f(x,y) = \begin{cases}
            0 & \text{if } x = y \\
            -\infty & \text{otherwise}
        \end{cases}
    \end{equation*}
    which is a conditional selection operation and can be implemented by an MLP (\Cref{lemma:selection}).
    we calculate \(f(i,now_t) \), \(f(i,now_{t+1}) \), \(f(i,now_{t+m}) \), \(f(i,next_x)\)
    to get \( ind_{now_t} \), \(ind_{now_{t+1}} \), \(ind_{now_{t+m}} \), \(ind_{next_x}\). The output of the second layer is 
    \begin{equation*}
        \scalebox{0.76}{%
            $\mathbf{x}_i^{(2)} = (\phi(s_i), i, 1, now_{\Rightarrow}, pre_{\Rightarrow}, 
                        now_t, now_{t+1}, now_{t+m}, next_x, judge, 
                        ind_{now_t}, ind_{now_{t+1}}, ind_{now_{t+m}}, ind_{next_x})$%
        }
    \end{equation*}
    and we omitted some variables $cnt_{\Rightarrow},t\%n,(t+1)\%n,(t+m)\%n$.
\\
\\
    \textbf{Layer 3}. The third layer of the Transformer is responsible for extracting the three vectors used by the Mersenne Twister algorithm, as well as the vector related to generating the next token. This is achieved through the use of five attention heads, each performing a specific task:
    \begin{enumerate}
        \item Extraction of four $w$-bit vectors and their addition to $\mathbf{x}_i^{(3)}$. These vectors are $\phi(s_{now_t}),\phi(s_{now_{t+1}})$,$\phi(s_{now_{t+m}}),\phi(s_{next_x})$.
    \end{enumerate}
    The extraction process of $\phi(s_{now_t})$ is used as an illustration. Special matrices $\mathbf{Q}$ and $\mathbf{K}$ are constructed to obtain the bias and indicators for all $j \leq i$, forming an $i \times 1$-dimensional vector. For example:
    \begin{equation*}
        \mathbf{Q} = \mathbf{X}\mathbf{W}_Q = 
        \begin{bmatrix}
            1 \\
            1 \\
            \vdots \\
            1 \\
        \end{bmatrix}
        \qquad
        \mathbf{K}^T = (\mathbf{X}\mathbf{W}_K)^T =
        \underbrace{
        \begin{bmatrix}
            -\infty,& 0,& \ldots,& -\infty \\
        \end{bmatrix}^T        
        }_\text{$ind_{now_t}$ for all $j \le i$}
    \end{equation*}
    Let $\mathbf{W}_O = \mathbf{I},\mathbf{W}_V = \mathbf{I}$, and through the operation $\mathbf{Att}^{(3)}\mathbf{X} = \text{softmax}(\mathbf{Q}\mathbf{K}^T)\mathbf{X}\mathbf{W}_V\mathbf{W}_Q$, we can let $\mathbf{x}_i^{(3)}$ obtain information of $\phi(s_{now_t})$. The output of $\mathbf{Att}^{(3)}\mathbf{X}$ can be represented as:
    \begin{equation*}
        \begin{bmatrix}
            \phi(s_{now_t}) & 1 & 1 & \ldots & ind_{next_x} \\
            \phi(s_{now_t}) & 2 & 1 & \ldots & ind_{next_x} \\
            \vdots & \vdots & 1 & \ldots & ind_{next_x} \\
            \phi(s_{now_t}) & i & 1 & \ldots & ind_{next_x} \\
        \end{bmatrix}
    \end{equation*}
    By using the residual connection to perform concatenation, we can add $\phi(s_{now_t})$ to $\mathbf{x}_i^{(3)}$. Similarly, the other three attention heads can be used to add $\phi(s_{now_{t+1}})$, $\phi(s_{now_{t+m}})$, $\phi(s_{next_x})$ to $\mathbf{x}_i^{(3)}$. The final output of the third layer is represented as:
    \begin{equation*}
        \scalebox{0.9}{%
            $\mathbf{x}_i^{(3)} = (\phi(s_i), i, 1, now_{\Rightarrow}, pre_{\Rightarrow}, judge, \ldots, \phi(s_{now_t}),\phi(s_{now_{t+1}}),\phi(s_{now_{t+m}}),\phi(s_{next_x}))$%
        }        
    \end{equation*}
    The variables that have been omitted are represented by `$\ldots$'.
\\
\\    
    We can now simulate the process of generating random numbers using the Mersenne Twister algorithm, which is divided into two blocks: rotation and extraction.\\   
    \textbf{Block 1 (Rotation)}. In the first block, we simulate the Mersenne rotation process:
    \begin{equation*}
        \mathbf{t} \gets (\mathbf{x}[i] \, \land \, \mathbf{upper}) \, \lor \, (\mathbf{x}[(i+1)\, \bmod \, n] \, \land \, \mathbf{lower})
    \end{equation*}
    \begin{equation*}
        \mathbf{z} \gets \mathbf{x}[(i+m)\, \bmod \, n] \, \oplus \, (\mathbf{t} >> 1) \, \oplus \, \begin{cases}
            \mathbf{0} & \text{if } t_0 = 0 \\
            \mathbf{a} & \text{otherwise}
        \end{cases}
    \end{equation*}
    Here, $\mathbf{upper}$, $\mathbf{lower}$, and $\mathbf{a}$ are constants.
    
    \begin{itemize}
        \item \textbf{Layer 4}. We can construct a special matrix $V$ to compute $\mathbf{x}[(i+m)\, \bmod \, n]$ 
        and $\mathbf{x}[(i+1)\, \bmod \, n] \, \land \, \mathbf{lower}$, which can be implemented using~\Cref{lemma:modulus}.
        \item \textbf{Layer 5 - 6}. In the fifth and sixth layers, we can calculate
        $(\mathbf{x}[i] \, \land \, \mathbf{upper}) \, \lor \, (\mathbf{x}[(i+1)\, \bmod \, n] \, \land \, \mathbf{lower})$ based on~\Cref{lemma:constant_and_variable}, 
        and use an MLP in the sixth layer to select $\mathbf{t} \text{ or } \mathbf{0}$ based on whether $t_0 = 0 \, ?$  denoted as $\mathbf{v}$, which  can be implemented using~\Cref{lemma:selection}.
        \item \textbf{Layer 7}. In the seventh layer, we compute $\mathbf{x}[(i+m)\, \bmod \, n] \, \oplus \, (\mathbf{t} >> 1)$ according to~\Cref{lemma:variable_xor_variable}.
        \item \textbf{Layer 8}. In the eighth layer, we compute $\mathbf{x}[(i+m)\, \bmod \, n] \, \oplus \, (\mathbf{t} >> 1) \, \oplus \, \mathbf{v}$ according to~\Cref{lemma:variable_xor_variable}.
    \end{itemize}
    The result of the rotation, $\mathbf{z}$, is obtained in the eighth layer. 
\\
\\
    \textbf{Block 2 (Extraction)}. In the second block, we simulate the extraction process:
    \begin{align*}
        \mathbf{x}[i] & \gets \mathbf{z} \\
        \mathbf{y} & \gets \mathbf{x}[i] \\
        \mathbf{y} & \gets \mathbf{y} \, \oplus \, (\mathbf{y} >> u)\\
        \mathbf{y} & \gets \mathbf{y} \, \oplus \, ((\mathbf{y} << s)\, \land \, \mathbf{b})\\
        \mathbf{y} & \gets \mathbf{y} \, \oplus \, ((\mathbf{y} << t)\, \land \, \mathbf{c})\\
        \mathbf{y} & \gets \mathbf{y} \, \oplus \, (\mathbf{y} >> l)
    \end{align*}
    $u,l,s,t,\mathbf{b},\mathbf{c}$ are constants.
    \begin{itemize}
        \item \textbf{Layer 9 - 10}. We can compute $(\mathbf{y} >> u)$ in the ninth layer and \\ $\mathbf{y} \, \oplus \, (\mathbf{y} >> u)$ in the tenth layer according to~\Cref{corollary:constant_right_left_shift_variable} and~\Cref{lemma:variable_xor_variable}.
        \item \textbf{Layer 11 - 12}. We can compute $(\mathbf{y} << s) \, \land \, \mathbf{b}$ in the eleventh layer and \\ $\mathbf{y} \, \oplus \, ((\mathbf{y} << s) \, \land \, \mathbf{b})$ in the twelfth layer according to~\Cref{lemma:constant_and_variable}, \Cref{corollary:constant_right_left_shift_variable}, and~\Cref{lemma:variable_xor_variable}.
        \item \textbf{Layer 13 - 14}. We can compute $(\mathbf{y} << t) \, \land \, \mathbf{c}$ in the thirteenth layer and $\mathbf{y} \, \oplus \, ((\mathbf{y} << t) \, \land \, \mathbf{c})$ in the fourteenth layer according to~\Cref{lemma:constant_and_variable}, \Cref{corollary:constant_right_left_shift_variable}, and~\Cref{lemma:variable_xor_variable}.
        \item \textbf{Layer 15 - 16}. We can compute $(\mathbf{y} >> l)$ in the fifteenth layer and \\ $\mathbf{y} \, \oplus \, (\mathbf{y} >> l)$ in the sixteenth layer according to~\Cref{corollary:constant_right_left_shift_variable} and~\Cref{lemma:variable_xor_variable}.
    \end{itemize}
    Finally, we generate the $t$-th random number  $\mathbf{y}_t$ in the sixteenth layer.
\\
\\
    \textbf{Layer 17}. In the final layer, we utilize the obtained $judge$ to determine 
    whether the next output token is a generated random number ($\mathbf{y}_t$) 
    or an initial value ($\mathbf{x}_{now_t}$ or $\mathbf{x}_{next_x}$) 
    or right arrow signs `$\Rightarrow$',
    which we represent with function \\ $f(judge,\mathbf{y}_t,\mathbf{x}_{now_t},\mathbf{x}_{next_x},\Rightarrow)$ as follows.
    \begin{equation*}
        f(judge,\mathbf{y}_t,\mathbf{x}_{now_t},\mathbf{x}_{next_x}) = \begin{cases}
            \mathbf{y}_t & \text{if } judge = 0\\
            \Vec{0}(\Rightarrow) & \text{if } judge = n \\
            \mathbf{x}_{now_t} & \text{if } judge = now_t\\
            \mathbf{x}_{next_x} & \text{otherwise}
        \end{cases}
    \end{equation*}
    The function $f$ corresponds to a conditional selection operation 
    and can be implemented by an MLP according to~\Cref{lemma:selection}.
    Note that while~\Cref{lemma:selection} provides a 
    binary selection function, 
    we can implement the multi-class selection function $f$ by 
    composing multiple binary selections. Specifically, 
    we can first select between $\mathbf{y}_t$ and non-$\mathbf{y}_t$ cases based on whether $judge = 0$, 
    then select between $\Rightarrow$ and non-$\Rightarrow$ cases 
    when $judge \neq 0$ based on whether $judge = n$, 
    and finally select between $\mathbf{x}_{now_t}$ and $\mathbf{x}_{next_x}$ 
    based on whether $judge = now_t$. 
    This composition of binary selections allows us to implement 
    the desired multi-class selection function $f$ using the binary selection mechanism from~\Cref{lemma:selection}.
    Finally, we pass the output through a softmax layer to generate the next token $s_{i+1}$.
    Similarly to~\Cref{proof:LCG}, 
    the parameters of the Transformer are polynomially bounded by $O(poly(n))$.
\end{proof}

\end{document}